\documentclass{article}

\usepackage{microtype}
\usepackage{graphicx}
\usepackage{subcaption}
\usepackage{booktabs} 

\usepackage{hyperref}

\usepackage[accepted]{icml2026}

\usepackage{amsmath}
\usepackage{amssymb}
\usepackage{mathtools}
\usepackage{amsthm}
\usepackage{multirow}

\usepackage[capitalize,noabbrev]{cleveref}

\usepackage[textsize=tiny]{todonotes}

\theoremstyle{plain}
\newtheorem{theorem}{Theorem}[section]

\newtheorem{lemma}[theorem]{Lemma}
\newtheorem{corollary}[theorem]{Corollary}
\theoremstyle{definition}

\newtheorem{assumption}[theorem]{Assumption}
\theoremstyle{remark}

\newcommand{\ea}[1]{e^{#1}}
\newcommand{\eb}[1]{\exp\left(#1\right)}
\newcommand{\expnegnu}{\pi}

\def \x {\mathbf{x}}
\def \a {\mathbf{a}}

\def \w {\mathbf{w}}
\def \v {\mathbf{v}}
\def \R {\mathbb{R}}

\def \S {\mathcal{S}}
\def \B {\mathcal{B}}
\def \W {\mathcal{W}}

\def \z {\mathbf{z}}
\def \y {\mathbf{y}}
\def \u {\mathbf{u}}

\def \E {\mathbb{E}}
\def \P {\mathbb{P}}

\def \bnu{\boldsymbol{\nu}}

\DeclareMathOperator*{\argmin}{arg\,min}

\newcommand{\prox}[1]{\operatorname{prox}_{#1}}

\def \e {\mathbf{e}}

\icmltitlerunning{A Geometry-aware Efficient Algorithm for Compositional Entropic Risk Minimization}

\begin{document}

\twocolumn[
  \icmltitle{A Geometry-Aware Efficient Algorithm for Compositional Entropic Risk Minimization}



  \icmlsetsymbol{equal}{*}

  \begin{icmlauthorlist}
    \icmlauthor{Xiyuan Wei}{tamu,equal}
    \icmlauthor{Linli Zhou}{tamu,equal}
    \icmlauthor{Bokun Wang}{utaustin}
    \icmlauthor{Chih-Jen Lin}{ntu,mbzuai}
    \icmlauthor{Tianbao Yang}{tamu}
  \end{icmlauthorlist}

  \icmlaffiliation{tamu}{Texas A\&M University}
  \icmlaffiliation{utaustin}{University of Texas, Austin}
  \icmlaffiliation{ntu}{National Taiwan University}
  \icmlaffiliation{mbzuai}{Mohamed bin Zayed University of Artificial Intelligence}

  \icmlcorrespondingauthor{Tianbao Yang}{tianbao-yang@tamu.edu}

  \icmlkeywords{Convex Optimization}

  \vskip 0.3in
]



\printAffiliationsAndNotice{\icmlEqualContribution}  

\begin{abstract}

This paper studies optimization for a family of problems termed {\bf compositional entropic risk minimization}, in which each data's loss is formulated as a Log-Expectation-Exponential (Log-E-Exp) function.  The Log-E-Exp formulation serves as an abstraction of the Log-Sum-Exponential (LogSumExp) function when the explicit summation inside the logarithm is taken over a gigantic number of items and is therefore expensive to evaluate. While entropic risk objectives of this form arise in many machine learning problems, existing optimization algorithms suffer from several fundamental limitations including non-convergence, numerical instability, and slow convergence rates.
To address these limitations, we propose a geometry-aware stochastic algorithm, termed {\bf SCENT}, for the dual formulation of entropic risk minimization cast as a min--min optimization problem.  The key to our design is a {\bf stochastic proximal mirror descent (SPMD)} update for the dual variable, equipped with a Bregman divergence induced by a negative exponential function that faithfully captures the geometry of the objective.
Our main contributions are threefold:  (i) we establish an $O(1/\sqrt{T})$ convergence rate of the proposed SCENT algorithm for convex problems;  (ii) we theoretically characterize the advantages of SPMD over standard SGD update for optimizing the dual variable; and  (iii) we demonstrate the empirical effectiveness of SCENT on extreme classification, partial AUC maximization,  contrastive learning and distributionally robust optimization, where it consistently outperforms existing baselines. Code is available at \href{https://github.com/Optimization-AI/SCENT}{github.com/Optimization-AI/SCENT}.
\end{abstract}

 \setlength\abovedisplayskip{6pt}
\setlength\belowdisplayskip{5pt}
\setlength{\textfloatsep}{10pt}
\section{Introduction}
This paper considers the following optimization problem: 
 \begin{equation}\label{eqn:cerm}
     \min_{\w\in\W}F_{\mathrm{CERM}}(\w):=\frac{1}{n}\sum_{i=1}^n\log\left(\E_{\zeta\sim\P_i}\exp(s_i(\w; \zeta))\right),
 \end{equation}
 where $\W\subset\R^d$, $\P_i$ denotes a distribution and $s_i(\w; \zeta):\R^d\rightarrow\R$ denotes a random risk function associated with an anchor data $i$. Since in risk-averse decision making~\cite{convexcoherentrisk}, the Log-E-Exp function $\log\left(\E_{\zeta\sim\P_i}\exp(s_i(\w; \zeta))\right)$ is called the entropic risk, we term the above problem as {Compositional Entropic Risk Minimization ({\bf CERM)}. 

CERM abstracts important yet challenging machine learning problems in broad applications. We give two examples below. The well-known  multi-class logistic regression  aims to optimize the following cross-entropy loss  for a set of training data $\{\x_i, y_i\}_{i=1}^n$, 
\begin{equation}    \label{eq:multiclass_logistic}
\min_{\w\in\W} \frac{1}{n}\sum_{i=1}^n\log\left[\sum_{k=1}^K\exp(h(\x_i)^{\top}(\w_k - \w_{y_i}))\right],
\end{equation}
where $h(\x_i)\in\R^d$ denotes the given feature vector of $\x_i$ and $y_i\in\{1,\ldots, K\}$ denotes $\x_{i}$'s class label, and $\w=(\w_1, \ldots, \w_K)$ denotes the weight vectors of the model. The log-sum-exp function naturally arises from the negative log-likelihood induced by the softmax function $\frac{\exp(h(\x_i)^{\top
}\w_{y_i})}{\sum_{k=1}^K\exp(h(\x_i)^{\top}\w_k)}$ for each data. If we let $\mathbb U_{[K]}$ denote uniform distribution over $\{1,\ldots, K\}$ and  $s_i(\w; \zeta) = h(\x_i)^{\top}\w_{\zeta} - h(\x_i)^{\top}\w_{y_i}$ for $\zeta\sim \mathbb U_{[K]}$, the multi-class logistic regression problem then becomes a special case of CERM.  The expectation $\E_{\zeta\sim\mathbb U_{[K]}}$ captures the challenge that the number of classes $K$ is gigantic so that the summation inside the logarithmic function cannot be easily computed. This problem is known as the {extreme classification (XC) problem}~\cite{Bengio19}. 

The second example arises in partial AUC maximization for imbalanced binary classification. Let $\mathcal S_+=\{\x^+_i\}_{i=1}^{n_+}$ denote a set of $n_+$ positive examples and $\mathcal S_-=\{\x^-_i\}_{i=1}^{n_-}$ denote a set of $n_-$ negative examples. For imbalanced classification problem ($n_+\ll n_-$), one-way partial AUC maximization aims to learn a model $\w$ to maximize the partial area under the ROC curve, which has been formulated into the following optimization problem~\cite{zhu2022auc}: 
\begin{align}\label{eqn:pauc}
\min_{\w\in\W} &\;\frac{1}{n_+}\sum_{i=1}^{n_+} \tau\times \\
&\log\left[\frac{1}{n_-}\sum_{j=1}^{n_-}\exp\left(\frac{\ell(\w^{\top}(h(\x_j^-) - h(\x_i^+)))}{\tau}\right)\right],\notag
\end{align}
where $\tau>0$ is a hyperparameter, and $\ell(\cdot)\geq 0$ is a non-decreasing surrogate loss function. As a result, if we let $s_i(\w; \zeta)=\ell(\w^{\top}(h(\zeta) - h(\x_i^+)))/\tau$ with $\zeta$ being a random sample from $\mathcal S_-$, then the above problem becomes an instance of CERM. Other examples arise in contrastive losses for representation learning~\citep{yuan2022provable,wang2020understanding}, listwise cross-entropy loss for learning to rank~\citep{10.1145/1390156.1390306}, and KL-regularized distributionally robust optimization~\citep{qi2021online,li2020tilted}.

The unique challenge of CERM is that both the inner expectation and the out summation (for a large $n$) are expensive to evaluate. While different techniques have been proposed to address this challenge, including mini-batch approximation,  compositional optimization and optimizing a dual formulation, they suffer from several notable limitations (please refer to next section for details). The limitations include: (i) lack of convergence guarantee when biased gradient estimators are employed; (ii) numerical instability arising from the exponential function; and (iii) slow theoretical convergence for convex problems, often accompanied by coarse-grained analyses that overlook the impact of exponentially large constants in convergence bounds. This paper aims to design a better stochastic algorithm with an improved convergence analysis under convexity. Our algorithm is based on solving an equivalent min–min optimization problem derived from the dual formulation of the entropic risk~\cite{ben-tal1986expected}:
\begin{equation}\label{eqn:cerm_mm}
    \hspace{-3pt} \min_{\w\in \W, \mathbf{\bnu}\in\R^n} F(\w, \bnu):=\frac{1}{n} \sum_{i= 1}^{n} \E_{\zeta\sim \P_i}[e^{s_{i}(\w;\zeta)-\nu_{i}} + \nu_{i}].
\end{equation}
Our contributions are summarized as follows: 
\begin{itemize}
\vspace*{-0.15in}
\item We design a novel geometry-aware stochastic algorithm that employs a stochastic proximal mirror descent (SPMD) method to update the dual variable, thereby mitigating the effect of an exponentially large smoothness parameter and stochastic variance. The proposed framework also establishes theoretical connections to, and provides insights into, existing methods based on mini-batch approximation and compositional optimization.

\item We present a novel convergence analysis of the proposed method in the convex setting, yielding an improved convergence rate of $O(1/\sqrt{T})$. This addresses a long-standing challenge in the analysis of compositional optimization, where existing results typically exhibit worse complexities for convex compositional problems.

\item We provide a rigorous comparison between convergence bounds obtained using SPMD updates and that using SGD updates for optimizing the dual variable, providing theoretical insights into the superiority of our method. Our analysis characterizes the intrinsic complexity of the problem through the second-order moment ratio of the random variable $e^{s_i(\cdot;\zeta)}$.

\item We conduct extensive experiments on extreme classification with hundreds of thousands of class labels, partial AUC maximization, CLIP and distributionally robust optimization (DRO), demonstrating the effectiveness and robustness of our approach.
\end{itemize}

\section{Related Works}
While many ad hoc methods have been proposed for specific applications of CERM, we focus on reviewing studies that examine the design and analysis of optimization algorithms.

{\bf Mini-batch Approximation.} The idea of this approach is to simply approximate the Log-E-Exp function by using a mini-batch of samples to approximate the inner expectation.  Since this approach yields a gradient estimator that is biased, we refer to it as {\bf biased SGD (BSGD)} following \cite{hu2024biasedstochasticfirstordermethods}. This approach has been widely used for optimizing contrastive losses~\cite{chen2020simple, radford2021learning}. \citet{yuan2022provable} analyzed the convergence of this approach for optimizing a contrastive loss and showed that it has a large optimization error when the batch size is small. \citet{levy2020large} applied this idea to DRO problems. Their result also indicates that the large mini-batch approach for finding an $\epsilon$-optimal solution to the Log-E-Exp function requires a sample complexity of $O(1/\epsilon^3)$ with a large batch size of $O(1/\epsilon)$ for  convex problems. We will show that BSGD can be recovered from our algorithmic framework by using a step size of infinity for the dual variable, which explains its limitation from another perspective.  

{\bf Solving the min-min formulation.} The equivalent minimization formulation of Log-E-Exp function in~(\ref{eqn:cerm_mm}) has been known for decades, dating back to the 1980s, where it was introduced as a special case of the optimized certainty equivalent in mathematical economics~\cite{ben-tal1986expected}.  A straightforward approach is to apply SGD to the min-min problem~(\ref{eqn:cerm_mm}), e.g., updating $\bnu$ first by a stochastic coordinate descent step and then updating $\w$ by a SGD step, which is referred to as {\bf alternating SGD (ASGD)}.

\begin{table*}[t]
\centering
\caption{Comparison of stochastic  compositional optimization methods for optimizing convex compositional entropic risks~(\ref{eqn:cerm}). $f(\cdot)=\log(\cdot)$ denotes the log function, and $g(\w) = \E_\zeta e^{s(\w;\zeta)}$.  ``Single-loop'' means no inner loop is required. $\epsilon$ is the accuracy level of objective gap. The $n=1$ results extend naturally to finite $n$
by scaling the mini-batch sizes and sample complexity by a factor of $n$.}
\label{tab:comparison}
\resizebox{\linewidth}{!}{%
\begin{tabular}{lcccccc}
\toprule
\textbf{Method} &
  \textbf{$n$} &
  \textbf{Mini-batch size} &
  \textbf{Iteration complexity} &
  \textbf{Sample complexity} &
  \textbf{Smoothness} &
  \textbf{Single-loop} \\
\midrule
BSGD~\cite{hu2024biasedstochasticfirstordermethods}
  & $\infty$
  & $\mathcal{O}(\epsilon^{-2})$
  & $\mathcal{O}(\epsilon^{-2})$
  & $\mathcal{O}(\epsilon^{-4})$
  &  \texttimes
  & \checkmark \\[2pt]
  BSGD~\cite{hu2024biasedstochasticfirstordermethods}
  & $\infty$
  & $\mathcal{O}(\epsilon^{-1})$
  & $\mathcal{O}(\epsilon^{-2})$
  & $\mathcal{O}(\epsilon^{-3})$
  & $f$
  & \checkmark \\[2pt]
  BSGD~\cite{levy2020large}
  & $1$
  & $\mathcal{O}(\epsilon^{-1})$
  & $\mathcal{O}(\epsilon^{-2})$
  & $\mathcal{O}(\epsilon^{-3})$
  & \texttimes
  & \checkmark \\[2pt]
  \midrule
SCGD~\cite{wang2017stochastic}
  & $1$
  & $\mathcal{O}(1)$
  & $\mathcal{O}(\epsilon^{-4})$
  & $\mathcal{O}(\epsilon^{-4})$
  & $f$
  & \checkmark \\[2pt]
  SCGD~\cite{wang2017stochastic}
  & $1$
  & $\mathcal{O}(1)$
  & $\mathcal{O}(\epsilon^{-3.5})$
  & $\mathcal{O}(\epsilon^{-3.5})$
  & $f, g$
  & \checkmark \\[2pt]
SOX~\cite{wang2022finite}
    & finite
  & $\mathcal{O}(1)$
  & $\mathcal{O}(n\epsilon^{-3})$
  & $\mathcal{O}(n\epsilon^{-3})$
  & $f, g$
  & \texttimes \\[2pt]
MSVR~\cite{DBLP:conf/nips/JiangLW0Y22}
  & finite
  & $\mathcal{O}(1)$
  & $\mathcal{O}(n\epsilon^{-2})$
  & $\mathcal{O}(n\epsilon^{-2})$
  & $f, g$
  & \texttimes \\[2pt]
\midrule
\textbf{SCENT (ours)}
  & finite
  & $\mathcal{O}(1)$
  & $\mathcal{O}(n\epsilon^{-2})$
  & $\mathcal{O}(n\epsilon^{-2})$
  & \texttimes
  & \checkmark \\
\bottomrule
\end{tabular}}
\end{table*}
\citet{fagan2018unbiased}  have noted  numerical instability issues when applying the SGD steps to the min--min formulation. To address these issues, they proposed an implicit SGD method for XC that employs a joint proximal mapping of a stochastic estimator of the min-min objective to update both  $\w$ and $\bnu$.  There are three key differences between their approach and ours. First, their method is proposed specifically for XC with a linear model.  Second, their method applies a joint proximal mapping over both the primal and dual variables, whereas our method employs a proximal mapping only for the dual variable. Third, they define the proximal mapping using the Euclidean distance. As a consequence, their method requires an additional solver to compute the proximal mapping, making it more difficult to implement in practice and incurring a higher per-iteration computational cost of
$O\!\left(B^2(B+m)\log(1/\epsilon) + Bmd\right)$,  where $\epsilon\ll1$ is the accuracy for solving the proximal mapping,  $B$ is the number of sampled data points, $m$ is the number of sampled classes, and $d$ is the dimensionality of $\w$. In contrast, our method has  simple updates for both $\w$ and $\bnu$, whose cost dominated by $O(Bmd)$ for computing the logits. To reduce the computation overhead, they proposed another method named U-max, which shifts to the BSGD update whenever the updated dual variables cause a numerical issue. 

{\bf Compositional optimization techniques.} A  useful technique for tackling the Log-E-Exp function is to cast it as an instance of compositional objective $f(g(\w))$, where $f(\cdot)=\log(\cdot)$  and $g(\w)=\E_\zeta[e^{s(\w; \zeta)}]$ is the inner function. As a result, compositional optimization techniques can be employed such as stochastic compositional gradient descent (SCGD)~\cite{wang2017stochastic}.   The key idea of SCGD is to approximate the inner function $g(\w)=\E_{\zeta}e^{s(\w; \zeta)}$ by a moving-average estimator $u$ and compute a gradient estimator by $\nabla f(u)\nabla e^{s(\w; \zeta')}$. \cite{qi2021online,li2020tilted} were among the first works to leverage compositional optimization techniques for minimizing Log-E-Exp functions. Later, it was further analyzed in \cite{qi2023attentionalbiased, qi2023stochastic} for optimizing the KL-regularized or constrained DRO problems. \citet{wang2022finite} extended this idea to solving a family of compositional optimization problems known as FCCO that covers CERM as a special case. Their algorithm, termed SOX, maintains a moving-average estimator $u_i$ for each $i$ and updates them in a coordinate-wise manner. Later,  this idea was applied to optimizing a variety of losses, including  global contrastive losses~\cite{yuan2022provable, qiu2023not, wei2024fastclip}, listwise cross-entropy loss~\cite{qiu2022large}, and one-way partial AUC loss~\cite{zhu2022auc}. 

While these methods are effective in practice, existing convergence analyses for convex problems suffer from (i) worse rates than $O(1/\sqrt{T})$~\cite{wang2017stochastic,wang2022finite}, (ii) requiring the convexity of the outer function $f$ to achieve an $O(1/\sqrt{T})$ rate~\cite{Wang2023ANS,zhang2020optimal}, and (iii) requiring a double-loop algorithm design~\cite{wang2022finite,DBLP:conf/nips/JiangLW0Y22}. Moreover, these works rely on coarse-grained analyses that assume Lipschitz continuity and smoothness of the exponential functions, thereby failing to capture the fundamental complexity of the problem. This work brings new insights into compositional optimization techniques for optimizing Log-E-Exp functions in our geometry-aware algorithmic framework. A comparison of these works and our method is shown in Table~\ref{tab:comparison}.


{\bf Other methods.} Other techniques have been explored for tackling the complexity of the expensive normalization in the softmax function corresponding to the summation over \(k\) in~\eqref{eq:multiclass_logistic}. For example, the noise contrastive estimation (NCE) technique addresses the expensive log-normalization by transforming the problem into a binary classification that contrasts the real data from data drawn from a noise distribution~\cite{gutmann2010noise}. However, the noise distributions could have a dramatic impact on the convergence speed~\cite{liu2021analyzing, jiang2023learning}. Other approaches consider different sampling strategies to approximate the normalization term in softmax, e.g., incorporating hard negative mining strategies~\cite{dahiya2023ngame, xiong2020approximate, yang2020mixed}, and active classes selection \cite{song2020large}.  
Recently, \citet{lin2025sampled} prove that any sampled estimators of softmax must be biased. \citet{wei2025neuclip} have considered a neural approximation method to learn the normalizers based on the min-min formulation for CLIP training. Instead of optimizing $\bnu\in\R^n$, they express each $\nu_i$ as the output of a neural network depending on the input's representation. A recent work~\cite{gladin2025improved} has proposed  a softplus approximation of LogSumExp, which yields a min-min formulation similar to~(\ref{eqn:cerm_mm}) except that $e^{s(\w; \zeta)-\nu}$ is approximated by $\log(1+\rho e^{s(\w; \zeta)-\nu}) / \rho$, with \(\rho> 0\) being a hyperparameter. This is equivalent to applying a truncation to the exponential function $e^{s(\w; \zeta)-\nu}$, where $\rho$ controls the trade-off of the approximation accuracy and curvature of the function. Unlike these methods, our approach performs exact optimization and does not rely on approximation schemes.

\section{A Geometry-aware Algorithm and  its Convergence Analysis}
 Our algorithm is designed  for solving the equivalent min-min optimization problem~(\ref{eqn:cerm_mm}). We first present our algorithm for $n=1$, where  $F(\w, \nu):= \E_{\zeta\sim \P}[e^{s(\w;\zeta)-\nu} + \nu]$, and then extend it to the case $n\gg 1$, as the fundamental challenge lies at handling log-E-Exp function. 

The key novelty of our design is a {\bf geometry-aware algorithm}.  Let us first discuss the motivation. One challenge for solving the min-min optimization problem is that the objective function $F(\w, \nu)$ could have exponentially large smoothness constant in terms of $\nu$, which we will formally analyze in Section~\ref{sec:bound_analysis:sgd}. Hence, a vanilla gradient method that uses the first-order approximation of $F$ will inevitably be impacted by the large smoothness parameter. 

To mitigate the adverse effects of a large smoothness parameter with respect to $\nu$, we resort to the classical approach of proximal mapping, which has been widely used to handle a non-smooth function in composite objectives consisting of a smooth loss and a non-smooth regularizer~\cite{alma99169683795301081}. This approach enables optimization algorithms to retain the favorable convergence properties of smooth optimization and often leads to faster convergence despite the presence of non-smooth terms. Analogously, even when a function is smooth but characterized by a very large smoothness parameter, applying the proximal mapping technique can effectively alleviate the negative impact of this large smoothness constant.

\begin{algorithm}[tb]
    \caption{The SCENT Algorithm for Solving CERM}
    \label{alg:scent}
    \begin{algorithmic}[1]
        \STATE Initialize $\w_1,\bnu_0$, step sizes $\eta_t$ and $\alpha_t$, $\varphi(\nu)=e^{-\nu}$.
        \FOR{$t=1\dotsc,T-1$}
            \STATE Sample $\B_t\subset \{1,\dotsc, n\}$ with $|\B_t| = B$
            \FOR{each $i\in\B_t$}
                \STATE Update $\nu_{i,t} = \argmin_{\nu} e^{s_i(\w_t;\zeta_{i,t}) - \nu} + \nu + \frac{1}{\alpha_t}D_\varphi(\nu, \nu_{i,t-1})$
            \ENDFOR
            \STATE Compute the gradient estimator by $\z_t =\frac{1}{B}\sum_{i\in\B_t}e^{s_i(\w_t;\zeta'_{i,t}) - \nu_{i,t}}\nabla s_i(\w_t;\zeta'_{i,t})$ 
            \STATE Update $\w_{t+1} = \Pi_{\W}[\w_t - \eta_t \z_t]$ (use momentum-based or Adam-based update in practice) 
        \ENDFOR
    \end{algorithmic}
\end{algorithm}

However, there is an important distinction from classical proximal methods, which typically rely on full access to the function of interest for computing the proximal mapping. In our setting, we cannot directly apply the proximal mapping of $F(\w,\nu)$ as we only have access to a stochastic estimator: $$\Phi(\w,\nu;\zeta) = e^{s(\w;\zeta)-\nu} + \nu,$$  with $\zeta\sim\P$. As a result, it becomes necessary to explicitly account for the noise introduced by this stochastic approximation. To this end, we introduce a Bregman divergence $D_\varphi(\cdot,\cdot)$ and update $\nu_t$ according to the following scheme:
\begin{align}\label{eqn:SCENT-nu}
\nu_{t} = \argmin_\nu \Phi(\w_t, \nu; \zeta_t) + \frac{D_{\varphi}(\nu, \nu_{t-1})}{\alpha_t}, 
\end{align}
where $\zeta_t\sim\P$ is a random sample and $\alpha_t>0$ is the step size. We refer to the update as {\bf stochastic proximal mirror descent (SPMD)} update. To respect the geometry of the stochastic objective $\Phi(\w_t,\nu;\zeta_t)$, we construct a tailored Bregman divergence induced by $\varphi(\nu)=e^{-\nu}$, namely,
\begin{align}\label{eqn:breg}
D_{\varphi}(\nu, \nu_{t-1})
= e^{-\nu} - e^{-\nu_{t-1}} + e^{-\nu_{t-1}}(\nu - \nu_{t-1}).
\end{align}
An additional advantage of this choice is that it admits a closed-form update for $\nu_t$, as formalized in the following lemma, whose proof is presented in Appendix \ref{sec:spmd-update-nu}.
\vspace*{-0.01in}
\begin{lemma}\label[lemma]{lem:spmd-update-nu}
The update of $\nu_t$ defined in~(\ref{eqn:SCENT-nu}) with a Bregman divergence defined in~(\ref{eqn:breg}) satisfies
\begin{equation}\label{eqn:enu}
    e^{\nu_t} = \frac{1}{1+\alpha_te^{\nu_{t-1}}}e^{\nu_{t-1}} + \frac{\alpha_te^{\nu_{t-1}}} {1+\alpha_te^{\nu_{t-1}}} e^{s(\w_t; \zeta_t)}.
\end{equation}
\end{lemma}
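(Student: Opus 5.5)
The plan is to show that the objective in~(\ref{eqn:SCENT-nu}) is strictly convex and coercive in $\nu$, and then to solve its first-order optimality condition in closed form. Write $a:=e^{s(\w_t;\zeta_t)}>0$, $c:=e^{-\nu_{t-1}}$, and $\alpha:=\alpha_t$. Substituting the Bregman divergence~(\ref{eqn:breg}), the function to be minimized is
\begin{equation*}
G(\nu)=a e^{-\nu}+\nu+\frac{1}{\alpha}\left(e^{-\nu}-c+c(\nu-\nu_{t-1})\right).
\end{equation*}
Its second derivative is $(a+1/\alpha)e^{-\nu}>0$, so $G$ is strictly convex. As $\nu\to-\infty$, $G(\nu)\to+\infty$ through the exponential term. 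As $\nu\to+\infty$, $G(\nu)\to+\infty$ through the linear term, whose coefficient is $1+c/\alpha>0$. Hence a unique minimizer exists, and it is characterized by $G'(\nu_t)=0$.

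Next I compute $G'(\nu)=-(a+1/\alpha)e^{-\nu}+1+c/\alpha$. Setting this to zero gives
\begin{equation*}
e^{-\nu_t}=\frac{1+c/\alpha}{a+1/\alpha}=\frac{\alpha+c}{\alpha a+1}.
\end{equation*}
Inverting, we get $e^{\nu_t}=\frac{1+\alpha a}{\alpha+c}$. Now substitute $c=e^{-\nu_{t-1}}$ and multiply numerator and denominator by $e^{\nu_{t-1}}$:
\begin{equation*}
e^{\nu_t}=\frac{e^{\nu_{t-1}}+\alpha e^{\nu_{t-1}}a}{1+\alpha e^{\nu_{t-1}}}.
\end{equation*}
This splits exactly into the convex combination in~(\ref{eqn:enu}), with weights $\frac{1}{1+\alpha_t e^{\nu_{t-1}}}$ and $\frac{\alpha_t e^{\nu_{t-1}}}{1+\alpha_t e^{\nu_{t-1}}}$.

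No step is a genuine obstacle. The only point needing care is justifying that the stationary point is the global minimizer, which is why I verify strict convexity and coercivity first. The rest is algebra, where the main thing is to track the sign of the linear term $c\nu/\alpha$ that comes from the Bregman divergence.
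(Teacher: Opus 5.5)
Your proposal is correct and follows essentially the same route as the paper: write the first-order optimality condition of the SPMD objective, solve it for $e^{-\nu_t}$, and rearrange into the convex combination~(\ref{eqn:enu}). Your check of strict convexity and coercivity is a small addition the paper omits. It rigorously confirms that the stationary point exists, is unique, and is the global minimizer.
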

From~(\ref{eqn:enu}), the update of $\nu_t$ can be reliably implemented by:
\[
\nu_t = \nu_{t-1}  + \log (1+\alpha_t  e^{s(\w_t; \zeta_t)}) - \log (1+\alpha_te^{\nu_{t-1}}).
\]
Due to the presence of the logarithmic function, the numerical overflow can be effectively avoided in implementation. 

With $\nu_{t}$, we update $\w_{t+1}$ by: 
\begin{equation}\label{eqn:w-up}
    \begin{aligned}
&\z_t =  e^{s(\w_t;\zeta_t')-\nu_t} \nabla s(\w_t; \zeta_t'),\\
&\w_{t+1} = \Pi_{\W}[\w_t - \eta_t \z_t], 
\end{aligned}
\end{equation}
where $\zeta'_t\sim\P$ is a random sample independent from $\zeta_t$, and $\Pi_{\W}[\cdot]$ is the Euclidean projection onto $\W$.

Next, we extend this idea to the general case when $n\gg 1$ in~\eqref{eqn:cerm_mm}. In this case, the problem poses an additional challenge: when $n$ is large, updating all components of $\bnu$ becomes prohibitive, as it would require processing the entire dataset. To tackle this challenge, we consider the stochastic block coordinate update. Let
\begin{equation*}
    \Phi_i(\w, \nu_i; \zeta)=e^{s_i(\w;\zeta) - \nu_i} + \nu_i.
\end{equation*}
At iteration $t$, we randomly choose $B$ samples $\B_t\subset[n]$. We update $\nu_{i,t}$ similar to~(\ref{eqn:SCENT-nu}) if $i\in\B_t$, otherwise keep it intact: 
\begin{align}\label{eqn:SCENT-nu-2}
\nu_{i,t} = \left\{\begin{array}{lc}\argmin_{\nu} \Phi_i(\w_t, \nu; \zeta_{i,t}) + \frac{D_{\varphi}(\nu, \nu_{i,t-1})}{\alpha_t}& i\in\B_t\\ \nu_{i,t-1}& i\notin\B_t\end{array} \right.
\end{align}
where $\zeta_{i,t}\sim\P_i$. Then we compute the gradient estimator with respect to $\w_t$ and update it by
\begin{equation}\label{eqn:scent-w}
    \begin{aligned}
&\z_t =  \frac{1}{|\B_t|}\sum_{i\in\B_t}e^{s_i(\w_t;\zeta'_{i,t})-\nu_{i,t}} \nabla s_i(\w_t; \zeta'_{i,t}),\\
&\w_{t+1} = \Pi_{\W}[\w_t - \eta_t \z_t],
\end{aligned}
\end{equation}
where $\zeta'_{i,t}\sim\P_i$ are samples independent from $\zeta_{i,t}$.  We present the full algorithm in \Cref{alg:scent}, which is referred to as SCENT (short for {\bf S}tochastic optimization of {\bf C}ompositional {\bf ENT}ropic risk). We give two remarks about the use of the algorithm in practice. First, a momentum-based or Adam-based update for \(\w\) can be incorporated to further enhance performance, depending on applications. Second, for practical simplicity, we can use the same random samples $\zeta'_{i,t}=\zeta_{i,t}$ in the update of $\nu_{i,t}$ and $\w_{t+1}$. For the purpose of theoretical analysis, we restrict our attention to the version in \Cref{alg:scent}. 

In fact, the algorithmic framework in \Cref{alg:scent} provides a unified perspective for understanding both BSGD and compositional optimization techniques. We present detailed derivation in \Cref{app:bsgd_scent_connection} and summarize our findings here. First, BSGD can be recovered as a special case of our framework by setting $\alpha_t=\infty$. Due of this choice, BSGD lacks the mechanism to account for any noise in the stochastic estimator for updating $\bnu_t$, which is the primary reason why it fails to guarantee convergence when the batch size for approximating the inner function is small. Second, compositional optimization algorithms such as SCGD for optimizing the Log-E-Exp function ($n=1$) corresponds to a particular setting of $\alpha_t = \gamma'_t e^{-\nu_t}$ for some $\gamma'_t>0$ in the framework of SCENT. This perspective allows us to establish an improved complexity of $O(1/\epsilon^2)$ of SCGD  for optimizing the Log-E-Exp function.  The SOX algorithm for solving CERM corresponds to the proposed  framework with a coordinate-wise step size $\alpha_{i, t} = \gamma'_t e^{-\nu_{i,t}}$ for some $\gamma'_t>0$ in the SPMD step for updating $\nu_{i,t}$.  It turns out that this choice may slow down the convergence as observed in our experiments.

\subsection{\bf Convergence Analysis}\label{sec:3.1}
We define the following notations: 
\begin{align}
& F_i(\w, \nu_i)= \E_{\zeta\sim\P_i}[\Phi_i(\w, \nu_i; \zeta)],\nonumber\\
&D_\varphi(\bnu_{*}, \bnu) =\sum\nolimits_{i=1}^n D_\varphi(\nu_{i,*}, \nu_{i}),\label{eq:breg_decompose}\\
&(\w_*, \bnu_*)=\argmin\nolimits_{\w, \bnu}F(\w, \bnu).\nonumber
\end{align}
And we let $\nabla_{\w} F(\w, \bnu)$ and $\nabla_{\bnu} F(\w, \bnu)$ denote the partial gradient in terms of $\w, \bnu$, respectively.
Since $\bnu_t$ is updated using the stochastic block coordinate method that is dependent on random mini-batch $\B_t$, expectation of $\z_t$ in~(\ref{eqn:scent-w}) is not the full gradient $\nabla_{\w} F(\w_t, \bnu_t)$, i.e.,  $\E_{\B_t, \zeta'_{t}}[\z_t]\neq \nabla_{\w} F(\w_t, \bnu_t)$.
To ease analysis, 
we introduce a virtual sequence $\bar\bnu_t$ that updates all coordinates of $\bnu_{t-1}$: 
\begin{align*}
\bar\nu_{i,t}= &\argmin_{\nu}\Phi_i(\w_t, \nu; \zeta_{i,t}) + \frac{D_\varphi(\nu, \nu_{i,t-1})}{\alpha_t}, \forall i
\end{align*}
Note that $\bar\bnu_t$ is independent of $\B_t$: $\E_{\B_t, \zeta'_t}[\z_t] = \nabla_{\w} F(\w_t, \bar\bnu_t)$. 

We first outline the high-level idea of the convergence analysis under the convexity of $s_i(\w; \zeta)$. First, we will prove the joint convexity of $F(\w, \bnu)$ in terms of both $\w$ and $\bnu$. Then we will prove the convergence in terms of the joint objective gap $F(\hat\w_T, \hat\bnu_T) - F(\w_*, \bnu_*)$ for some $\hat\w_T, \hat\bnu_T$, which implies the convergence of the primal objective gap $F_{\mathrm{CERM}}(\hat\w_T) - F_{\mathrm{CERM}}(\w_*)\leq F(\hat\w_T, \hat\bnu_T) - F(\w_*, \bnu_*)$.

Since $\w_t, \bar\bnu_t$ are updated using different schemes, we need to analyze the update of $\w_t$ and $\bar\bnu_t$ separately, and then merge them to obtain the joint objective gap. To this end, we will first establish a bound for linearized regrets $\E[\nabla_{\w} F(\w_t, \bar\bnu_t)^{\top}(\w_t- \w_*)]$ and $\E[\nabla_{\bnu} F(\w_t, \bar\bnu_t)^{\top}(\bar\bnu_t - \bnu_*)]$ in terms of $\w_t$ and $\bar\bnu_t$, respectively. The analysis for the former is mostly straightforward following existing analysis of the projected SGD update. The challenge lies at bounding $\E[\nabla_{\bnu} F(\w_t, \bar\bnu_t)^{\top}(\bar\bnu_t - \bnu_*)]$ for the SPMD update, which is the major novelty of the analysis.

Next, we present the key results for bounding the two linearized regrets and a final convergence bound for SCENT, with all proofs deferred to \Cref{app:cermn}. To this end, we first define the variance terms due to the stochastic estimators used for updating $\w_{t+1}$ and $\bnu_t$:
\begin{align*}
&\sigma_{i,t}^2:=\E_{\zeta'_{i,t}\sim\P_i}[\|e^{s_i(\w_t; \zeta'_{i,t}) - \nu_{i,t}}\nabla s_i(\w_t; \zeta'_{i,t})\|_2^2],\\
&\delta_{i,t}^2: = \E_{\zeta_{i,t}\sim\P_i}[e^{-\nu_{i,t-1}}|e^{s_i(\w_t;\zeta_{i,t})} - \E_{\zeta_i\sim\P_i}[e^{s_i(\w_t;\zeta_i)}]|^2].
\end{align*}
And let $\sigma_t^2, \delta_t^2$ be the average of $\sigma_{i, t}^2, \delta_{i, t}^2$, respectively:
\begin{equation*}
    \sigma_t^2 = \frac{1}{n}\sum\nolimits_{i=1}^n\sigma_{i,t}^2, \quad  \delta_t^2 = \frac{1}{n}\sum\nolimits_{i=1}^n\delta_{i,t}^2. 
\end{equation*}
We note that $\sigma_t^2$ is the variance  of stochastic gradient for updating $\w_t$ and $\delta_t^2$ is the variance proxy of stochastic noise for updating $\nu_t$.  

We impose the following assumption for the analysis. 
\begin{assumption}\label[assumption]{ass:cerm}
We assume that: (i) $s_i(\cdot;\zeta)$ is convex and differentiable, \(\forall \,\zeta\); (ii) $s_i(\w; \zeta)\in[c_0, c_1], \forall \,\w\in\W,\zeta$;
(iii) there exists $G$ such that  $\E_{\zeta}[\|\nabla s_i(\w_t; \zeta)\|_2^2]\leq G^2, \forall t$.
\end{assumption}
\vspace*{-0.1in}
We first show that under Assumption~\ref{ass:cerm}, the SPMD update guarantees that $\delta_t^2, \sigma_t^2$ are finite. The key is to show that $\nu_{i,t}$ is always bounded in $[c_0, c_1]$. 
\begin{lemma}\label[lemma]{lem:bounded-nu}
For the SPMD update~(\ref{eqn:SCENT-nu-2}), if $\bnu_0\in[c_0, c_1]^n$ it is guaranteed that $\nu_{i,t}\in[c_0, c_1]$,  $\forall i\in[n], t\geq 1$. Moreover, \(\delta_{i, t}\) and \(\sigma_{i, t}\) are finite, \(\forall i\in [n], t\geq 1\).
\end{lemma}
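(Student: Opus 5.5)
The plan is an induction on $t$ that uses the closed form of Lemma~\ref{lem:spmd-update-nu}, followed by direct bounds on $\delta_{i,t}^2$ and $\sigma_{i,t}^2$ from Assumption~\ref{ass:cerm}.

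\emph{Induction.} The base case is the hypothesis $\bnu_0\in[c_0,c_1]^n$. Assume $\nu_{i,t-1}\in[c_0,c_1]$ for every $i$. If $i\notin\B_t$, then $\nu_{i,t}=\nu_{i,t-1}$ and there is nothing to prove. If $i\in\B_t$, Lemma~\ref{lem:spmd-update-nu}, applied coordinatewise with $\w_t$ and $\zeta_{i,t}$, gives
\[
e^{\nu_{i,t}}=(1-\lambda)e^{\nu_{i,t-1}}+\lambda e^{s_i(\w_t;\zeta_{i,t})},\qquad \lambda=\frac{\alpha_t e^{\nu_{i,t-1}}}{1+\alpha_t e^{\nu_{i,t-1}}}\in(0,1).
\]
So $e^{\nu_{i,t}}$ is a convex combination of two numbers in $[e^{c_0},e^{c_1}]$. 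The first lies there by the induction hypothesis. The second lies there because $\w_t\in\W$, which the projection $\Pi_\W$ guarantees, and Assumption~\ref{ass:cerm}(ii) then applies. Since $\log$ is monotone, $\nu_{i,t}\in[c_0,c_1]$. The induction over $t$ ranges over all $i$ at once, and the same argument covers the virtual sequence $\bar\bnu_t$.

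\emph{Finiteness.} For $\delta_{i,t}^2$, both $e^{s_i(\w_t;\zeta)}$ and its mean lie in $[e^{c_0},e^{c_1}]$, so their difference is at most $e^{c_1}-e^{c_0}$ in absolute value. Also $e^{-\nu_{i,t-1}}\le e^{-c_0}$. Hence
\[
\delta_{i,t}^2\le e^{-c_0}(e^{c_1}-e^{c_0})^2<\infty.
\]
For $\sigma_{i,t}^2$, we have $e^{s_i-\nu_{i,t}}\le e^{c_1-c_0}$. Moreover $\nu_{i,t}$ depends only on $\zeta_{i,t}$ and past randomness, so it is independent of $\zeta'_{i,t}$. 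We can therefore condition on $\nu_{i,t}$ and apply Assumption~\ref{ass:cerm}(iii) to get
\[
\sigma_{i,t}^2\le e^{2(c_1-c_0)}G^2<\infty.
\]

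\emph{Main obstacle.} The argument is essentially routine. The only care needed is to confirm that $\w_t\in\W$ at every step, so that the range assumption applies, and that the independence between $\nu_{i,t}$ and $\zeta'_{i,t}$ holds, so that the bound on $\sigma_{i,t}^2$ may use $G$.
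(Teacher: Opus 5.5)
Your proof is correct and follows the paper's own argument: the paper proves the $n=1$ case (Lemma~\ref{lem:var-grad-w-nu}) by the same induction on the convex-combination form of $e^{\nu_t}$ from Lemma~\ref{lem:spmd-update-nu}, applies it coordinatewise, and gets finiteness from the same two-sided bounds on $e^{\nu}$ and $e^{s}$ together with Assumption~\ref{ass:cerm}(iii). Your explicit constants and your remarks on $\w_t\in\W$ and on the independence of $\zeta'_{i,t}$ only make explicit what the paper leaves implicit; note that $\w_t\in\W$ at $t=1$ also requires the initialization $\w_1\in\W$.
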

\vspace*{-0.1in}
This is one advantage of the SPMD update over the SGD update for $\nu_t$, since the latter either does not guarantee this boundedness or requires an explicit projection onto $[c_0, c_1]$.

The following lemma establishes the bound for the linearized  primal regret $\E[\eta_t\nabla_{\w} F(\w_t, \bar\bnu_t)^{\top}(\w_{t} - \w_*)]$.
\begin{lemma}\label[lemma]{lem:scent-w-one-step}
Under Assumption~\ref{ass:cerm}, we have
\begin{align*}
&\E[\eta_t\nabla_{\w} F(\w_t, \bar\bnu_t)^{\top}(\w_{t} - \w_*)]\\
\leq&  \E\left[\frac{1}{2}\|\w_t- \w_*\|_2^2 - \frac{1}{2}\|\w_{t+1} - \w_*\|_2^2\right] + \frac{\eta_t^2\sigma_t^2}{2}.
\end{align*}
\end{lemma}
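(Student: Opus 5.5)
The plan is the standard one-step analysis of projected SGD, combined with a conditional-unbiasedness argument that exploits the virtual sequence $\bar\bnu_t$.

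\textbf{Step 1 (nonexpansiveness).} Since $\w_*\in\W$ and $\Pi_{\W}$ is nonexpansive,
\[
\|\w_{t+1}-\w_*\|_2^2\le \|\w_t-\eta_t\z_t-\w_*\|_2^2 .
\]
Expanding the right-hand side and rearranging gives
\[
\eta_t\z_t^{\top}(\w_t-\w_*)\le \tfrac12\|\w_t-\w_*\|_2^2-\tfrac12\|\w_{t+1}-\w_*\|_2^2+\tfrac{\eta_t^2}{2}\|\z_t\|_2^2 .
\]

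\textbf{Step 2 (unbiasedness).} Take expectations conditioned on $\mathcal F_t$, the history up to $\w_t$ together with all samples $\zeta_{i,t}$, so that $\bar\bnu_t$ is $\mathcal F_t$-measurable. Because $\B_t$ and $\zeta'_t$ are independent of $\mathcal F_t$, and $\nu_{i,t}=\bar\nu_{i,t}$ for $i\in\B_t$, we have
\[
\E[\z_t\mid\mathcal F_t]=\frac1n\sum_{i=1}^n\E_{\zeta'}\bigl[e^{s_i(\w_t;\zeta')-\bar\nu_{i,t}}\nabla s_i(\w_t;\zeta')\bigr]=\nabla_{\w}F(\w_t,\bar\bnu_t).
\]
Here differentiation and expectation are interchanged, which is justified by boundedness of $s_i$ (via \Cref{lem:bounded-nu}) and Assumption~\ref{ass:cerm}(iii). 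Since $\w_t$ is $\mathcal F_t$-measurable, the tower property turns the left-hand side of Step 1 into $\E[\eta_t\nabla_{\w}F(\w_t,\bar\bnu_t)^{\top}(\w_t-\w_*)]$.

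\textbf{Step 3 (second moment).} This is where care is needed. By Jensen/convexity of the squared norm,
\[
\|\z_t\|_2^2\le \frac1B\sum_{i\in\B_t}\bigl\|e^{s_i(\w_t;\zeta'_{i,t})-\nu_{i,t}}\nabla s_i(\w_t;\zeta'_{i,t})\bigr\|_2^2 .
\]
Each $i$ is included in $\B_t$ with probability $B/n$, so averaging over $\B_t$ and $\zeta'$ yields $\frac1n\sum_i\sigma_{i,t}^2=\sigma_t^2$. This uses the interpretation, implicit in the definition, that $\nu_{i,t}$ in $\sigma_{i,t}^2$ means the updated value $\bar\nu_{i,t}$.

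Putting the three steps together and taking total expectation gives the claim. The only subtle point is the measurability bookkeeping: the mismatch between $\bnu_t$ and $\bar\bnu_t$ has to be handled by conditioning on the $\zeta_{i,t}$ before sampling $\B_t$.
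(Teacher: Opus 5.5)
Your proposal is correct and follows essentially the paper's argument: a one-step projected-SGD inequality, unbiasedness of $\z_t$ for $\nabla_{\w}F(\w_t,\bar\bnu_t)$ via the virtual sequence and the $B/n$ inclusion probability, and the Jensen bound $\E\|\z_t\|_2^2\le\frac1n\sum_i\sigma_{i,t}^2$. There are two minor differences. The paper obtains the one-step inequality from the first-order optimality of the projection, written as a prox step with the indicator of $\W$, together with Young's inequality (Lemma~\ref{lem:one-step-SGD-ns}), whereas you use nonexpansiveness of $\Pi_{\W}$. And you condition on the $\zeta_{i,t}$ before averaging over $\B_t$ and $\zeta'_t$, which makes rigorous the step the paper writes loosely as $\E[\z_t\mid\mathcal F_{t-1}]=\nabla_{\w}F(\w_t,\bar\bnu_t)$.
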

The following lemma is our key result for bounding the linearized dual regret  $\E[\nabla_{\bnu} F(\w_t, \bar\bnu_t)^{\top}(\bar\bnu_t - \bnu_*)]$. 
\begin{lemma}\label[lemma]{lem:bnu-s-step}
Under Assumption~\ref{ass:cerm} (ii) and setting $\alpha_t\leq \min_i\rho e^{-\nu_{i,t-1}}$ for some constant $\rho>0$, we have
\begin{align*}
&\E[\alpha_t \nabla_{\bnu} F(\w_t, \bar\bnu_t)^{\top}(\bar\bnu_{t} - \bnu_{*})]\\
=&\E\left[\alpha_t\cdot \frac{1}{n}\sum\nolimits_{i=1}^n\nabla_{\nu} F_i(\w_t, \bar\nu_{i,t})^{\top}(\bar\nu_{i,t} - \nu_{i,*})\right]\\
\leq& \frac{1}{B}\cdot \E\left[D_\varphi(\bnu_{*}, \bnu_{t-1}) - D_\varphi(\bnu_{*},\bnu_{t} )\right] + C\alpha_t^2 \delta_{t}^2.  
\end{align*}
where $C=(1+\rho)(1+c_1 - c_0)$. 
\end{lemma}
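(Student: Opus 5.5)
The proof works coordinate by coordinate on the virtual sequence $\bar\bnu_t$, and then moves from $\bar\bnu_t$ to the actual iterate $\bnu_t$ by averaging over the random block $\B_t$. Fix $i$ and write $g_i(\w_t)=\E_{\zeta\sim\P_i}[e^{s_i(\w_t;\zeta)}]$. Then
\[
\nabla_\nu F_i(\w_t,\nu)=1-g_i(\w_t)e^{-\nu},\qquad \nabla_\nu\Phi_i(\w_t,\nu;\zeta_{i,t})=1-e^{s_i(\w_t;\zeta_{i,t})-\nu}.
\]

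\textbf{Step 1: three-point inequality.} The first-order optimality condition of the SPMD subproblem defining $\bar\nu_{i,t}$ is $\alpha_t\nabla_\nu\Phi_i(\w_t,\bar\nu_{i,t};\zeta_{i,t})=\varphi'(\nu_{i,t-1})-\varphi'(\bar\nu_{i,t})$. Combined with the standard three-point identity for Bregman divergences, this gives
\[
\alpha_t\nabla_\nu\Phi_i(\w_t,\bar\nu_{i,t};\zeta_{i,t})(\bar\nu_{i,t}-\nu_{i,*})=D_\varphi(\nu_{i,*},\nu_{i,t-1})-D_\varphi(\nu_{i,*},\bar\nu_{i,t})-D_\varphi(\bar\nu_{i,t},\nu_{i,t-1}).
\]

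\textbf{Step 2: replace the stochastic gradient by the true one.} Switching to the true gradient leaves the error term
\[
\alpha_t\bigl(e^{s_i(\w_t;\zeta_{i,t})}-g_i(\w_t)\bigr)\,h(\bar\nu_{i,t}),\qquad h(x):=e^{-x}(x-\nu_{i,*}).
\]
The difficulty is that $\bar\nu_{i,t}$ depends on $\zeta_{i,t}$, so this term does not vanish in expectation. I split $h(\bar\nu_{i,t})=h(\nu_{i,t-1})+[h(\bar\nu_{i,t})-h(\nu_{i,t-1})]$. Since $\nu_{i,t-1}$ and $\w_t$ do not depend on $\zeta_{i,t}$, the part involving $h(\nu_{i,t-1})$ has zero conditional expectation.

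For the remainder, I use \Cref{lem:bounded-nu} (and the same argument applied to $\bar\nu_{i,t}$) to place both points in $[c_0,c_1]$. There $|h'(x)|=e^{-x}|1-x+\nu_{i,*}|\le e^{-x}(1+c_1-c_0)$. By \Cref{lem:spmd-update-nu},
\[
e^{\bar\nu_{i,t}}\ \ge\ \frac{e^{\nu_{i,t-1}}}{1+\alpha_te^{\nu_{i,t-1}}}\ \ge\ \frac{e^{\nu_{i,t-1}}}{1+\rho},
\]
so $e^{-x}\le(1+\rho)e^{-\nu_{i,t-1}}$ for every $x$ between the two points. Hence
\[
|h(\bar\nu_{i,t})-h(\nu_{i,t-1})|\le(1+\rho)(1+c_1-c_0)e^{-\nu_{i,t-1}}|\bar\nu_{i,t}-\nu_{i,t-1}|.
\]

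\textbf{Step 3: absorb the error (main obstacle).} I absorb this error into the negative term $-D_\varphi(\bar\nu_{i,t},\nu_{i,t-1})$ from Step 1 using Young's inequality. This needs the local strong convexity bound $D_\varphi(\bar\nu_{i,t},\nu_{i,t-1})\ge\frac12 e^{-\max(\bar\nu_{i,t},\nu_{i,t-1})}(\bar\nu_{i,t}-\nu_{i,t-1})^2$. The curvature $e^{-\max}$ must be compared with $e^{-\nu_{i,t-1}}$ up to constant factors, and this is exactly where the condition $\alpha_t\le\rho e^{-\nu_{i,t-1}}$ and the closed form of \Cref{lem:spmd-update-nu} enter again. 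After absorption the residual is of order $(1+\rho)(1+c_1-c_0)\,\alpha_t^2e^{-\nu_{i,t-1}}|e^{s_i}-g_i|^2$, whose expectation is $C\alpha_t^2\delta_{i,t}^2$. Getting the constant to come out exactly as $C$ may require exploiting the exact form of $D_\varphi$ rather than the crude strong-convexity bound; I expect this careful balancing to be the main technical step.

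\textbf{Step 4: from $\bar\bnu_t$ to $\bnu_t$.} Averaging over $i$ gives
\[
\E\Bigl[\alpha_t\tfrac1n\textstyle\sum_i\nabla_\nu F_i(\w_t,\bar\nu_{i,t})(\bar\nu_{i,t}-\nu_{i,*})\Bigr]\le\tfrac1n\E\bigl[D_\varphi(\bnu_*,\bnu_{t-1})-D_\varphi(\bnu_*,\bar\bnu_t)\bigr]+C\alpha_t^2\delta_t^2 .
\]
Each coordinate lies in $\B_t$ with probability $B/n$, independently of $\bar\bnu_t$, so
\[
\E_{\B_t}[D_\varphi(\bnu_*,\bnu_t)]=(1-\tfrac Bn)D_\varphi(\bnu_*,\bnu_{t-1})+\tfrac Bn D_\varphi(\bnu_*,\bar\bnu_t).
\]
This yields $\frac1n[D_\varphi(\bnu_*,\bnu_{t-1})-D_\varphi(\bnu_*,\bar\bnu_t)]=\frac1B\E_{\B_t}[D_\varphi(\bnu_*,\bnu_{t-1})-D_\varphi(\bnu_*,\bnu_t)]$, which is the claimed bound.
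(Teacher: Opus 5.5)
Your Steps 1 and 4 match the paper's argument: the three-point identity of Lemma~\ref{lem:nu-s-step} applied coordinatewise, and the $B/n$ averaging over $\B_t$. Your Lipschitz estimate for $h$ in Step 2 is also correct. The gap is Step 3, which carries the whole content of the lemma, and as planned it cannot deliver the stated constant.

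First, the condition $\alpha_t\le\rho e^{-\nu_{i,t-1}}$ only controls downward moves, giving $e^{\bar\nu_{i,t}}\ge e^{\nu_{i,t-1}}/(1+\rho)$. Upward moves satisfy $e^{\bar\nu_{i,t}-\nu_{i,t-1}}=(1+\alpha_t z)/(1+\alpha_t e^{\nu_{i,t-1}})$ with $z=e^{s_i(\w_t;\zeta_{i,t})}$, and this ratio can be as large as $(1+\rho e^{c_1-c_0})/(1+\rho)$. So $e^{-\max(\bar\nu_{i,t},\nu_{i,t-1})}$ is \emph{not} comparable to $e^{-\nu_{i,t-1}}$ up to an absolute constant. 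Your absorption therefore imports an exponentially large factor, which is exactly what the lemma is meant to avoid.

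Second, even ignoring this, take $u=\bar\nu_{i,t}-\nu_{i,t-1}\ge0$. Then $D_\varphi(\bar\nu_{i,t},\nu_{i,t-1})=e^{-\nu_{i,t-1}}(e^{-u}+u-1)\le e^{-\nu_{i,t-1}}u^2/2$. Young's inequality applied to $\alpha_t|z-m|\,Ce^{-\nu_{i,t-1}}|u|$ (with $m=g_i(\w_t)$) therefore leaves at best a residual $\tfrac{C^2}{2}\alpha_t^2e^{-\nu_{i,t-1}}(z-m)^2$. That is quadratic in $C$, not linear as you anticipate, and it exceeds $C\alpha_t^2\delta_{i,t}^2$ once $C>2$. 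So absorbing your Step 2 bound into $-D_\varphi(\bar\nu_{i,t},\nu_{i,t-1})$ cannot return $C$, however carefully $D_\varphi$ is handled; the obstruction is the centering point, not the curvature.

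Centering at $h(\nu_{i,t-1})$ leaves a remainder governed by $|\bar\nu_{i,t}-\nu_{i,t-1}|\le\alpha_t|z-e^{\nu_{i,t-1}}|$. This contains the bias $|m-e^{\nu_{i,t-1}}|$ rather than only the noise $|z-m|$, and that is what forces absorption. Since $\E[z-m\mid\mathcal F_{t-1}]=0$, you may instead subtract any $\mathcal F_{t-1}$-measurable quantity.

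Take $h(\tilde\nu_{i,t})$ with $\tilde\nu_{i,t}:=\nu_{i,t-1}+\log(1+\alpha_t m)-\log(1+\alpha_t e^{\nu_{i,t-1}})$, which is the update with the sample replaced by its mean. The map $z\mapsto\nu_{i,t-1}+\log(1+\alpha_t z)-\log(1+\alpha_t e^{\nu_{i,t-1}})$ has derivative $\alpha_t/(1+\alpha_t z)\le\alpha_t$. Also, $e^{\tilde\nu_{i,t}}$ is a convex combination of $e^{\nu_{i,t-1}}$ and $m\in[e^{c_0},e^{c_1}]$, so $\tilde\nu_{i,t}\in[c_0,c_1]$ and $e^{\tilde\nu_{i,t}}\ge e^{\nu_{i,t-1}}/(1+\rho)$. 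Your own Step 2 estimate then gives $|(z-m)(h(\bar\nu_{i,t})-h(\tilde\nu_{i,t}))|\le C\alpha_t e^{-\nu_{i,t-1}}(z-m)^2$.

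Multiplying by $\alpha_t$ and taking conditional expectations yields exactly $C\alpha_t^2\delta_{i,t}^2$. The term $-D_\varphi(\bar\nu_{i,t},\nu_{i,t-1})\le0$ is simply dropped, so no absorption is needed.

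The paper reaches the same bound by symmetrizing with an independent copy $z'$, using $\E[(m-z)h_t(z)\mid\mathcal F_{t-1}]=\tfrac12\E[(z'-z)(h_t(z)-h_t(z'))\mid\mathcal F_{t-1}]$. It then shows $|dh_t/dz|\le C\alpha_t e^{-\nu_{t-1}}$ for $h_t(z)=e^{-\nu_t(z)}(\nu_t(z)-\nu_*)$.
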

\vspace*{-0.05in}
We highlight the challenge in proving the above bound. Due to the SPMD update of $\bar\bnu$, it is easy to establish: 
\begin{align*}
& \alpha_t \nabla_{\nu} \Phi_i(\w_t, \bar\nu_{i,t}; \zeta_{i,t})(\bar\nu_{i,t} - \nu_{i,*}) \\
\leq&\; D_\varphi(\nu_{i,*}, \nu_{i, t-1}) - D_\varphi(\nu_{i,*}, \bar\nu_{i, t} ) -  D_{\varphi}(\bar\nu_{i, t}, \nu_{i, t-1}).
\end{align*}
In order to bound $\E[\alpha_t \nabla_{\nu} F_i(\w_t, \nu_{i,t})(\bar\nu_{i,t} - \nu_{i,*})]$, we need to bound the difference 
\[
\E[\alpha_t (\nabla_{\nu} F_i(\w_t, \bar\nu_{i,t}) - \nabla_{\nu} \Phi(\w_t, \bar\nu_{i,t}; \zeta_{i,t}))(\bar\nu_{i,t} - \nu_{i,*})].
\]
Although $ \nabla_{\nu} \Phi_i(\w_t, \bar\nu_{i,t}; \zeta_{i,t})$ is an unbiased estimator of $\nabla_{\nu} F_i(\w_t, \bar\nu_{i,t})$, the above expectation is not zero since $\bar\nu_{i,t}$ depends on the random variable $\zeta_{i,t}$. To address this challenge, we develop a novel analysis to prove the above lemma. We also remark that the condition $\alpha_t\leq \min_i \rho e^{-\nu_{i,t-1}}$ is useful to mitigate the impact of the variance in $ \Phi(\w_t, \bar\nu_{i,t}; \zeta_{i,t})$.  Finally, we present the convergence bound of SCENT.
\begin{theorem}\label{thm:conv-scent}
    Under \Cref{ass:cerm}, let $\eta_t=\eta \alpha_t$ for some constant $\eta>0$, and $\alpha_t=\frac{\alpha}{\sqrt{T}}<\rho \min_i e^{-v_{i,t-1}}$ for some constant $\alpha,\rho>0$. Then SCENT guarantees that 
\begin{align}\label{eqn:cerm-bound}
&\E\left[F_{\mathrm{CERM}}(\bar\w_T) - F_{\mathrm{CERM}}(\w_*)\right]\notag \\
\leq&  \frac{1}{2\eta \alpha\sqrt{T}}\|\w_1- \w_*\|_2^2 +  \frac{D_\varphi(\bnu_*, \bnu_{0})}{\alpha B\sqrt{T}} +\frac{\alpha V}{\sqrt{T}},
\end{align}
where $\bar\w_T = \frac{\sum_{t=1}^T\w_t}{T}$, $
V=\frac{\eta \sum_{t=1}^T\sigma_t^2}{2T} + \frac{C\sum_{t=1}^T\delta_t^2}{T}$.
\end{theorem}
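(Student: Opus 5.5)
We combine Lemma~\ref{lem:scent-w-one-step} and Lemma~\ref{lem:bnu-s-step} through the joint convexity of $F(\w,\bnu)$, and then sum over $t$.

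\textbf{Step 1: joint convexity.} Under Assumption~\ref{ass:cerm}(i), each $\Phi_i(\w,\nu;\zeta)=e^{s_i(\w;\zeta)-\nu}+\nu$ is jointly convex in $(\w,\nu)$. It is the composition of the convex nondecreasing map $u\mapsto e^u$ with $s_i(\w;\zeta)-\nu$, which is jointly convex, plus a linear term. Taking expectations and averaging over $i$ shows that $F$ is jointly convex. Hence, for every $t$,
\begin{align*}
F(\w_t,\bar\bnu_t)-F(\w_*,\bnu_*)\le \nabla_{\w}F(\w_t,\bar\bnu_t)^{\top}(\w_t-\w_*)+\nabla_{\bnu}F(\w_t,\bar\bnu_t)^{\top}(\bar\bnu_t-\bnu_*).
\end{align*}
Since $\eta_t=\eta\alpha_t$ and $\alpha_t=\alpha/\sqrt{T}$ is constant in $t$, we multiply by $\alpha_t$ and take expectations. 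The $\w$-term, multiplied by $\alpha_t=\eta_t/\eta$, is bounded by $\frac{1}{\eta}$ times Lemma~\ref{lem:scent-w-one-step}, giving $\frac{1}{2\eta}\E[\|\w_t-\w_*\|^2-\|\w_{t+1}-\w_*\|^2]+\frac{\eta\alpha_t^2\sigma_t^2}{2}$. The $\bnu$-term is bounded by Lemma~\ref{lem:bnu-s-step}. Its hypothesis $\alpha_t\le\rho\min_i e^{-\nu_{i,t-1}}$ is exactly the assumed step-size condition; Lemma~\ref{lem:bounded-nu} keeps all the $\nu$'s in $[c_0,c_1]$, so this condition is consistent.

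\textbf{Step 2: telescoping.} We sum over $t=1,\dots,T$. Both distance terms telescope and the negative final terms are dropped, which gives
\begin{align*}
\alpha_t\sum_{t=1}^T\E[F(\w_t,\bar\bnu_t)-F(\w_*,\bnu_*)]\le \frac{\|\w_1-\w_*\|^2}{2\eta}+\frac{D_\varphi(\bnu_*,\bnu_0)}{B}+\frac{\alpha^2}{T}\sum_{t=1}^T\Big(\frac{\eta\sigma_t^2}{2}+C\delta_t^2\Big).
\end{align*}
We divide by $\alpha_t T=\alpha\sqrt{T}$. The first two terms become the first two terms of \eqref{eqn:cerm-bound}, and the last becomes $\frac{\alpha}{\sqrt T}\cdot\frac1T\sum_t(\frac{\eta\sigma_t^2}{2}+C\delta_t^2)=\alpha V/\sqrt{T}$.

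\textbf{Step 3: passing to the primal.} For every $\bnu$ we have $F_{\mathrm{CERM}}(\w)=\min_{\bnu}F(\w,\bnu)\le F(\w,\bnu)$, because minimizing each $\nu_i$ gives $\nu_i=\log\E e^{s_i}$ and value $\log\E e^{s_i}$. Also $F(\w_*,\bnu_*)=F_{\mathrm{CERM}}(\w_*)$. Since $F_{\mathrm{CERM}}$ is convex (a log-E-exp of convex functions, or equivalently a partial minimization of a jointly convex function), Jensen's inequality gives
\begin{align*}
F_{\mathrm{CERM}}(\bar\w_T)\le \frac1T\sum_{t}F_{\mathrm{CERM}}(\w_t)\le \frac1T\sum_t F(\w_t,\bar\bnu_t).
\end{align*}
Combining this with Step 2 yields \eqref{eqn:cerm-bound}.

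\textbf{Main obstacle.} The step requiring the most care is the interface between the two lemmas. Lemma~\ref{lem:scent-w-one-step} is stated with the weight $\eta_t$, while Lemma~\ref{lem:bnu-s-step} is stated with $\alpha_t$. We must use the coupling $\eta_t=\eta\alpha_t$ to put both on the common weight $\alpha_t$. It is also essential that both linearizations are taken at the same point $(\w_t,\bar\bnu_t)$; this is why the virtual sequence $\bar\bnu_t$ is used, since $\E[\z_t]=\nabla_{\w}F(\w_t,\bar\bnu_t)$.

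Finally, the step-size condition involves the random quantities $\nu_{i,t-1}$. Because $\nu_{i,t-1}\le c_1$ by Lemma~\ref{lem:bounded-nu}, the condition $\alpha/\sqrt{T}<\rho e^{-c_1}$ is a deterministic sufficient condition. This lets the constant $\alpha_t$ be pulled out of the expectations.
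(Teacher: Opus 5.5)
Your proposal is correct and follows essentially the same route as the paper's proof. You put both one-step lemmas on the common weight $\alpha_t$ via $\eta_t=\eta\alpha_t$, linearize at $(\w_t,\bar\bnu_t)$ by joint convexity, telescope, and divide by $\alpha\sqrt{T}$. You then pass to $F_{\mathrm{CERM}}$ via $F_{\mathrm{CERM}}(\w_t)-F_{\mathrm{CERM}}(\w_*)\le F(\w_t,\bar\bnu_t)-F(\w_*,\bnu_*)$, and your explicit Jensen step for $\bar\w_T$ supplies a step the paper leaves implicit. One cosmetic point, which the paper shares: because the constant $-1$ was dropped, $\min_{\bnu}F(\w,\bnu)=F_{\mathrm{CERM}}(\w)+1$ rather than $F_{\mathrm{CERM}}(\w)$, but this shift cancels in the gap, so nothing changes.
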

{\bf Remark:} Since $V$ is finite, the above theorem implies a convergence rate of $O(1/\sqrt{T})$. In Corollary~\ref{cor:conv-sent:2}, we show the same order of convergence rate for SCGD for optimizing the Log-E-exp function ($n=1$), which corresponds to SCENT with $\alpha_t = \gamma'_t e^{-\nu_{t-1}}$ for some $\gamma'_t$.
In contrast, existing analysis of SCGD for convex compositional optimization
yields a worse complexity of $O(1/T^{1/4})$~\cite{wang2017stochastic}.
A key to our improved complexity is to use a single time-scale step sizes for $\w, \bnu$, i.e., $\eta_t\propto\alpha_t$, while \citet{wang2017stochastic} use two time-scale step sizes.

\section{Analysis of the Convergence Bound}
\label{sec:bound_analysis}

A caveat of the convergence bound in \Cref{thm:conv-scent} is its dependence on the quantity $V$, which averages the variance terms $\delta_t^2$ and $\sigma_t^2$ over all iterations. Traditional convergence analysis of stochastic optimization usually assumes that the variance terms at each iteration are bounded. However, they become more intricate for the considered problem because of the joint update of $\w_t, \bnu_t$ and the involved exponential function. Although \Cref{lem:bounded-nu} guarantees that these variance terms are bounded, it naturally raises the question of whether they could grow exponentially as in worst-case analysis, or more importantly, whether the resulting convergence bound may involve exponentially large constants that cannot be controlled. A further fundamental question concerns how to rigorously quantify the advantages of the SPMD update over the standard SGD update for $\nu_t$.

We address these questions in this section.  First, we establish upper bounds for $\delta_t^2$ and $\sigma_t^2$, demonstrating that these quantities remain well controlled as the algorithm converges. Second, we fix $\w$ and analyze the SPMD update for the  dual optimization problem. In particular, we derive an upper bound of SPMD  that is characterized by a key quantity that captures the intrinsic complexity of the problem.

\subsection{Analysis of the Variance Terms}
\label{sec:bound_analysis:variance}

For simplicity of exposition, we focus on the case of $n=1$ with  $F(\w, \nu) =  \E_{\zeta}[e^{s(\w;\zeta)-\nu} + \nu].$ 
We define: 
\begin{align*}
&z(\w; \zeta)=e^{s(\w; \zeta)}, \; \mu(\w) = \log \E_{\zeta}e^{s(\w; \zeta)},\\
&m_t  = \E_{\zeta}e^{s(\w_t; \zeta)}, \; \mu_t = \mu(\w_t) = \log m_t.
\end{align*}
The proofs of the results in this section are presented in \Cref{app:sec4}. For the analysis in this section, we make two assumptions regarding $\w$ only. 
\begin{assumption}\label[assumption]{ass:entropic-var-local}
We assume that there exist constants $\kappa, \sigma'$ such that
(i) $\E[z^2(\w; \zeta)]\,/\, (\E[z(\w; \zeta)])^2\le \kappa$, $\forall\w$;
(ii) $\mathbb E[\|e^{s(\w_t;\zeta')-\mu_t}\nabla s(\w_t;\zeta')\|_2^2]\le \sigma'^2$, $\forall t$;
\end{assumption}
\vspace*{-0.1in}
{\bf Remark:} These assumptions are necessary to quantify the variance terms. As shown in Appendix~\ref{app:lower_upper_bound}, the dependence on $\kappa$ is unavoidable for a family of algorithms. The second assumption is the standard bounded stochastic gradient assumption of the objective $F_{\mathrm{CERM}}(\w)$. 
\vspace*{-0.01in}
\begin{lemma}
\label[lemma]{lem:delta-noZmax}
Under \Cref{ass:entropic-var-local}, we have
\begin{align*}
\sigma_t^2 & \leq 4(\sigma')^2\big(F(\w_t,\nu_{t})- F(\w_*, \nu_*)+1\big)^2,\\
\delta_t^2&\leq 2(\kappa-1) m_{t}\Big(F(\w_t,\nu_{t-1})-F(\w_*,\nu_*)+1\Big).
\end{align*}
\end{lemma}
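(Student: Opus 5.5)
The plan is to reduce both bounds to one elementary scalar inequality, via the closed form of the inner minimization. For fixed $\w$, $F(\w,\nu)=e^{\mu(\w)-\nu}+\nu$, which is minimized at $\nu=\mu(\w)$ with value $1+\mu(\w)$. Since $(\w_*,\nu_*)$ is the joint minimizer, $F(\w_*,\nu_*)\le \min_\nu F(\w_t,\nu)=1+\mu_t$. Writing $x=\mu_t-\nu$ for a generic $\nu$, this gives
\begin{align*}
F(\w_t,\nu)-F(\w_*,\nu_*)+1 \;\ge\; e^{\mu_t-\nu}+\nu-\mu_t \;=\; e^{x}-x .
\end{align*}
I would then use the elementary fact $e^{x}\le 2(e^{x}-x)$ for all $x\in\R$. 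It is equivalent to $e^x\ge 2x$, and the minimum of $e^x-2x$ is $2-2\log 2>0$. Combining the two yields the key estimate
\begin{align*}
e^{\mu_t-\nu}\le 2\big(F(\w_t,\nu)-F(\w_*,\nu_*)+1\big)\quad\text{for every }\nu .
\end{align*}

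For $\sigma_t^2$, factor the exponential as $e^{s(\w_t;\zeta')-\nu_t}=e^{\mu_t-\nu_t}\,e^{s(\w_t;\zeta')-\mu_t}$. This gives $\sigma_t^2=e^{2(\mu_t-\nu_t)}\,\E\|e^{s(\w_t;\zeta')-\mu_t}\nabla s(\w_t;\zeta')\|_2^2\le e^{2(\mu_t-\nu_t)}(\sigma')^2$ by \Cref{ass:entropic-var-local}(ii). Squaring the key estimate with $\nu=\nu_t$ then gives the factor $4(F(\w_t,\nu_t)-F(\w_*,\nu_*)+1)^2$.

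For $\delta_t^2$ with $n=1$, we have $\delta_t^2=e^{-\nu_{t-1}}\mathrm{Var}(z(\w_t;\zeta))$. By \Cref{ass:entropic-var-local}(i), $\mathrm{Var}(z)=\E z^2-m_t^2\le(\kappa-1)m_t^2$. Hence $\delta_t^2\le(\kappa-1)m_t\cdot m_t e^{-\nu_{t-1}}=(\kappa-1)m_t e^{\mu_t-\nu_{t-1}}$. Applying the key estimate with $\nu=\nu_{t-1}$ gives the stated bound $2(\kappa-1)m_t(F(\w_t,\nu_{t-1})-F(\w_*,\nu_*)+1)$.

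The main point needing care is the lower bound $F(\w_*,\nu_*)\le 1+\mu_t$, which follows from optimality of $(\w_*,\nu_*)$ over the joint problem, together with the scalar inequality $e^x\ge 2x$. Everything else is direct factoring, and no boundedness from \Cref{lem:bounded-nu} is needed.
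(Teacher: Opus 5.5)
Your proof is correct and follows essentially the same route as the paper. The paper also writes $\delta_t^2 = e^{-\nu_{t-1}}\mathrm{Var}(z)\le(\kappa-1)m_t\cdot m_t e^{-\nu_{t-1}}$ and $\sigma_t^2\le e^{2(\mu_t-\nu_t)}(\sigma')^2$, then controls $r=e^{\mu_t-\nu}$ via the self-bounding inequality $r\le 2(r-\log r)$ (your $e^x\le 2(e^x-x)$ with $r=e^x$) together with $1+\mu_t\ge F(\w_*,\nu_*)$; the only cosmetic difference is that the paper proves the scalar inequality by splitting into the cases $r\le 2$ and $r\ge 2$, whereas you minimize $e^x-2x$ directly.
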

\vspace*{-0.1in}{\bf Remark:} The first result indicates that when  $F(\w_t,\nu_{t})-F(\w_*,\nu_*)\rightarrow 0$, the variance term $\sigma_t^2$  caused by the stochastic update of $\w_t$ will be dominated by $O((\sigma')^2)$. The second result shows that when $F(\w_t,\nu_{t-1})-F(\w_*,\nu_*)\rightarrow 0$, the variance term $\delta_t^2$  caused by the stochastic update of $\nu_t$ will be dominated by $2(\kappa-1)m_t$. Large $m_t$ can be mitigated by choosing small $\alpha_t$. Indeed, if $s(\w_t; \zeta)>0$ causes exponentially large $m_t$, it can be mitigated by exponentially small $D_{\varphi}(\nu_*, \nu_0) = e^{-\nu_*}(1 - e^{\nu_* - \nu_0} + e^{\nu_* - \nu_0}(\nu_* -\nu_*))$ with $\nu_0\gg \nu_*$ through the choice of $\alpha$ in the bound~(\ref{eqn:cerm-bound}). We will make this more explicit in the analysis presented in  next subsection. 

\subsection{Analysis  of SPMD for fixed $\w$}
In this subsection, we further simplify the setting in order to quantify the fundamental complexity of optimizing the dual variable $\nu$ with fixed $\w$. To this end, we consider the following problem: 
\begin{equation}    \label{eqn:cerm_fixed_w}
\min_{\nu} F(\nu): =\E_{\zeta}e^{s(\zeta) - \nu} + \nu, 
\end{equation}
where we omit $\w$ in $s(\zeta)$. We define
\begin{equation*}
    z\coloneqq e^{s(\zeta)},\quad m\coloneqq \mathbb{E}[z]>0,\quad \kappa \coloneqq \frac{\E [z]^2}{(\E [z])^2},
\end{equation*}
where \(\kappa\), the second-order moment ratio, is key to quantify the fundamental complexity of the problem.
Larger  $\kappa$ indicates heavier tails or higher variability relative to the mean. 

It is easy to derive that $\nu_*=\argmin_\nu F(\nu)=\log m.$ Nevertheless, we consider a black-box oracle model for the algorithm, where the underlying distribution of $z$ is unknown and for any query $\nu$ the oracle returns
\[
\Phi(\nu;\zeta)=z e^{-\nu}+\nu,
\quad
g(\nu;\zeta)=\nabla \Phi(\nu;\zeta)=1-ze^{-\nu}.
\]
In the theorem below, we present a convergence result of the SPMD method defined by:
\begin{align}\label{eqn:spmd-nu}
\nu_{t} = \arg\min_\nu \Phi(\nu; \zeta_t) + \frac{D_{\varphi}(\nu, \nu_{t-1})}{\alpha_t}. 
\end{align}

\begin{theorem} \label{thm:1}
Suppose $s(\zeta)\in[c_0, c_1]$. By setting $\alpha_t  = \alpha= \sqrt{\frac{D_\varphi(\nu_*,\nu_0)m}{2C T\mathrm{Var}(z)}}\leq \min(\frac{m}{4C\mathrm{Var}(z)}, \rho e^{-\nu_{t-1}})$ for some $\rho>0$ and sufficiently large $T$, SPMD guarantees that
\begin{align}
\label{eq:pmd-final-rate}
&\frac{1}{T}\sum_{t=1}^T \mathbb{E}\!\left[F(\nu_{t})-F(\nu_*)\right]
\le\\
&4\sqrt{2}\,
\sqrt{\frac{C\,(\kappa -1)\,\bigl(1-r_0+r_0\log r_0\bigr)}{T}} + \frac{F(\nu_0) - F(\nu_*)}{T}.\notag
\end{align}
where $C=(1+\rho)(1+c_1 -c_0)$, and $r_0\coloneqq e^{\nu_*-\nu_0}$. 
\end{theorem}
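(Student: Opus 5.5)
We specialize \Cref{lem:bnu-s-step} to $n=1$, $B=1$ and fixed $\w$. Then $\bar\nu_t=\nu_t$, the primal part disappears, and $\delta_t^2=\E[e^{-\nu_{t-1}}|z-m|^2]=e^{-\nu_{t-1}}\mathrm{Var}(z)$. The condition $\alpha\le\rho e^{-\nu_{t-1}}$ is assumed, and \Cref{lem:bounded-nu} keeps $\nu_t\in[c_0,c_1]$, so the lemma gives
\[
\alpha\,\E[F'(\nu_t)(\nu_t-\nu_*)]\le \E[D_\varphi(\nu_*,\nu_{t-1})-D_\varphi(\nu_*,\nu_t)]+C\alpha^2\E[e^{-\nu_{t-1}}]\mathrm{Var}(z).
\]
Since $F$ is convex in $\nu$, the left side is at least $\alpha\E[F(\nu_t)-F(\nu_*)]$.

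The key extra step is to remove the $e^{-\nu_{t-1}}$ factor in the variance. Note that $F(\nu)-F(\nu_*)=me^{-\nu}-1+\nu-\log m$. Writing $r=e^{\nu_*-\nu}=me^{-\nu}$, this equals $r-1-\log r$. Also $e^{-\nu}=r/m$, and $r\le 2(r-1-\log r)+2$ holds for all $r>0$ (check the minimum of $r-2-2\log r$ at $r=2$). Hence $e^{-\nu_{t-1}}\le \frac{2}{m}\big(F(\nu_{t-1})-F(\nu_*)+1\big)$, which is the analogue of \Cref{lem:delta-noZmax}. Substituting gives
\[
\alpha\E[F(\nu_t)-F_*]\le \E[D_{t-1}-D_t]+\frac{2C\alpha^2\mathrm{Var}(z)}{m}\big(\E[F(\nu_{t-1})-F_*]+1\big).
\]
Summing over $t=1,\dots,T$ telescopes the Bregman terms. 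The sum $\sum_t\E[F(\nu_{t-1})-F_*]$ is shifted by one index, so it equals $F(\nu_0)-F_*$ plus $\sum_{t=1}^{T-1}\E[F(\nu_t)-F_*]$. The condition $\alpha\le m/(4C\mathrm{Var}(z))$ gives $2C\alpha\mathrm{Var}(z)/m\le 1/2$, so this recurring sum can be absorbed into the left side at the cost of a factor $2$. The result is
\[
\frac1T\sum_{t=1}^T\E[F(\nu_t)-F_*]\le \frac{2D_\varphi(\nu_*,\nu_0)}{\alpha T}+\frac{4C\alpha\mathrm{Var}(z)}{m}+\frac{F(\nu_0)-F_*}{T}.
\]

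Plugging in the stated $\alpha=\sqrt{D_\varphi(\nu_*,\nu_0)m/(2CT\mathrm{Var}(z))}$ balances the first two terms. Each becomes $2\sqrt{2}\sqrt{C\,\mathrm{Var}(z)D_\varphi(\nu_*,\nu_0)/(mT)}$, so their sum is $4\sqrt2\sqrt{C\,\mathrm{Var}(z)D_\varphi/(mT)}$. Two identities convert this to the statement. First, $\mathrm{Var}(z)=(\kappa-1)m^2$. Second,
\[
D_\varphi(\nu_*,\nu_0)=e^{-\nu_*}-e^{-\nu_0}+e^{-\nu_0}(\nu_*-\nu_0)=\tfrac1m\big(1-r_0+r_0\log r_0\big),
\]
using $e^{-\nu_0}=r_0/m$ and $\nu_*-\nu_0=\log r_0$. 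Then $\mathrm{Var}(z)D_\varphi/m=(\kappa-1)(1-r_0+r_0\log r_0)$, which gives exactly \eqref{eq:pmd-final-rate}. The phrase "sufficiently large $T$" is what makes $\alpha$ small enough to satisfy the two upper bounds.

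The main obstacle is the constant bookkeeping in the absorption step. The leftover $+1$ term contributes $2C\alpha\mathrm{Var}(z)/m$ per step. Depending on how carefully the constants are tracked, it must either merge into the $4C\alpha\mathrm{Var}(z)/m$ coefficient or require a slightly sharper inequality than $r\le2(r-1-\log r)+2$. If needed, I would use $r\le (1+\theta)(r-1-\log r)+c_\theta$ with tuned constants, or bound $e^{-\nu_{t-1}}$ directly through $e^{-\nu_*}$ plus the gap.
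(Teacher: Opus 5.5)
Your proposal is correct and follows the paper's proof essentially step by step. The shared steps are the one-step SPMD bound (bias lemma plus three-point identity), the self-bounding inequality $r\le 2(r-\log r)$ to replace $e^{-\nu_{t-1}}$ by $\frac{2}{m}\bigl(F(\nu_{t-1})-F(\nu_*)+1\bigr)$, the index shift, absorption via $2C\alpha\mathrm{Var}(z)/m\le \frac12$, and the balancing choice of $\alpha$ with $\mathrm{Var}(z)=(\kappa-1)m^2$ and $D_\varphi(\nu_*,\nu_0)=\frac1m(1-r_0+r_0\log r_0)$.

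The bookkeeping worry in your last paragraph is unfounded. Dividing by $1-2C\alpha\mathrm{Var}(z)/m\ge\frac12$ turns the $+1$ term into exactly $4C\alpha\mathrm{Var}(z)/m$ and leaves a coefficient at most $1$ on $(F(\nu_0)-F(\nu_*))/T$, exactly as in your displayed bound, so no sharper inequality is needed. One small slip: the function to check for $r\le 2(r-\log r)$ is $r-2\log r$, not $r-2-2\log r$.
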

\vspace*{-0.1in}
{\bf Remark:} When $\nu_0\gg \nu_*$, then $1-r_0+r_0\log r_0= O(1)$, the dominating term is $O(\sqrt{\frac{\kappa}{T}})$. This upper bound characterizes the intrinsic complexity of SPMD, which depends on the second-order moment ratio $\kappa$. If $s(\zeta)\sim \mathcal{N}(\mu,\sigma^2)$, then  $\kappa=e^{\sigma^2}$, which does not depend on the exponential of the mean $\mu$ but rather $e^{\sigma^2}$.  In Appendix~\ref{app:lower_upper_bound}, we prove  a lower bound showing that the dependence on $\kappa$ is unavoidable for a family of algorithms under the  black-box oracle model.

\subsection{Comparison with a Convergence Bound of the SGD Update}
\label{sec:bound_analysis:sgd}

Below, we present a standard convergence bound of SGD for optimizing $F(\nu)$. In order to control the variance, we consider projected SGD. Let $\Pi_{[c_0,c_1]}$ denote projection onto $[c_0,c_1]$. The projected SGD update is 
\begin{equation}
\label{eq:sgd-proj}
\nu_{t+1}=\Pi_{[c_0,c_1]}\bigl(\nu_t-\alpha'\,g(\nu_t, \zeta_t)\bigr),
\end{equation}
where $\{\zeta_t\}_{t\ge 0}$ are i.i.d.\ copies of $\zeta$ and $\alpha'>0$ is the step size. We quantify the smoothness on the bounded domain of the objective, which  introduces an exponential constant. 
\begin{lemma}
\label[lemma]{lem:smoothness-exp}
On $[c_0,c_1]$, the function $F(\nu)=m e^{-\nu}+\nu$ is $L$-smooth with
\[
L=\sup_{\nu\in[c_0,c_1]}F''(\nu)=\sup_{\nu\in[c_0,c_1]} m e^{-\nu}=m e^{-c_0}=e^{\nu_*-c_0}.
\]
\end{lemma}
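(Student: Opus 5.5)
The argument is a direct computation. We first rewrite the objective in closed form, then differentiate twice, and finally take the supremum of the second derivative over the interval.

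\emph{Closed form.} Since $z=e^{s(\zeta)}$ does not depend on $\nu$, linearity of expectation gives
\[
F(\nu)=\E_\zeta[z e^{-\nu}]+\nu=m e^{-\nu}+\nu .
\]
This is the form stated in the lemma.

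\emph{Derivatives.} Differentiating twice yields
\[
F'(\nu)=1-m e^{-\nu},\qquad F''(\nu)=m e^{-\nu}>0 .
\]
Interchanging expectation and differentiation is justified because $s(\zeta)\in[c_0,c_1]$ makes $z$ bounded. One could also avoid the interchange altogether, since the closed form is explicit.

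\emph{Smoothness constant.} For a twice-differentiable function on an interval, the mean value theorem gives
\[
|F'(\nu)-F'(\nu')|\le \sup_{\xi\in[c_0,c_1]}|F''(\xi)|\,|\nu-\nu'| .
\]
Hence $F$ is $L$-smooth on $[c_0,c_1]$ with $L=\sup_{\nu\in[c_0,c_1]} m e^{-\nu}$. This supremum is also the tightest constant, because it is approached by taking $\nu,\nu'\to c_0$. The map $\nu\mapsto m e^{-\nu}$ is decreasing, so the supremum is attained at the left endpoint, giving $L=m e^{-c_0}$.

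\emph{Rewriting via $\nu_*$.} Recall from the preceding discussion that $\nu_*=\log m$. This follows from setting $F'(\nu)=0$, which is a valid characterization because $F$ is strictly convex. Substituting $m=e^{\nu_*}$ gives $L=e^{\nu_*-c_0}$.

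No step here is a genuine obstacle. The only point needing care is the justification of exchanging expectation and derivative, and boundedness of $s$ handles it.
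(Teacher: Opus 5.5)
Your proposal is correct and follows the same route as the paper: compute $F''(\nu)=me^{-\nu}$, note that it is decreasing so its supremum over $[c_0,c_1]$ is attained at $c_0$, and use $\nu_*=\log m$ to rewrite $L=e^{\nu_*-c_0}$. Your mean-value-theorem justification that the supremum of $F''$ is a valid smoothness constant, and your tightness remark, are correct details that the paper leaves implicit.
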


\begin{theorem}
\label{thm:sgd-exp-constants}
By choosing the optimal $\alpha' = \frac{|\nu_0- \nu_*|e^{c_0}}{\sqrt{2T\mathrm{Var}(z)}}\le \frac{1}{L}=\frac{e^{c_0}}{m}$, SGD has a convergence upper bound: 
\begin{equation*}
\frac{1}{T}\sum_{t=1}^T \mathbb{E}\!\left[F(\nu_{t})-F(\nu_*)\right]
\le
\sqrt{2}|\nu_0-\nu_*|\,e^{\nu_*-c_0}\sqrt{\frac{\kappa -1}{T}}.
\end{equation*}
\end{theorem}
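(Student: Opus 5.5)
This is the standard projected SGD analysis for a convex $L$-smooth objective on $[c_0,c_1]$, tracking the variance of $g$ and the smoothness constant from \Cref{lem:smoothness-exp}.

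\textbf{Step 1 (oracle properties).} Since $\E[z]=m$, the stochastic gradient is unbiased:
\[
\E_\zeta[g(\nu;\zeta)]=1-me^{-\nu}=F'(\nu).
\]
Its variance is $\mathrm{Var}(g(\nu;\zeta))=e^{-2\nu}\mathrm{Var}(z)$. For $\nu\in[c_0,c_1]$ this gives the uniform bound $\mathrm{Var}(g)\le e^{-2c_0}\mathrm{Var}(z)$. Note $\nu_*=\log m\in[c_0,c_1]$ because $s\in[c_0,c_1]$, so projection onto $[c_0,c_1]$ does not exclude the minimizer. We also use $\mathrm{Var}(z)=(\kappa-1)m^2$.

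\textbf{Step 2 (one-step recursion).} Use smoothness plus the three-point inequality. Write $\hat\nu_{t+1}=\nu_t-\alpha' g_t$. Nonexpansiveness of the projection, applied to $\nu_*\in[c_0,c_1]$, gives
\[
|\nu_{t+1}-\nu_*|^2\le|\nu_t-\nu_*|^2-2\alpha' g_t(\nu_t-\nu_*)+\alpha'^2g_t^2.
\]
This form only gives $F(\nu_t)$, paying the second moment, which involves the squared gradient. For the bound to depend on the variance only, I would use the smooth SGD analysis instead. Decompose $g_t=F'(\nu_t)+\xi_t$. Combine the descent lemma (with $\alpha'\le 1/L$) and convexity to get the standard inequality
\[
\E[F(\nu_{t+1})-F(\nu_*)]\le \frac{\E|\nu_t-\nu_*|^2-\E|\nu_{t+1}-\nu_*|^2}{2\alpha'}+\alpha'\,\E[\xi_t^2].
\]
This is the projected-gradient three-point argument for $F(\nu_{t+1})$ with noise cross term $\xi_t(\nu_{t+1}-\nu_*)$. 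Split that term as $\xi_t(\tilde\nu_{t+1}-\nu_*)+\xi_t(\nu_{t+1}-\tilde\nu_{t+1})$, where $\tilde\nu_{t+1}$ is the noiseless projected step. The first piece has zero mean. The second is at most $\alpha'\xi_t^2$ by nonexpansiveness.

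\textbf{Step 3 (telescope and tune).} Sum over $t$ and use $\E\xi_t^2\le e^{-2c_0}(\kappa-1)m^2$ to obtain
\[
\frac1T\sum_{t=1}^T\E[F(\nu_t)-F(\nu_*)]\le\frac{|\nu_0-\nu_*|^2}{2\alpha' T}+\alpha' e^{-2c_0}\mathrm{Var}(z).
\]
Balance the two terms. With the stated $\alpha'=|\nu_0-\nu_*|e^{c_0}/\sqrt{2T\mathrm{Var}(z)}$, each term equals $|\nu_0-\nu_*|e^{-c_0}\sqrt{\mathrm{Var}(z)/(2T)}$. Their sum is
\[
\sqrt2\,|\nu_0-\nu_*|e^{-c_0}\sqrt{\mathrm{Var}(z)/T}=\sqrt2\,|\nu_0-\nu_*|e^{\nu_*-c_0}\sqrt{(\kappa-1)/T},
\]
using $\sqrt{\mathrm{Var}(z)}=m\sqrt{\kappa-1}=e^{\nu_*}\sqrt{\kappa-1}$. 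The condition $\alpha'\le 1/L$ is assumed in the statement.

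\textbf{Main obstacle.} The delicate step is Step 2: the projected descent inequality must charge only the variance $\E\xi_t^2$ and not $\E g_t^2$. I would handle this with the auxiliary noiseless point $\tilde\nu_{t+1}$ described above, which is the standard stochastic projected-gradient argument. Indexing (the sum over $\nu_1,\dots,\nu_T$ versus $\nu_0,\dots,\nu_{T-1}$) only shifts the telescoping and is harmless.
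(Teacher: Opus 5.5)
Your proposal is correct and follows essentially the same route as the paper's proof. Both combine the three-point inequality for the projected (prox) step with the descent lemma under $\alpha'\le 1/L$ and convexity, control the noise cross term through the noiseless projected point and nonexpansiveness of the projection (giving $\alpha'\sigma_t^2$), then telescope, bound $\sigma_t^2=e^{-2\nu_t}\mathrm{Var}(z)\le e^{-2c_0}\mathrm{Var}(z)$, and balance the two terms. As in the paper, the $t=0$ step tacitly uses $\nu_0\in[c_0,c_1]$ so that both the smoothness constant $L$ and the variance bound apply at $\nu_0$; this is worth stating explicitly.
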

{\bf Remark:} The ratio of the convergence bound of SPMD to that of SGD is $\frac{1}{|\nu_0 - \nu_*| e^{\nu_* - c_0}}.$ Notably, this ratio becomes exponentially small in regimes where $\nu_* \gg c_0$, highlighting the superior efficiency of SPMD.  If $s(\zeta)\sim \mathcal{N}(\mu,\sigma^2)$, then  $\nu_*=\log m =\mu + \sigma^2/2$, and the ratio is proportional to $e^{-\sigma^2/2}e^{c_0-\mu}$, which decreases exponentially as $\sigma$ increases. 

We also note that the term $e^{\nu_*-c_0}\sqrt{\kappa-1}$ in the upper bound
arises from the variance of the stochastic gradient $g(\nu_t, \zeta_t)$.
Even if one replaces the Bregman divergence with the Euclidean distance
in the proximal update---recovering the stochastic proximal point
method~\cite{pmlr-v162-chadha22a}---this variance-dependent term persists
in the upper bound. This justifies the use of the Bregman divergence
in~\eqref{eqn:spmd-nu}.

\begin{figure}
    \centering
    \centering
    \includegraphics[width=0.6\linewidth]{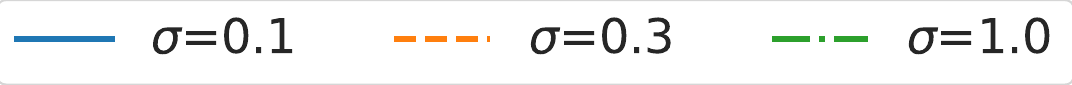}

    \includegraphics[width=0.45\linewidth]{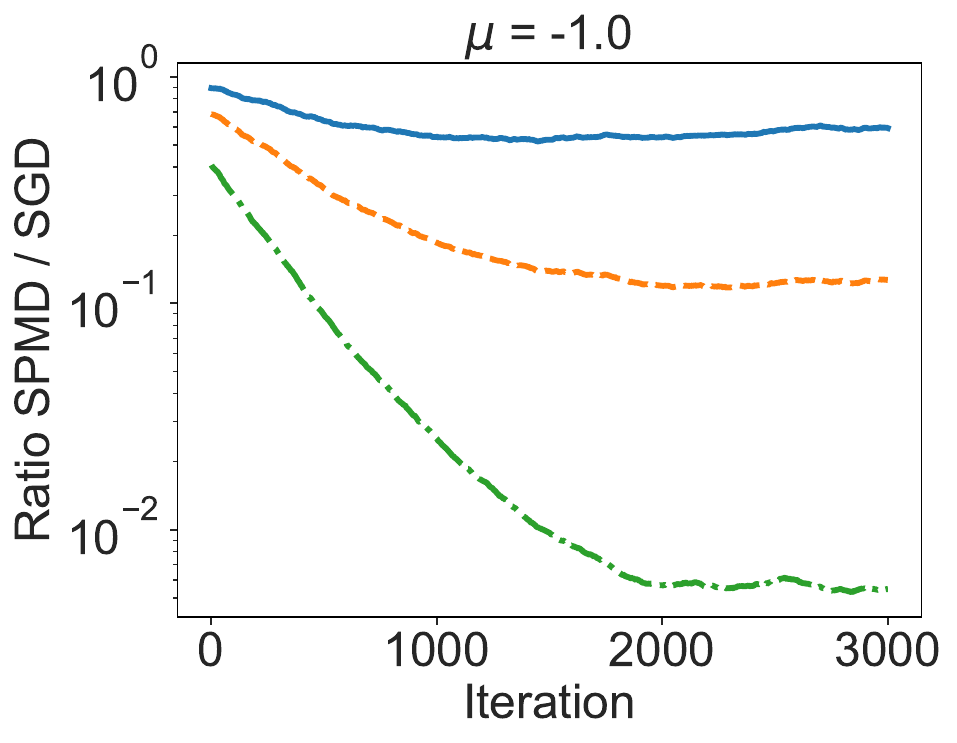}
    \includegraphics[width=0.45\linewidth]{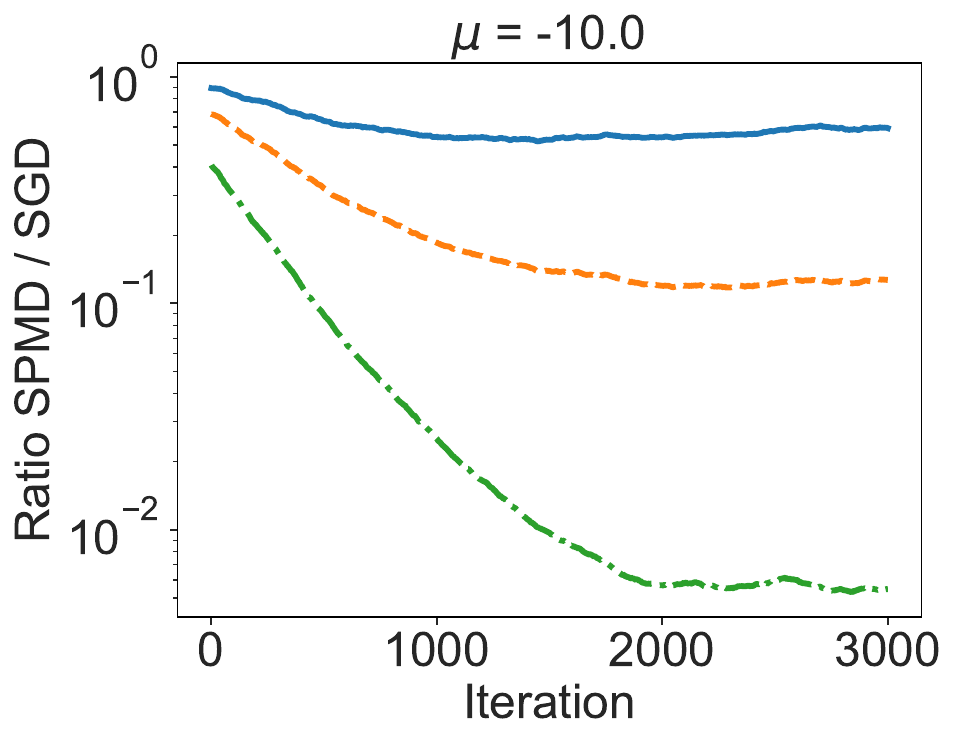}
    \caption{Ratio between the error of SPMD and that of SGD when trained on Gaussian noise with different means and variances.}
    \label{fig:gaussian}
\end{figure}

\begin{figure*}
    \centering
    \includegraphics[width=0.6\linewidth]{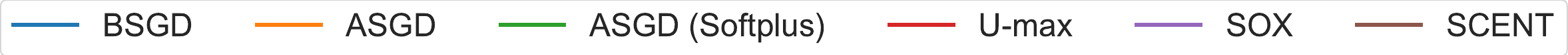}

    \begin{subfigure}[b]{0.49\textwidth}
        \centering
        \includegraphics[width=0.49\linewidth]{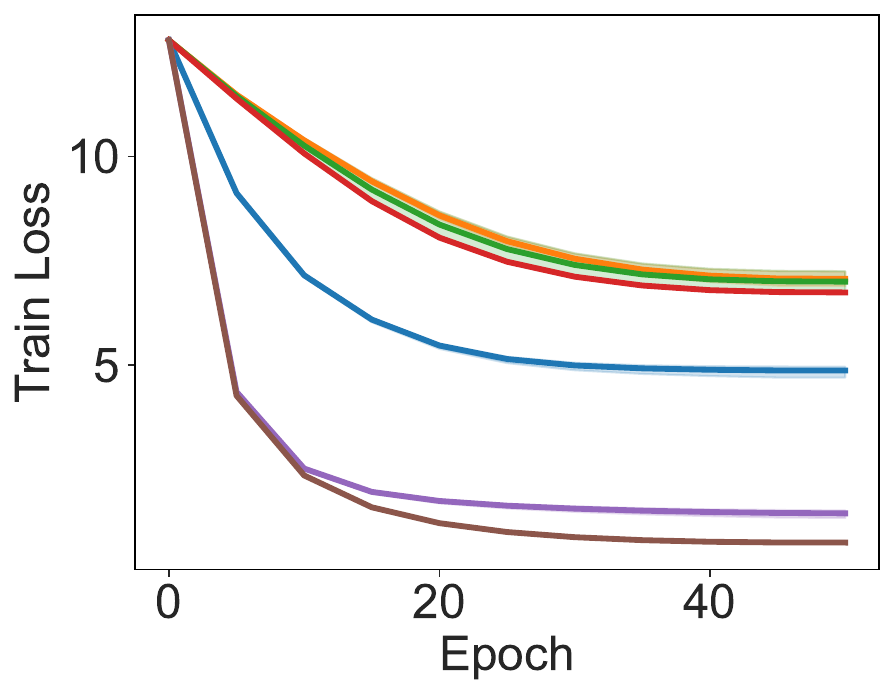}
        \includegraphics[width=0.49\linewidth]{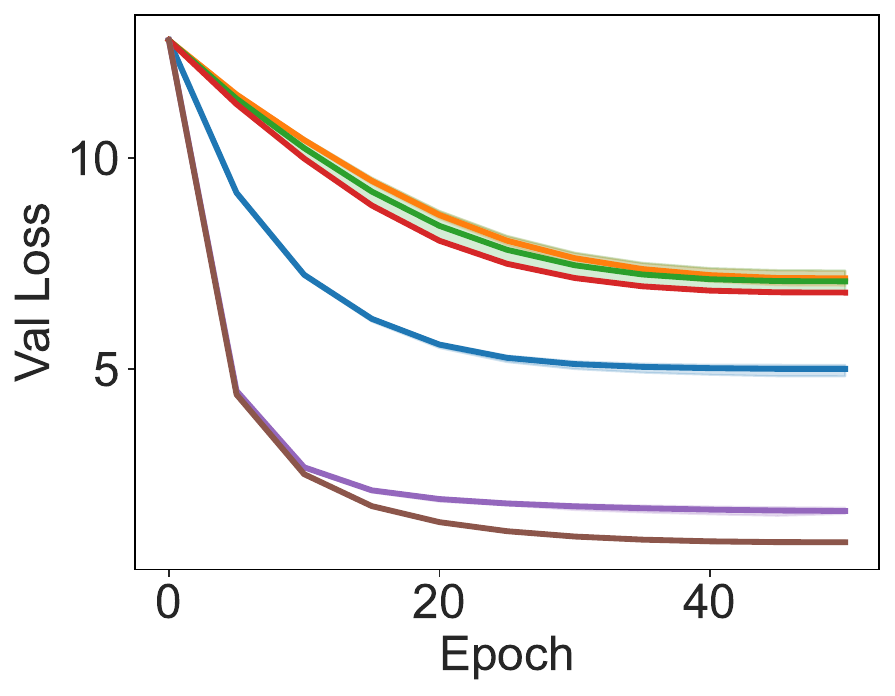}
        \caption{On Glint360K data}
        \label{fig:baselines:glint}
    \end{subfigure}
    \begin{subfigure}[b]{0.49\textwidth}
        \centering
        \includegraphics[width=0.49\linewidth]{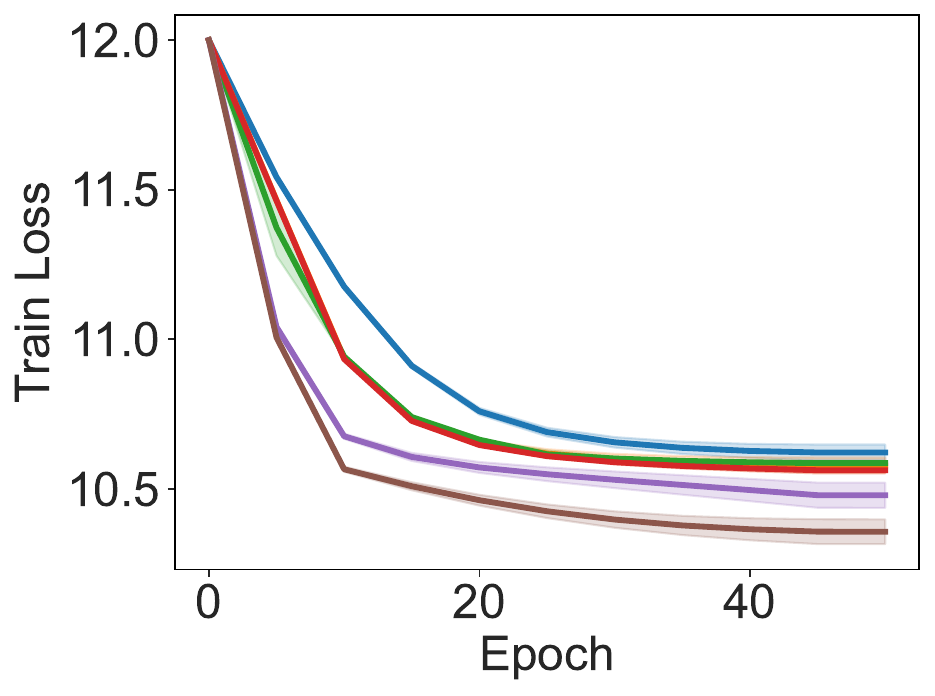}
        \includegraphics[width=0.49\linewidth]{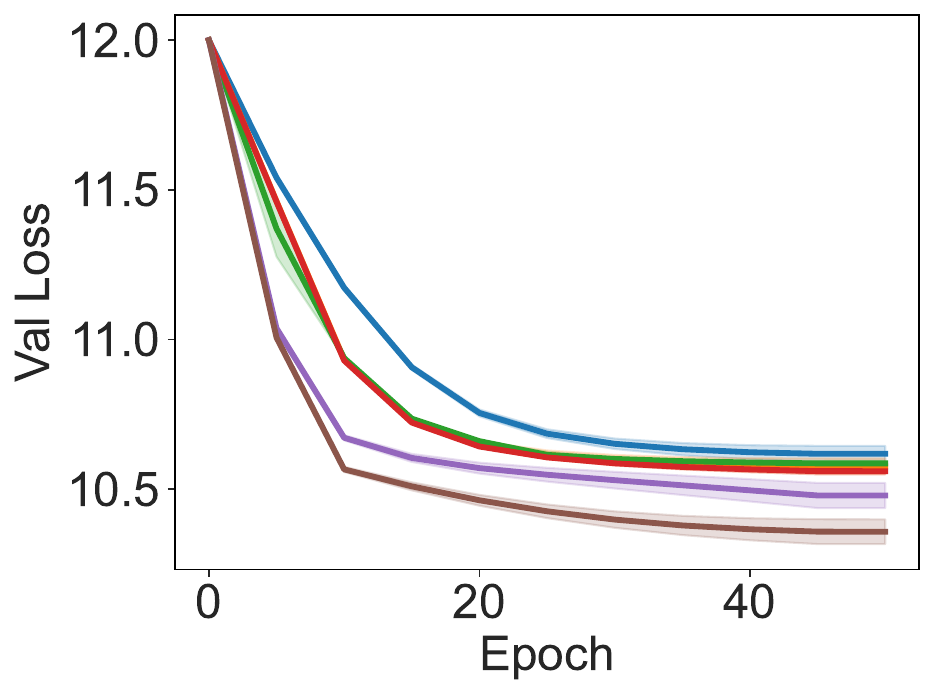}
        \caption{On TreeOfLife-10M data}
        \label{fig:baselines:treeoflife}
    \end{subfigure}
    \caption{(\subref*{fig:baselines:glint}): Cross-entropy loss curves of different methods on the training set (left) and validation set (right) of Glint360K. (\subref*{fig:baselines:treeoflife}): Cross-entropy loss curves of different methods on the training set (left) and validation dataset (right) of TreeOfLife-10M.}
    \label{fig:baselines}
    \centering
    \includegraphics[width=0.6\linewidth]{figs/glint_baselines_legend.pdf}

    \begin{subfigure}[b]{0.49\textwidth}
        \centering
        \includegraphics[width=0.49\linewidth]{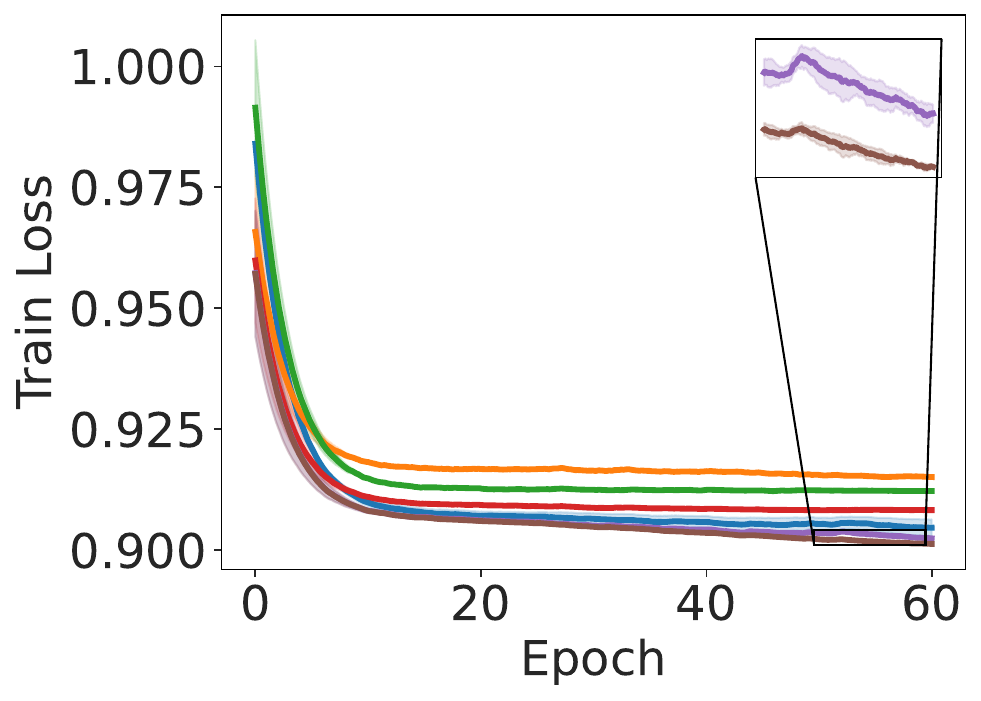}
        \includegraphics[width=0.49\linewidth]{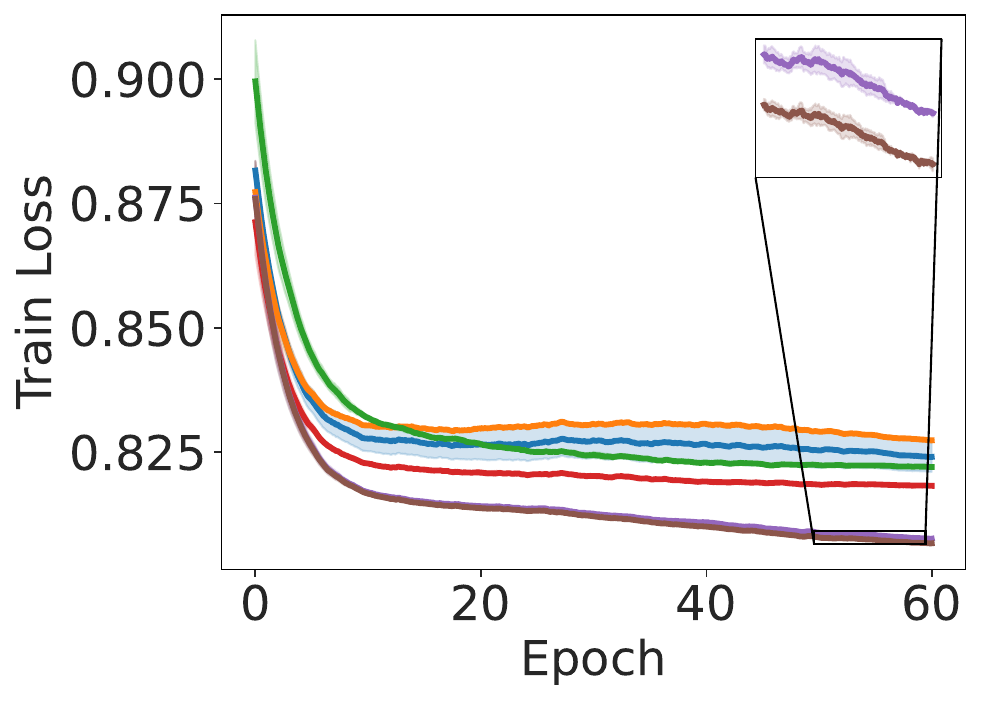}
        \caption{On CIFAR-10 data}
        \label{fig:cifar10_pauc}
    \end{subfigure}
    \begin{subfigure}[b]{0.49\textwidth}
        \centering
        \includegraphics[width=0.49\linewidth]{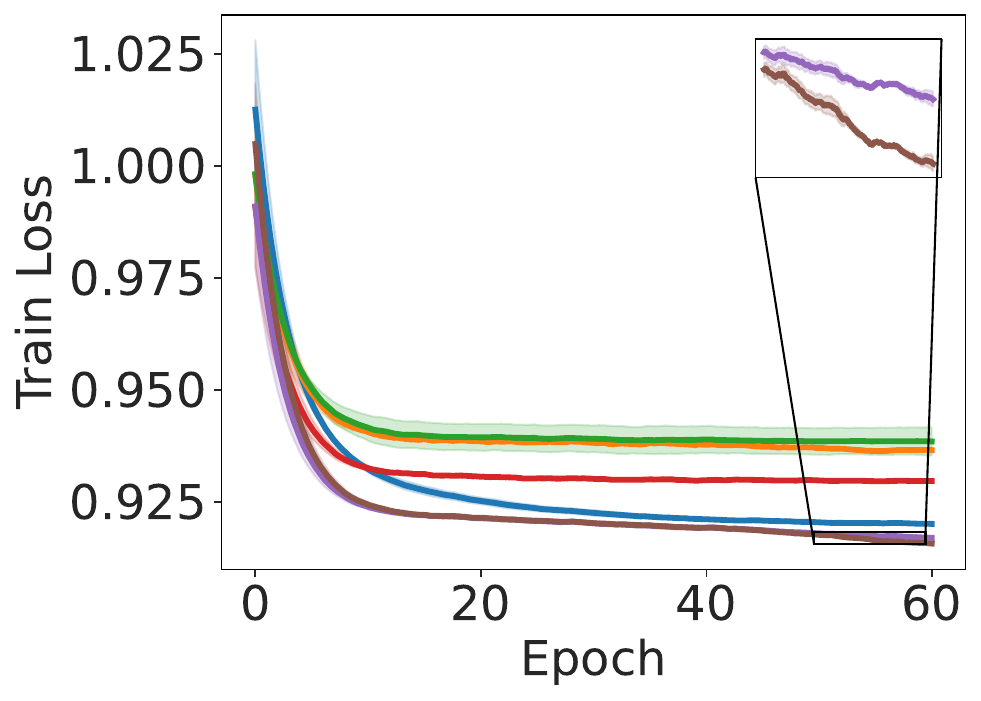}
        \includegraphics[width=0.49\linewidth]{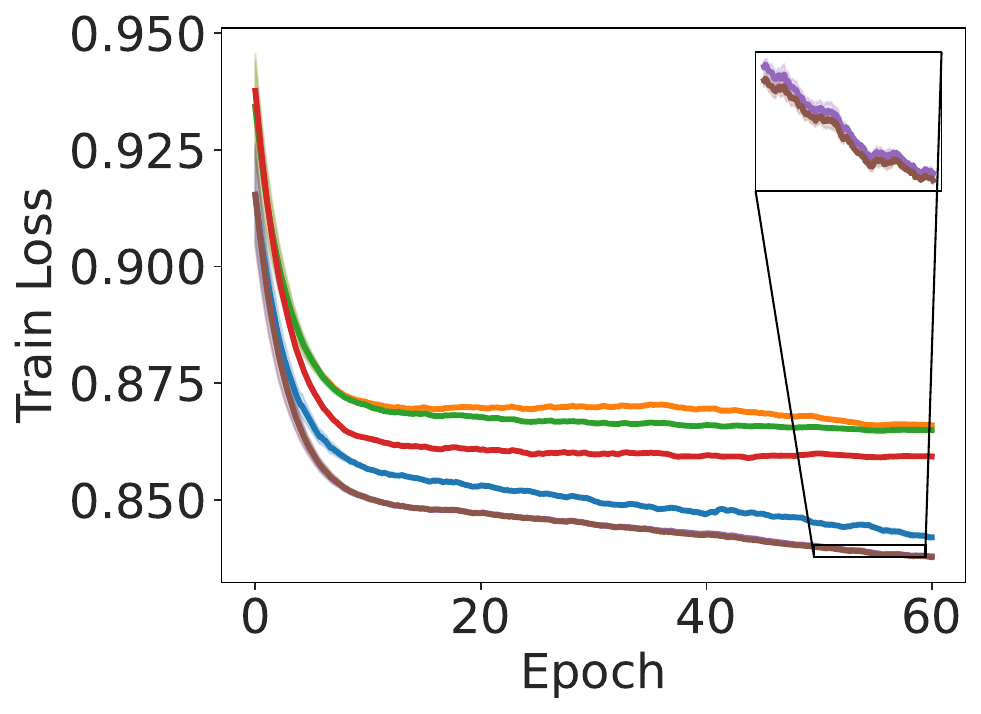}
        \caption{On CIFAR-100 data}
        \label{fig:cifar100_pauc}
    \end{subfigure}
    \caption{Training loss curves of different methods for partial AUC maximization. (\subref*{fig:cifar10_pauc}): on the dataset CIFAR-10 with $\tau=0.05$ (left) and $\tau=0.1$. (\subref*{fig:cifar100_pauc}): on the dataset CIFAR-100 with $\tau=0.05$ (left) and $\tau=0.1$.}
    \label{fig:pauc}
    \vspace*{-0.15in}
\end{figure*}

To justify the theoretical analysis, we compare SPMD and SGD in a controlled synthetic data setting where $s(\zeta)\sim \mathcal{N}(\mu,\sigma^2)$. We vary $\mu, \sigma$ and compare the convergence error of SPMD and SGD in Figure~\ref{fig:gaussian}, where it clearly shows that the ratio between SPMD's convergence error to that of SGD decreases as $\sigma$ increases and is independent of $\mu$. 

\section{Experiments}
\label{sec:experiments}

In this section, we provide empirical justification of the effectiveness of our approach. Specifically, we compare our proposed method with multiple baselines on different tasks, including extreme classification (XC, \Cref{sec:experiments:ec}) and partial AUC maximization (Section~\ref{sec:experiments:auc}). We also conduct experiments on distributionally robust optimization and CLIP training, whose results are deferred to \Cref{app:exp} due to space limit. For all experiments in this section, we run each method three times with different random seeds, and report the average performance with error bars. The explicit updates of SCENT for each task are presented in \Cref{app:exp:hyperparams}.

\subsection{Extreme Classification}
\label{sec:experiments:ec}


\textbf{Datasets.} We consider the Glint360K dataset~\citep{an2021partial} and the TreeOfLife-10M dataset~\citep{stevens2024bioclip}: the former is a face dataset consisting of 17 million images from 360 thousand individuals (i.e., 360K classes), while the latter is a biology dataset of 10 million images from 160 thousand species. We use the low-dimensional features of the images to train the classifier. In particular, we leverage a ResNet-50 encoder~\citep{he2016deep} pretrained on Glint360K and a CLIP ViT-B/16 model~\citep{dosovitskiy2021image} pretrained on TreeOfLife-10M, released by the authors of the respective datasets, to process the images of the respective datasets into features. More details can be found in \Cref{app:exp:hyperparams}.

\textbf{Baselines.} We compare our method with the following baselines: BSGD, ASGD for solving the same min-min formulation,  SOX, the U-max method in~\citet{fagan2018unbiased} and  ASGD for solving the softplus approximation~\cite{gladin2025improved}. For all the methods, we use a batch size of 128 and train the model for 50 epochs using the SGD optimizer for the model parameter.
The details of hyperparameter tuning are presented in Appendix~\ref{app:exp:hyperparams}. In Appendix~\ref{app:exp:ec_auc}, we also include results using the momentum optimizer for the model parameter \(\w\) with similar results as discussed below.

{\bf Results.} We present the cross entropy loss value curves on the training data and validation data in \Cref{fig:baselines}, from which we have the following observations. First, on all datasets, ASGD, U-max and ASGD (Softplus) perform similarly. Second, BSGD is better than ASGD on Glint360k data but is worse than ASGD on TreeOfLife-10M data. Last, SOX and SCENT are consistently better than all methods and SCENT performs better than SOX. This justifies our choice of the geometry-aware update of the dual variable.

\subsection{Partial AUC Maximization}
\label{sec:experiments:auc}

\textbf{Datasets.} We consider the binary classification task on imbalanced image datasets. Specifically, we use the CIFAR-10 and CIFAR-100 dataset~\citep{krizhevsky2009learning} in our experiments. To make the datasets imbalanced, for both datasets, we take first half of classes as the negative class and last half of classes as the positive class. Then we construct an imbalanced version by randomly removing 80\% samples from the positive class, which we use for training. The model we train is a ResNet18~\citep{he2016deep}. Similar to~\citet{zhu2022auc}, we add a pretraining stage that optimizes the base model using the binary cross-entropy loss with the SGD optimizer, and then freeze the backbone and optimize the classifier layer by using different methods.

\textbf{Baselines.} We use the same baselines as previous subsection for comparison. For all the methods, we use a batch size of 64 and train the model for 60 epochs using the SGD optimizer.
The details of hyperparameter tuning are presented in Appendix~\ref{app:exp:hyperparams}.  In Appendix~\ref{app:exp:ec_auc}, we also include more results using the momentum optimizer for the model parameter \(\w\) with similar conclusions as discussed below.

{\bf Results.} We plot loss curves on the training data in \Cref{fig:pauc} for different $\tau$.
Across different datasets and \(\tau\) choices, we have the following observations.
First, ASGD, U-max and ASGD (Softplus) do not perform well for this task, whose gap with BSGD are usually large. Second, SOX and SCENT enjoy the best results among all methods and SCENT is slightly better than SOX. This also justifies our choice of the geometry-aware update. From the results on XC and partial AUC maximization, we can conclude that SCENT yields the best performance.

\section{Conclusion and Discussion}

 In this paper, we have studied the problem of efficiently optimizing the compositional entropic risk. Leveraging a min-min formulation of the risk, we proposed a novel geometry-aware stochastic proximal mirror descent (SPMD) update for the dual variable. Theoretically, we analyzed the convergence of the algorithm for convex problems, and we provide comparison between the SPMD update and SGD update. Empirically, we conducted extensive experiments on extreme classification, partial AUC maximization, contrastive learning and distributionally robust optimization to demonstrate the effectiveness of our algorithm.

 In this work, we only consider theoretical analysis for the convex setting. It is worth exploring how to conduct the convergence analysis for non-convex setting to exhibit the potential advantages over existing methods, e.g., \citet{wang2022finite}. Another limitation of this work is the inapplicability of SCENT to $n=\infty$ since it is not possible to maintain \(\nu\) for all individual samples. One way to address this limitation is to follow~\citet{wei2025neuclip} by learning a parameterized neural network $\nu(\w', \x_i)$ to predict $\nu_i$. 

\section*{Acknowledgment}
We thank the anonymous reviewers for their helpful comments.
We are grateful to Egor Gladin for identifying a bug in our earlier implementation of ASGD (softplus) for partial AUC maximization.
X.~Wei, L.~Zhou, and T.~Yang were partially supported by NSF Award 2306572.
C.~Lin was supported in part by National Science and Technology Council of Taiwan grants NSTC-113-2222-E-002-005-MY3 and NSTC-114-2634-F-002-007.
This work used GPU resources at TAMU ACES and NCSA Delta through allocation CIS230245 from the Advanced Cyberinfrastructure Coordination Ecosystem: Services \& Support (ACCESS) program, which is supported by U.S. National Science Foundation grants \#2138259, \#2138286, \#2138307, \#2137603, and \#2138296.

\section*{Impact Statement}

This paper presents work whose goal is to advance the field of Machine
Learning. There are many potential societal consequences of our work, none
which we feel must be specifically highlighted here.

\bibliography{main}

@misc{abalone_1,
  author       = {Nash, Warwick and Sellers, Tracy and Talbot, Simon and Cawthorn, Andrew and Ford, Wes},
  title        = {{Abalone}},
  year         = {1994},
  howpublished = {UCI Machine Learning Repository},
  note         = {{DOI}: https://doi.org/10.24432/C55C7W}
}

@InProceedings{pmlr-v162-chadha22a,
  title = 	 {Accelerated, Optimal and Parallel: Some results on model-based stochastic optimization},
  author =       {Chadha, Karan and Cheng, Gary and Duchi, John},
  booktitle = 	 {Proceedings of the 39th International Conference on Machine Learning},
  pages = 	 {2811--2827},
  year = 	 {2022},
  editor = 	 {Chaudhuri, Kamalika and Jegelka, Stefanie and Song, Le and Szepesvari, Csaba and Niu, Gang and Sabato, Sivan},
  volume = 	 {162},
  series = 	 {Proceedings of Machine Learning Research},
  month = 	 {17--23 Jul},
  publisher =    {PMLR},
  url = 	 {https://proceedings.mlr.press/v162/chadha22a.html}
}

@Article{Bengio19,
  author      =	{Samy Bengio and Krzysztof Dembczynski and Thorsten Joachims and Marius Kloft and Manik Varma},
  title       =	{{Extreme Classification (Dagstuhl Seminar 18291)}},
  pages       =	{62--80},
  journal     =	{Dagstuhl Reports},
  ISSN        =	{2192-5283},
  year        =	{2019},
  volume      =	{8},
  number      =	{7},
  editor      =	{Samy Bengio and Krzysztof Dembczynski and Thorsten Joachims and Marius Kloft and Manik Varma},
  publisher   =	{Schloss Dagstuhl--Leibniz-Zentrum fuer Informatik},
  address     =	{Dagstuhl, Germany},
  URL         =	{http://drops.dagstuhl.de/opus/volltexte/2019/10173},
  URN         =	{urn:nbn:de:0030-drops-101739},
  doi         =	{10.4230/DagRep.8.7.62}
}

@article{convexcoherentrisk,
  title={Convex and coherent risk measures},
  author={F{\"o}llmer, Hans and Schied, Alexander},
  journal={Encyclopedia of Quantitative Finance},
  pages={355--363},
  year={2010},
  publisher={John Wiley \& Sons Hoboken}
}

@article{pace1997sparse,
  title={Sparse spatial autoregressions},
  author={Pace, R Kelley and Barry, Ronald},
  journal={Statistics \& Probability Letters},
  volume={33},
  number={3},
  pages={291--297},
  year={1997},
  publisher={Elsevier}
}

@inproceedings{yang2020mixed,
author = {Yang, Ji and Yi, Xinyang and Zhiyuan Cheng, Derek and Hong, Lichan and Li, Yang and Xiaoming Wang, Simon and Xu, Taibai and Chi, Ed H.},
title = {Mixed Negative Sampling for Learning Two-tower Neural Networks in Recommendations},
year = {2020},
isbn = {9781450370240},
publisher = {Association for Computing Machinery},
address = {New York, NY, USA},
url = {https://doi.org/10.1145/3366424.3386195},
doi = {10.1145/3366424.3386195},
booktitle = {Companion Proceedings of the Web Conference 2020},
pages = {441–447},
numpages = {7},
location = {Taipei, Taiwan},
series = {WWW '20}
}

@inproceedings{xiong2020approximate,
    title={Approximate Nearest Neighbor Negative Contrastive Learning for Dense Text Retrieval},
    author={Lee Xiong and Chenyan Xiong and Ye Li and Kwok-Fung Tang and Jialin Liu and Paul N. Bennett and Junaid Ahmed and Arnold Overwijk},
    booktitle={International Conference on Learning Representations},
    year={2021},
    url={https://openreview.net/forum?id=zeFrfgyZln}
}

@inproceedings{dahiya2023ngame,
  title={Ngame: Negative mining-aware mini-batching for extreme classification},
  author={Dahiya, Kunal and Gupta, Nilesh and Saini, Deepak and Soni, Akshay and Wang, Yajun and Dave, Kushal and Jiao, Jian and K, Gururaj and Dey, Prasenjit and Singh, Amit and others},
  booktitle={Proceedings of the Sixteenth ACM International Conference on Web Search and Data Mining},
  pages={258--266},
  year={2023}
}

@InProceedings{jiang2023learning,
  title = 	 {Learning Unnormalized Statistical Models via Compositional Optimization},
  author =       {Jiang, Wei and Qin, Jiayu and Wu, Lingyu and Chen, Changyou and Yang, Tianbao and Zhang, Lijun},
  booktitle = 	 {Proceedings of the 40th International Conference on Machine Learning},
  pages = 	 {15105--15124},
  year = 	 {2023},
  editor = 	 {Krause, Andreas and Brunskill, Emma and Cho, Kyunghyun and Engelhardt, Barbara and Sabato, Sivan and Scarlett, Jonathan},
  volume = 	 {202},
  series = 	 {Proceedings of Machine Learning Research},
  month = 	 {23--29 Jul},
  publisher =    {PMLR},
  url = 	 {https://proceedings.mlr.press/v202/jiang23g.html}
}

@inproceedings{gutmann2010noise,
  title={Noise-contrastive estimation: A new estimation principle for unnormalized statistical models},
  author={Gutmann, Michael and Hyv{\"a}rinen, Aapo},
  booktitle={Proceedings of the thirteenth international conference on artificial intelligence and statistics},
  pages={297--304},
  year={2010},
  organization={JMLR Workshop and Conference Proceedings}
}

@article{zhang2020optimal,
  title={Optimal Algorithms for Convex Nested Stochastic Composite Optimization},
  author={Zhang, Zhe and Lan, Guanghui},
  journal={arXiv preprint arXiv:2011.10076},
  year={2020}
}

@InProceedings{Wang2023ANS,
  title = 	 {A Near-Optimal Single-Loop Stochastic Algorithm for Convex Finite-Sum Coupled Compositional Optimization},
  author =       {Wang, Bokun and Yang, Tianbao},
  booktitle = 	 {Proceedings of the 42nd International Conference on Machine Learning},
  pages = 	 {65091--65121},
  year = 	 {2025},
  editor = 	 {Singh, Aarti and Fazel, Maryam and Hsu, Daniel and Lacoste-Julien, Simon and Berkenkamp, Felix and Maharaj, Tegan and Wagstaff, Kiri and Zhu, Jerry},
  volume = 	 {267},
  series = 	 {Proceedings of Machine Learning Research},
  month = 	 {13--19 Jul},
  publisher =    {PMLR},
  url = 	 {https://proceedings.mlr.press/v267/wang25dw.html}
}

@inproceedings{DBLP:conf/nips/JiangLW0Y22,
 author = {Jiang, Wei and Li, Gang and Wang, Yibo and Zhang, Lijun and Yang, Tianbao},
 booktitle = {Advances in Neural Information Processing Systems},
 editor = {S. Koyejo and S. Mohamed and A. Agarwal and D. Belgrave and K. Cho and A. Oh},
 pages = {32499--32511},
 publisher = {Curran Associates, Inc.},
 title = {Multi-block-Single-probe Variance Reduced Estimator for Coupled Compositional Optimization},
 url = {https://proceedings.neurips.cc/paper_files/paper/2022/file/d13ee73683fd5567e5c07634a25cd7b8-Paper-Conference.pdf},
 volume = {35},
 year = {2022}
}

@inproceedings{hu2024biasedstochasticfirstordermethods,
 author = {Hu, Yifan and Zhang, Siqi and Chen, Xin and He, Niao},
 booktitle = {Advances in Neural Information Processing Systems},
 editor = {H. Larochelle and M. Ranzato and R. Hadsell and M.F. Balcan and H. Lin},
 pages = {2759--2770},
 publisher = {Curran Associates, Inc.},
 title = {Biased Stochastic First-Order Methods for Conditional Stochastic Optimization and Applications in Meta Learning},
 url = {https://proceedings.neurips.cc/paper_files/paper/2020/file/1cdf14d1e3699d61d237cf76ce1c2dca-Paper.pdf},
 volume = {33},
 year = {2020}
}

@InProceedings{wang2020understanding,
  title = 	 {Understanding Contrastive Representation Learning through Alignment and Uniformity on the Hypersphere},
  author =       {Wang, Tongzhou and Isola, Phillip},
  booktitle = 	 {Proceedings of the 37th International Conference on Machine Learning},
  pages = 	 {9929--9939},
  year = 	 {2020},
  editor = 	 {III, Hal Daumé and Singh, Aarti},
  volume = 	 {119},
  series = 	 {Proceedings of Machine Learning Research},
  month = 	 {13--18 Jul},
  publisher =    {PMLR},
  url = 	 {https://proceedings.mlr.press/v119/wang20k.html}
}

@inproceedings{10.1145/1390156.1390306,
author = {Xia, Fen and Liu, Tie-Yan and Wang, Jue and Zhang, Wensheng and Li, Hang},
title = {Listwise approach to learning to rank: theory and algorithm},
year = {2008},
isbn = {9781605582054},
publisher = {Association for Computing Machinery},
address = {New York, NY, USA},
url = {https://doi.org/10.1145/1390156.1390306},
doi = {10.1145/1390156.1390306},
booktitle = {Proceedings of the 25th International Conference on Machine Learning},
pages = {1192–1199},
numpages = {8},
location = {Helsinki, Finland},
series = {ICML '08}
}

@book{alma99169683795301081,
author = {Lan, Guanghui.},
address = {Cham},
booktitle = {First-order and Stochastic Optimization Methods for Machine Learning},
edition = {1st ed. 2020.},
isbn = {3-030-39568-5},
language = {eng},
publisher = {Springer International Publishing},
series = {Springer Series in the Data Sciences},
title = {First-order and Stochastic Optimization Methods for Machine Learning },
year = {2020},
}

@inproceedings{liu2021analyzing,
    title={Analyzing and Improving the Optimization Landscape of Noise-Contrastive Estimation},
    author={Bingbin Liu and Elan Rosenfeld and Pradeep Kumar Ravikumar and Andrej Risteski},
    booktitle={International Conference on Learning Representations},
    year={2022},
    url={https://openreview.net/forum?id=eBS-3YiaIL-}
}

@InProceedings{radford2021learning,
  title = 	 {Learning Transferable Visual Models From Natural Language Supervision},
  author =       {Radford, Alec and Kim, Jong Wook and Hallacy, Chris and Ramesh, Aditya and Goh, Gabriel and Agarwal, Sandhini and Sastry, Girish and Askell, Amanda and Mishkin, Pamela and Clark, Jack and Krueger, Gretchen and Sutskever, Ilya},
  booktitle = 	 {Proceedings of the 38th International Conference on Machine Learning},
  pages = 	 {8748--8763},
  year = 	 {2021},
  editor = 	 {Meila, Marina and Zhang, Tong},
  volume = 	 {139},
  series = 	 {Proceedings of Machine Learning Research},
  month = 	 {18--24 Jul},
  publisher =    {PMLR},
  url = 	 {https://proceedings.mlr.press/v139/radford21a.html}
}

@InProceedings{chen2020simple,
  title = 	 {A Simple Framework for Contrastive Learning of Visual Representations},
  author =       {Chen, Ting and Kornblith, Simon and Norouzi, Mohammad and Hinton, Geoffrey},
  booktitle = 	 {Proceedings of the 37th International Conference on Machine Learning},
  pages = 	 {1597--1607},
  year = 	 {2020},
  editor = 	 {III, Hal Daumé and Singh, Aarti},
  volume = 	 {119},
  series = 	 {Proceedings of Machine Learning Research},
  month = 	 {13--18 Jul},
  publisher =    {PMLR},
  url = 	 {https://proceedings.mlr.press/v119/chen20j.html}
}

@inproceedings{li2020tilted,
    title={Tilted Empirical Risk Minimization},
    author={Tian Li and Ahmad Beirami and Maziar Sanjabi and Virginia Smith},
    booktitle={International Conference on Learning Representations},
    year={2021},
    url={https://openreview.net/forum?id=K5YasWXZT3O}
}

@inproceedings{levy2020large,
 author = {Levy, Daniel and Carmon, Yair and Duchi, John and Sidford, Aaron},
 booktitle = {Advances in Neural Information Processing Systems},
 editor = {H. Larochelle and M. Ranzato and R. Hadsell and M.F. Balcan and H. Lin},
 pages = {8847--8860},
 publisher = {Curran Associates, Inc.},
 title = {Large-Scale Methods for Distributionally Robust Optimization},
 url = {https://proceedings.neurips.cc/paper_files/paper/2020/file/64986d86a17424eeac96b08a6d519059-Paper.pdf},
 volume = {33},
 year = {2020}
}

@inproceedings{wei2025neuclip,
title={Neu{CLIP}: Efficient Large-Scale {CLIP} Training with Neural Normalizer Optimization},
author={Xiyuan Wei and Chih-Jen Lin and Tianbao Yang},
booktitle={The Fourteenth International Conference on Learning Representations},
year={2026},
url={https://openreview.net/forum?id=WoMMSVZHfP}
}

@inproceedings{lin2025sampled,
 author = {Lin, Li-Chung and Liu, Yaxu and Lin, Chih-Jen},
 booktitle = {Advances in Neural Information Processing Systems},
 editor = {D. Belgrave and C. Zhang and H. Lin and R. Pascanu and P. Koniusz and M. Ghassemi and N. Chen},
 pages = {46757--46780},
 publisher = {Curran Associates, Inc.},
 title = {Sampled Estimators For Softmax Must Be Biased},
 url = {https://proceedings.neurips.cc/paper_files/paper/2025/file/42b7c2f6d320d1fe1afa899a6319d6d7-Paper-Conference.pdf},
 volume = {38},
 year = {2025}
}

@article{
qi2023stochastic,
title={Stochastic Constrained {DRO} with a Complexity Independent of Sample Size},
author={Qi Qi and Jiameng Lyu and Kung-Sik Chan and Er-Wei Bai and Tianbao Yang},
journal={Transactions on Machine Learning Research},
issn={2835-8856},
year={2023},
url={https://openreview.net/forum?id=VpaXrBFYZ9},
note={}
}

@article{
qi2023attentionalbiased,
title={Attentional-Biased Stochastic Gradient Descent},
author={Qi Qi and Yi Xu and Wotao Yin and Rong Jin and Tianbao Yang},
journal={Transactions on Machine Learning Research},
issn={2835-8856},
year={2023},
url={https://openreview.net/forum?id=B0WYWvVA2r},
note={}
}

@inproceedings{qi2021online,
 author = {Qi, Qi and Guo, Zhishuai and Xu, Yi and Jin, Rong and Yang, Tianbao},
 booktitle = {Advances in Neural Information Processing Systems},
 editor = {M. Ranzato and A. Beygelzimer and Y. Dauphin and P.S. Liang and J. Wortman Vaughan},
 pages = {10067--10080},
 publisher = {Curran Associates, Inc.},
 title = {An Online Method for A Class of Distributionally Robust Optimization with Non-convex Objectives},
 url = {https://proceedings.neurips.cc/paper_files/paper/2021/file/533fa796b43291fc61a9e812a50c3fb6-Paper.pdf},
 volume = {34},
 year = {2021}
}

@InProceedings{qiu2022large,
  title = 	 {Large-scale Stochastic Optimization of {NDCG} Surrogates for Deep Learning with Provable Convergence},
  author =       {Qiu, Zi-Hao and Hu, Quanqi and Zhong, Yongjian and Zhang, Lijun and Yang, Tianbao},
  booktitle = 	 {Proceedings of the 39th International Conference on Machine Learning},
  pages = 	 {18122--18152},
  year = 	 {2022},
  editor = 	 {Chaudhuri, Kamalika and Jegelka, Stefanie and Song, Le and Szepesvari, Csaba and Niu, Gang and Sabato, Sivan},
  volume = 	 {162},
  series = 	 {Proceedings of Machine Learning Research},
  month = 	 {17--23 Jul},
  publisher =    {PMLR},
  url = 	 {https://proceedings.mlr.press/v162/qiu22a.html}
}

@article{wei2024fastclip,
  title={Fastclip: A suite of optimization techniques to accelerate clip training with limited resources},
  author={Wei, Xiyuan and Ye, Fanjiang and Yonay, Ori and Chen, Xingyu and Sun, Baixi and Tao, Dingwen and Yang, Tianbao},
  journal={arXiv preprint arXiv:2407.01445},
  year={2024}
}

@InProceedings{qiu2023not,
  title = 	 {Not All Semantics are Created Equal: Contrastive Self-supervised Learning with Automatic Temperature Individualization},
  author =       {Qiu, Zi-Hao and Hu, Quanqi and Yuan, Zhuoning and Zhou, Denny and Zhang, Lijun and Yang, Tianbao},
  booktitle = 	 {Proceedings of the 40th International Conference on Machine Learning},
  pages = 	 {28389--28421},
  year = 	 {2023},
  editor = 	 {Krause, Andreas and Brunskill, Emma and Cho, Kyunghyun and Engelhardt, Barbara and Sabato, Sivan and Scarlett, Jonathan},
  volume = 	 {202},
  series = 	 {Proceedings of Machine Learning Research},
  month = 	 {23--29 Jul},
  publisher =    {PMLR},
  url = 	 {https://proceedings.mlr.press/v202/qiu23a.html}
}

@InProceedings{yuan2022provable,
  title = 	 {Provable Stochastic Optimization for Global Contrastive Learning: Small Batch Does Not Harm Performance},
  author =       {Yuan, Zhuoning and Wu, Yuexin and Qiu, Zi-Hao and Du, Xianzhi and Zhang, Lijun and Zhou, Denny and Yang, Tianbao},
  booktitle = 	 {Proceedings of the 39th International Conference on Machine Learning},
  pages = 	 {25760--25782},
  year = 	 {2022},
  editor = 	 {Chaudhuri, Kamalika and Jegelka, Stefanie and Song, Le and Szepesvari, Csaba and Niu, Gang and Sabato, Sivan},
  volume = 	 {162},
  series = 	 {Proceedings of Machine Learning Research},
  month = 	 {17--23 Jul},
  publisher =    {PMLR},
  url = 	 {https://proceedings.mlr.press/v162/yuan22b.html}
}

@InProceedings{wang2022finite,
  title = 	 {Finite-Sum Coupled Compositional Stochastic Optimization: Theory and Applications},
  author =       {Wang, Bokun and Yang, Tianbao},
  booktitle = 	 {Proceedings of the 39th International Conference on Machine Learning},
  pages = 	 {23292--23317},
  year = 	 {2022},
  editor = 	 {Chaudhuri, Kamalika and Jegelka, Stefanie and Song, Le and Szepesvari, Csaba and Niu, Gang and Sabato, Sivan},
  volume = 	 {162},
  series = 	 {Proceedings of Machine Learning Research},
  month = 	 {17--23 Jul},
  publisher =    {PMLR},
  url = 	 {https://proceedings.mlr.press/v162/wang22ak.html}
}

@article{wang2017stochastic,
  title={Stochastic compositional gradient descent: algorithms for minimizing compositions of expected-value functions},
  author={Wang, Mengdi and Fang, Ethan X and Liu, Han},
  journal={Mathematical Programming},
  volume={161},
  number={1},
  pages={419--449},
  year={2017},
  publisher={Springer}
}

@InProceedings{zhu2022auc,
  title = 	 {When {AUC} meets {DRO}: Optimizing Partial {AUC} for Deep Learning with Non-Convex Convergence Guarantee},
  author =       {Zhu, Dixian and Li, Gang and Wang, Bokun and Wu, Xiaodong and Yang, Tianbao},
  booktitle = 	 {Proceedings of the 39th International Conference on Machine Learning},
  pages = 	 {27548--27573},
  year = 	 {2022},
  editor = 	 {Chaudhuri, Kamalika and Jegelka, Stefanie and Song, Le and Szepesvari, Csaba and Niu, Gang and Sabato, Sivan},
  volume = 	 {162},
  series = 	 {Proceedings of Machine Learning Research},
  month = 	 {17--23 Jul},
  publisher =    {PMLR},
  url = 	 {https://proceedings.mlr.press/v162/zhu22g.html}
}

@Techreport{krizhevsky2009learning,
 author = {Krizhevsky, Alex},
 address = {Toronto, Ontario},
 institution = {University of Toronto},
 number = {0},
 publisher = {Technical report, University of Toronto},
 title = {Learning multiple layers of features from tiny images},
 year = {2009},
 title_with_no_special_chars = {Learning multiple layers of features from tiny images},
 url = {https://www.cs.toronto.edu/~kriz/learning-features-2009-TR.pdf}
}

@InProceedings{an2021partial,
    author    = {An, Xiang and Zhu, Xuhan and Gao, Yuan and Xiao, Yang and Zhao, Yongle and Feng, Ziyong and Wu, Lan and Qin, Bin and Zhang, Ming and Zhang, Debing and Fu, Ying},
    title     = {Partial FC: Training 10 Million Identities on a Single Machine},
    booktitle = {Proceedings of the IEEE/CVF International Conference on Computer Vision (ICCV) Workshops},
    month     = {October},
    year      = {2021},
    pages     = {1445-1449}
}

@InProceedings{he2016deep,
    author = {He, Kaiming and Zhang, Xiangyu and Ren, Shaoqing and Sun, Jian},
    title = {Deep Residual Learning for Image Recognition},
    booktitle = {Proceedings of the IEEE Conference on Computer Vision and Pattern Recognition (CVPR)},
    month = {June},
    year = {2016}
}

@article{ben-tal1986expected,
  title     = {Expected Utility, Penalty Functions, and Duality in Stochastic Nonlinear Programming},
  author    = {Ben-Tal, Aharon and Teboulle, Marc},
  journal   = {Management Science},
  volume    = {32},
  number    = {11},
  pages     = {1445--1466},
  year      = {1986},
  month     = nov,
  publisher = {INFORMS},
  doi       = {10.1287/mnsc.32.11.1445},
  url       = {https://doi.org/10.1287/mnsc.32.11.1445}
}

@article{fagan2018unbiased,
  title={Unbiased scalable softmax optimization},
  author={Fagan, Francois and Iyengar, Garud},
  journal={arXiv preprint arXiv:1803.08577},
  year={2018}
}

@article{gladin2025improved,
  title={Improved Stochastic Optimization of LogSumExp},
  author={Gladin, Egor and Kroshnin, Alexey and Zhu, Jia-Jie and Dvurechensky, Pavel},
  journal={arXiv preprint arXiv:2509.24894},
  year={2025}
}

@InProceedings{stevens2024bioclip,
    author    = {Stevens, Samuel and Wu, Jiaman and Thompson, Matthew J and Campolongo, Elizabeth G and Song, Chan Hee and Carlyn, David Edward and Dong, Li and Dahdul, Wasila M and Stewart, Charles and Berger-Wolf, Tanya and Chao, Wei-Lun and Su, Yu},
    title     = {BioCLIP: A Vision Foundation Model for the Tree of Life},
    booktitle = {Proceedings of the IEEE/CVF Conference on Computer Vision and Pattern Recognition (CVPR)},
    month     = {June},
    year      = {2024},
    pages     = {19412-19424}
}

@InProceedings{cherti2023reproducible,
    author    = {Cherti, Mehdi and Beaumont, Romain and Wightman, Ross and Wortsman, Mitchell and Ilharco, Gabriel and Gordon, Cade and Schuhmann, Christoph and Schmidt, Ludwig and Jitsev, Jenia},
    title     = {Reproducible Scaling Laws for Contrastive Language-Image Learning},
    booktitle = {Proceedings of the IEEE/CVF Conference on Computer Vision and Pattern Recognition (CVPR)},
    month     = {June},
    year      = {2023},
    pages     = {2818-2829}
}

@inproceedings{gadre2023datacomp,
 author = {Gadre, Samir Yitzhak and Ilharco, Gabriel and Fang, Alex and Hayase, Jonathan and Smyrnis, Georgios and Nguyen, Thao and Marten, Ryan and Wortsman, Mitchell and Ghosh, Dhruba and Zhang, Jieyu and Orgad, Eyal and Entezari, Rahim and Daras, Giannis and Pratt, Sarah and Ramanujan, Vivek and Bitton, Yonatan and Marathe, Kalyani and Mussmann, Stephen and Vencu, Richard and Cherti, Mehdi and Krishna, Ranjay and Koh, Pang Wei and Saukh, Olga and Ratner, Alexander J and Song, Shuran and Hajishirzi, Hannaneh and Farhadi, Ali and Beaumont, Romain and Oh, Sewoong and Dimakis, Alex and Jitsev, Jenia and Carmon, Yair and Shankar, Vaishaal and Schmidt, Ludwig},
 booktitle = {Advances in Neural Information Processing Systems},
 editor = {A. Oh and T. Naumann and A. Globerson and K. Saenko and M. Hardt and S. Levine},
 pages = {27092--27112},
 publisher = {Curran Associates, Inc.},
 title = {DataComp: In search of the next generation of multimodal datasets},
 url = {https://proceedings.neurips.cc/paper_files/paper/2023/file/56332d41d55ad7ad8024aac625881be7-Paper-Datasets_and_Benchmarks.pdf},
 volume = {36},
 year = {2023}
}

@inproceedings{deng2009imagenet,
    title={Imagenet: A large-scale hierarchical image database},
    author={Deng, Jia and Dong, Wei and Socher, Richard and Li, Li-Jia and Li, Kai and Fei-Fei, Li},
    booktitle={Proceedings of the IEEE/CVF Conference on Computer Vision and Pattern Recognition (CVPR)},
    pages={248--255},
    year={2009},
    organization={Ieee}
}

@inproceedings{wang2019learning,
    author = {Wang, Haohan and Ge, Songwei and Lipton, Zachary and Xing, Eric P},
    booktitle = {Advances in Neural Information Processing Systems},
    editor = {H. Wallach and H. Larochelle and A. Beygelzimer and F. d\textquotesingle Alch\'{e}-Buc and E. Fox and R. Garnett},
    pages = {},
    publisher = {Curran Associates, Inc.},
    title = {Learning Robust Global Representations by Penalizing Local Predictive Power},
    url = {https://proceedings.neurips.cc/paper_files/paper/2019/file/3eefceb8087e964f89c2d59e8a249915-Paper.pdf},
    volume = {32},
    year = {2019}
}

@InProceedings{recht2019imagenet,
  title = 	 {Do {I}mage{N}et Classifiers Generalize to {I}mage{N}et?},
  author =       {Recht, Benjamin and Roelofs, Rebecca and Schmidt, Ludwig and Shankar, Vaishaal},
  booktitle = 	 {Proceedings of the 36th International Conference on Machine Learning},
  pages = 	 {5389--5400},
  year = 	 {2019},
  editor = 	 {Chaudhuri, Kamalika and Salakhutdinov, Ruslan},
  volume = 	 {97},
  series = 	 {Proceedings of Machine Learning Research},
  month = 	 {09--15 Jun},
  publisher =    {PMLR},
  url = 	 {https://proceedings.mlr.press/v97/recht19a.html}
}

@InProceedings{hendrycks2021natural,
    author    = {Hendrycks, Dan and Zhao, Kevin and Basart, Steven and Steinhardt, Jacob and Song, Dawn},
    title     = {Natural Adversarial Examples},
    booktitle = {Proceedings of the IEEE/CVF Conference on Computer Vision and Pattern Recognition (CVPR)},
    month     = {June},
    year      = {2021},
    pages     = {15262-15271}
}

@InProceedings{hendrycks2021many,
    author    = {Hendrycks, Dan and Basart, Steven and Mu, Norman and Kadavath, Saurav and Wang, Frank and Dorundo, Evan and Desai, Rahul and Zhu, Tyler and Parajuli, Samyak and Guo, Mike and Song, Dawn and Steinhardt, Jacob and Gilmer, Justin},
    title     = {The Many Faces of Robustness: A Critical Analysis of Out-of-Distribution Generalization},
    booktitle = {Proceedings of the IEEE/CVF International Conference on Computer Vision (ICCV)},
    month     = {October},
    year      = {2021},
    pages     = {8340-8349}
}

@inproceedings{barhu2019objectnet,
    author = {Barbu, Andrei and Mayo, David and Alverio, Julian and Luo, William and Wang, Christopher and Gutfreund, Dan and Tenenbaum, Josh and Katz, Boris},
    booktitle = {Advances in Neural Information Processing Systems},
    editor = {H. Wallach and H. Larochelle and A. Beygelzimer and F. d\textquotesingle Alch\'{e}-Buc and E. Fox and R. Garnett},
    pages = {},
    publisher = {Curran Associates, Inc.},
    title = {ObjectNet: A large-scale bias-controlled dataset for pushing the limits of object recognition models},
    url = {https://proceedings.neurips.cc/paper_files/paper/2019/file/97af07a14cacba681feacf3012730892-Paper.pdf},
    volume = {32},
    year = {2019}
}

@article{young2014image,
    title={From image descriptions to visual denotations: New similarity metrics for semantic inference over event descriptions},
    author={Young, Peter and Lai, Alice and Hodosh, Micah and Hockenmaier, Julia},
    journal={Transactions of the Association for Computational Linguistics},
    volume={2},
    pages={67--78},
    year={2014},
    publisher={MIT Press One Rogers Street, Cambridge, MA 02142-1209, USA journals-info~…}
}

@article{chen2015microsoft,
    title={Microsoft coco captions: Data collection and evaluation server},
    author={Chen, Xinlei and Fang, Hao and Lin, Tsung-Yi and Vedantam, Ramakrishna and Gupta, Saurabh and Doll{\'a}r, Piotr and Zitnick, C Lawrence},
    journal={arXiv preprint arXiv:1504.00325},
    year={2015}
}

@inproceedings{loshchilov2018decoupled,
    title={Decoupled Weight Decay Regularization},
    author={Ilya Loshchilov and Frank Hutter},
    booktitle={International Conference on Learning Representations},
    year={2019},
    url={https://openreview.net/forum?id=Bkg6RiCqY7},
}

@inproceedings{fang2023data,
    title={Data Filtering Networks},
    author={Alex Fang and Albin Madappally Jose and Amit Jain and Ludwig Schmidt and Alexander T Toshev and Vaishaal Shankar},
    booktitle={The Twelfth International Conference on Learning Representations},
    year={2024},
    url={https://openreview.net/forum?id=KAk6ngZ09F}
}

@inproceedings{dosovitskiy2021image,
    title={An Image is Worth 16x16 Words: Transformers for Image Recognition at Scale},
    author={Alexey Dosovitskiy and Lucas Beyer and Alexander Kolesnikov and Dirk Weissenborn and Xiaohua Zhai and Thomas Unterthiner and Mostafa Dehghani and Matthias Minderer and Georg Heigold and Sylvain Gelly and Jakob Uszkoreit and Neil Houlsby},
    booktitle={International Conference on Learning Representations},
    year={2021},
    url={https://openreview.net/forum?id=YicbFdNTTy}
}

@inproceedings{song2020large,
    author = {Song, Liuyihan and Pan, Pan and Zhao, Kang and Yang, Hao and Chen, Yiming and Zhang, Yingya and Xu, Yinghui and Jin, Rong},
    title = {Large-Scale Training System for 100-Million Classification at Alibaba},
    year = {2020},
    isbn = {9781450379984},
    publisher = {Association for Computing Machinery},
    address = {New York, NY, USA},
    url = {https://doi.org/10.1145/3394486.3403342},
    doi = {10.1145/3394486.3403342},
    booktitle = {Proceedings of the 26th ACM SIGKDD International Conference on Knowledge Discovery \& Data Mining},
    pages = {2909–2930},
    numpages = {22},
    location = {Virtual Event, CA, USA},
    series = {KDD '20}
}
\bibliographystyle{icml2026}

\newpage
\appendix
\onecolumn

\section{Details of BSGD/ASGD/SCGD and Connections with SCENT}
\label[appendix]{app:bsgd_scent_connection}

In this section, we present details of existing methods for optimizing Log-E-Exp and CERM, and build the connection with the proposed algorithmic framework. 
For simplicity of exposition, we focus on the Log-E-Exp function, which corresponds to $n=1$ in CERM:
\begin{equation}\label{eqn:cerm_single}
    \min_{\w\in\W}F_{\mathrm{CERM}}(\w):=\log\left(\E_{\zeta}e^{s(\w; \zeta)}\right).
\end{equation}
For the moment, we just take $\W=\R^d$.  A naive idea one might consider is that, since the logarithm is a monotonic function, one could instead optimize $\E_{\zeta}e^{s(\w;\zeta)}$, to which standard stochastic optimization algorithms can be directly applied. This approach is ineffective, as it not only introduces numerical instability due to the exponential function, but also fails to extend to CERM settings with multiple components $(n>1)$.

The challenge of optimizing Log-E-Exp lies at computing the gradient: 
\[
\nabla F_{\mathrm{CERM}}(\w) = \frac{1}{\E_{\zeta}[e^{s(\w; \zeta)}]}\E_{\zeta}[e^{s(\w; \zeta)}\nabla s(\w; \zeta)], 
\]
which is prohibitive due to expectations in both the numerator and the denominator. 
Next, we present several algorithms that have been considered in the literature. 

\subsection{Biased SGD with Mini-batch Approximation.} A simple approach is to consider an approximation of Log-E-Exp using a mini-batch \(\mathcal{C}\): \(\log\left(\frac{1}{|\mathcal C|}\sum_{\zeta\in\mathcal C}e^{s(\w; \zeta)}\right)\). At the $t$-th iteration, $\w_t$ is updated by
\begin{equation}\label{eqn:bsgd}
\w_{t+1} = \w_t - \eta_t \sum_{\zeta\in\mathcal C_t}\frac{e^{s(\w_t; \zeta)}}{ \sum_{\zeta'\in\mathcal C_t}e^{s(\w_t; \zeta')}}\nabla s(\w_t; \zeta).
\end{equation}
{\bf Limitation:} However, since the gradient estimator is a biased estimation of $\nabla F_{\mathrm{CERM}}(\w_t)$, this method does not converge if the size of $\mathcal C_t$ is small, i.e., it requires a large batch size to ensure convergence of convex objective.

\subsection{Alternating SGD for Solving the Dual Reformulation.} One way to avoid the biased gradient estimation is to cast the Log-E-Exp problem into an equivalent minimization form:
\begin{equation}    \label{eqn:dual}
\log\left(\E_{\zeta}e^{s(\w; \zeta)}\right) = \min_{\nu}\E_{\zeta}[e^{s(\w;\zeta)-\nu} + \nu - 1].
\end{equation}

Then, the original optimization problem~\eqref{eqn:cerm_single} is transformed into a min-min optimization: 
\begin{equation*}
\min_{\w, \nu} F(\w, \nu): = \E_{\zeta}[e^{s(\w;\zeta)-\nu} + \nu],
\end{equation*}
where we ignore the constant $-1$ in the objective. A benefit of this reformulation is that unbiased stochastic gradient of $\w$ and $\nu$ can be easily computed so that standard SGD can be applied to update them. Below, we present a variant using alternating updates. Given $(\w_t, \nu_{t-1})$,  we first update $\nu_{t}$ by a SGD step, and then update $\w_{t+1}$ given $\nu_t$ by another SGD step:
\begin{align*}
&\nu_{t}= \nu_{t-1}- \alpha'_t [1-e^{s(\w_t;\zeta_t)-\nu_{t-1}}],\\
&\w_{t+1}= \w_t - \eta_t e^{s(\w_t;\zeta_t')-\nu_t} \nabla s(\w_t; \zeta_t'),
\end{align*}
where $\zeta_t, \zeta'_t$ are independent random variables. 

{\bf Limitation:} Although simple in design, this algorithm suffers from severe numerical instability issues and converges slowly in practice.

\subsection{Stochastic Compositional Gradient Descent (SCGD) for Compositional Optimization.} Another perspective is to view the original problem~\eqref{eqn:cerm_single} as an instance of stochastic compositional optimization: 
\[
\min_{\w\in\W} f(g(\w)),
\]
where $f(\cdot)=\log(\cdot)$ and $g(\w) = \E_\zeta[e^{s(\w; \zeta)}]$. Various studies have considered this problem and proposed different algorithms. We consider a basic algorithm called SCGD, which has the following update: 
\begin{equation}\label{eqn:scgd}
\begin{aligned}
&u_{t} = (1-\gamma_t) u_{t-1} + \gamma_t e^{s(\w_t; \zeta_t)}\\
&\w_{t+1} = \w_t - \eta_t\frac{ e^{s(\w_t; \zeta'_t)}}{u_t} \nabla s(\w_t; \zeta'_t),
\end{aligned}
\end{equation}
where $\gamma_t\in(0,1)$,  $u_t$ is a moving-average estimator of the inner function $g(\w_t)$ and the update of $\w_{t+1}$ uses a stochastic gradient estimator $\nabla f(u_t)\nabla \ea{s(\w_t;\zeta'_t)}$.

{\bf Limitation:} While SCGD and its variants have been successfully applied to the optimization of the Log-E-Exp function~\cite{wang2017stochastic}, the existing convergence rate of SCGD for convex problems is known to be worse than that of standard SGD. In particular, the result in~\citet{wang2022finite} has a rate of $O(1/T^{1/4})$ for convex problems, which is slower than the typical rate of $O(1/\sqrt{T})$. The algorithm presented above can be extended to optimizing the CERM problem~(\ref{eqn:cerm}), though it suffers from the same issues (please see Corollary~\ref{cor:conv-sent:2}).

\subsection{Understanding BSGD/SCGD in the Framework of SCENT}

Indeed, we can show that BSGD and SCGD can be viewed as SCENT (\Cref{alg:scent}) with specific choices of the learning rate \(\alpha_{t}\).

{\bf BSGD corresponds to $\alpha_t=\infty$.} Let us first consider the SPMD update in~(\ref{eqn:SCENT-nu}) with a mini-batch of inner samples $\mathcal C_t$, i.e., 
\begin{align*}
\nu_{t} = \argmin_\nu \frac{1}{|\mathcal C_t|}\sum_{\zeta'\in\mathcal C_t}\Phi(\w_t, \nu; \zeta') + \frac{1}{\alpha_t}D_{\varphi}(\nu, \nu_{t-1}). 
\end{align*}
Similar to~(\ref{eqn:enu}), we can show that the solution to the above problem satisfies 
\begin{align*}
    e^{\nu_t} = \frac{1}{1+\alpha_te^{\nu_{t-1}}}e^{\nu_{t-1}} + \frac{\alpha_te^{\nu_{t-1}}} {1+\alpha_te^{\nu_{t-1}}}\frac{1}{|\mathcal C_t|}\sum_{\zeta'\in\mathcal C_t} e^{s(\w_t; \zeta')}.
\end{align*}
As a result, if $\alpha_t=\infty$, then $e^{\nu_t} = \frac{1}{|\mathcal C_t|}\sum_{\zeta'\in\mathcal C_t} e^{s(\w_t; \zeta')}$. Then the update of $\w_t$ in~(\ref{eqn:w-up}), if using the same sample $\zeta'_t\in \mathcal C_t$ and ignoring the projection, becomes:
\[
\w_{t+1}= \w_t - \frac{\eta_{t}}{|\mathcal C_t|}\sum_{\zeta\in\mathcal C_t}e^{s(\w_t; \zeta) - \nu_t}\nabla s(\w_t; \zeta),
\]
which is exactly the BSGD update~(\ref{eqn:bsgd}). From this perspective, we see that BSGD does not have a mechanism to account for the noise in the stochastic estimators, which is the major reason why BSGD does not ensure convergence if the batch size of $\mathcal C_t$ is small. 

{\bf SCGD corresponds to $\alpha_t = \gamma'_t e^{-\nu_t}.$} If we set  $\alpha_t = \gamma'_t e^{-\nu_t}$, then the SPMD update in~(\ref{eqn:enu}) becomes:
\begin{align*}
    e^{\nu_t} = \frac{1}{1+\gamma'_t}e^{\nu_{t-1}} + \frac{\gamma'_t} {1+\gamma'_t}e^{s(\w_t; \zeta_t)}.
\end{align*}
Using a variable change $u_t = e^{\nu_t}$, the above update is equivalent to 
\begin{align*}
    u_t = \frac{1}{1+\gamma'_t}u_{t-1} + \frac{\gamma'_t} {1+\gamma'_t}e^{s(\w_t; \zeta_t)},
\end{align*}
which is exactly the SCGD update~(\ref{eqn:scgd}) with $\gamma_t = \frac{\gamma'_t} {1+\gamma'_t}$. From this perspective, our analysis of SCENT can yield a faster convergence rate of SCGD for minimizing the Log-E-Exp function, as discussed in the next section. 

{\bf SOX for solving the CERM problem.} The benefit of SCENT is better understood by considering the extension of SCGD for solving the CERM problem, which was proposed and analyzed by~\citet{wang2022finite}. The algorithm is known as SOX and the update is given by:
\begin{align*}
&u_{i,t} = \left\{\begin{array}{lc}(1-\gamma_t)u_{i,t-1} + \gamma_t e^{s(\w_t; \zeta_{i,t})}& i\in\B_t\\ u_{i,t-1}& i\notin\B_t\end{array} \right.\\
&\z_t =  \frac{1}{|\B_t|}\sum_{i\in\B_t}\frac{e^{s_i(\w_t;\zeta'_{i,t})}}{u_{i,t}} \nabla s_i(\w_t; \zeta'_{i,t}),\\
&\w_{t+1} = \Pi_{\W}[\w_t - \eta_t \z_t].
\end{align*}
A similar connection with our framework can be established. In particular, if we change the global step size $\alpha_t$ in~(\ref{eqn:SCENT-nu-2}) to coordinate-dependent step sizes $\alpha_{t,i}=\gamma'_te^{-\nu_{i,t-1}}$, then the update of $\nu_{i,t}$ in~(\ref{eqn:SCENT-nu-2}) becomes
\begin{align*}
    e^{\nu_{i,t}} = \frac{1}{1+\gamma'_t}e^{\nu_{i, t-1}} + \frac{\gamma'_t} {1+\gamma'_t}e^{s_{i}(\w_t; \zeta_{i,t})},
\end{align*}
which is  equivalent to the $u_{i,t}$ update above with change of variable. 

\section{Convergence Analysis of SCENT for Solving the Log-E-Exp Problem (CERM with \(n= 1\))}
\label{app:convergence_cerm_n1}

In this section, we present the results of solving a special case of CERM~\eqref{eqn:cerm} when \(n = 1\):
\begin{align}\label{eqn:log-e-exp}
\min_{\w}F_{\mathrm{CERM}}(\w) =  \log\left(\E_{\zeta}[e^{s(\w; \zeta)}]\right).
\end{align}
The problem is also known as Log-E-Exp, a more general form of the Log-Sum-Exp function, where the middle “E” denotes an expectation and highlights the associated computational challenges. The min-min reformulation of Log-E-Exp is
\begin{equation}\label{eqn:log-e-exp-mm}
\begin{aligned}
&\min_{\w} \min_{\nu}F(\w,\nu) =\E_{\zeta}[\ea{s(\w;\zeta) - \nu}]  + \nu,
\end{aligned}
\end{equation}
where we ignored the constant $-1$ in the objective; see \eqref{eqn:dual}.
The SCENT algorithm for this case is presented in \Cref{alg:sent}.

\begin{algorithm}[tb]
    \caption{The SCENT Algorithm for Solving  Log-E-Exp~\eqref{eqn:log-e-exp-mm}}
    \label{alg:sent}
    \begin{algorithmic}[1]
        \STATE Initialize $\w_1,\nu_0$, step sizes $\eta_t$ and $\alpha_t$, $\varphi(\nu)=e^{-\nu}$.
        \FOR{$t=1\dotsc,T-1$}
            \STATE Sample $\zeta_t, \zeta'_t$
            \STATE Update $\nu_t = \argmin_{\nu} \ea{s(\w_t;\zeta_t) - \nu} + \nu + \frac{1}{\alpha_t}D_\varphi(\nu, \nu_{t-1})$
            \STATE Compute $\v_t =\ea{s(\w_t;\zeta'_t) - \nu_t}\nabla s(\w_t;\zeta'_t)$ 
            \STATE Update $\w_{t+1} = \Pi_{\W}[\w_t - \eta_t\v_t]$ 
        \ENDFOR
    \end{algorithmic}
\end{algorithm}

\subsection{Properties of Log-E-Exp and SCENT}
\label{sec:spmd-update-nu}

In this section, we will introduce some basic properties of the Log-E-Exp problem and the SCENT algorithm. One useful property of the problem is its joint convexity in \(\w\) and \(\bnu\) when \(s_{i}(\cdot;\zeta)\) is convex.
\begin{lemma}   \label[lemma]{lem:conv-F-oce}
$F(\w,\nu)$ is jointly convex in terms of $(\w^\top,\nu)^\top$ if $s(\cdot;\zeta)$ is convex \(\forall \;\zeta\).  
\end{lemma}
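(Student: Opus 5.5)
The plan is to prove convexity pointwise in $\zeta$ and then pass to the expectation. Since $F(\w,\nu)=\E_{\zeta}[\Phi(\w,\nu;\zeta)]$ with $\Phi(\w,\nu;\zeta)=e^{s(\w;\zeta)-\nu}+\nu$, and nonnegative averages of convex functions are convex, it suffices to show that for each fixed $\zeta$ the map $(\w,\nu)\mapsto e^{s(\w;\zeta)-\nu}$ is jointly convex. The linear term $\nu$ is trivially convex.

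For the exponential term, view it as a composition $h(\w,\nu)=\exp(q(\w,\nu))$ with $q(\w,\nu)=s(\w;\zeta)-\nu$. The inner function $q$ is jointly convex in $(\w^\top,\nu)^\top$: it is the sum of $s(\cdot;\zeta)$, which is convex in $\w$ and constant in $\nu$, and the linear function $-\nu$. The outer function $\exp$ is convex and nondecreasing on $\R$. The standard composition rule therefore gives convexity of $h$.

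To make this self-contained, I would verify it directly from the definition. For $\lambda\in[0,1]$ and points $(\w_1,\nu_1)$, $(\w_2,\nu_2)$, convexity of $s$ gives
\[
q(\lambda\w_1+(1-\lambda)\w_2,\lambda\nu_1+(1-\lambda)\nu_2)\le \lambda q(\w_1,\nu_1)+(1-\lambda)q(\w_2,\nu_2).
\]
Applying the monotonicity of $\exp$ and then its convexity yields
\[
h(\lambda\w_1+(1-\lambda)\w_2,\lambda\nu_1+(1-\lambda)\nu_2)\le \lambda h(\w_1,\nu_1)+(1-\lambda)h(\w_2,\nu_2).
\]
Taking expectations over $\zeta$ preserves this inequality by linearity and monotonicity of expectation. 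Adding the linear term $\nu$ concludes joint convexity of $F$.

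There is no real obstacle here. The only point needing care is that the expectation is finite, so the inequality is not of the form $\infty\le\infty$. This holds on the domain of interest under \Cref{ass:cerm}(ii), which gives bounded $s$, and in general one can allow extended-valued convex functions. Note also that convexity of $e^{s-\nu}$ in $\nu$ alone is not enough. The joint argument through the composition rule is what couples $\w$ and $\nu$, and it uses that $s$ enters additively with $-\nu$ inside a monotone convex outer function.
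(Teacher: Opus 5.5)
Your proposal is correct and takes essentially the same route as the paper. The paper also shows pointwise joint convexity of $\Phi(\w,\nu;\zeta)$ from the definition, using convexity of $s(\cdot;\zeta)$, then monotonicity and convexity of $\exp$, and passes to the expectation. Your remark that the expectation should be finite (guaranteed by bounded $s$) is a small point of care the paper leaves implicit.
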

\begin{proof}
Let \(\Phi(\w, \nu; \zeta)= \ea{s(\w;\zeta) - \nu}  + \nu\). We prove that $\Phi(\w, \nu; \zeta)$ is jointly convex in terms of $(\w^\top,\nu)^\top$. Then the convexity of $F(\w, \nu)$ follows. Let $\u =(\w^\top,\nu)^\top$. Consider $\u_1,\u_2$, $\alpha \in [0,1]$, and $\bar{\u}= (\bar{\w}^\top,\bar{\nu})^{\top} = \alpha\u_1 + (1-\alpha)\u_2$.
If $s(\cdot;\zeta)$ is convex, we have $s(\bar{\w};\zeta) \leq \alpha s(\w_1;\zeta) + (1-\alpha) s(\w_2;\zeta)$. Since the exponential function is non-decreasing, we have
\begin{align*}
&\exp(s(\bar{\w};\zeta) - \bar{\nu}) \leq \exp(\alpha(s(\w_1;\zeta) -\nu_{1} ) + (1-\alpha) (s(\w_2;\zeta) - \nu_{2} )).
\end{align*}
Since the exponential function is convex, we further have 
\begin{align*}
&\exp(\alpha(s(\w_1;\zeta) -\nu_{1} ) + (1-\alpha) (s(\w_2;\zeta) -\nu_{2} )) \\
&\leq \alpha \exp(s(\w_1;\zeta) - \nu_{1}) + (1-\alpha) \exp(s(\w_2;\zeta) - \nu_{2}).
\end{align*}
Thus, $\Phi(\u;\zeta)$ is convex in terms of $\u$ because
\begin{align*}
\Phi(\bar{\u};\zeta) \leq \alpha \Phi(\u_1;\zeta) + (1-\alpha) \Phi(\u_2;\zeta).
\end{align*}
Then we complete the proof.
\end{proof}

An advantage of the proximal mirror descent update of \(\nu\) in SCENT, as shown in Lemma~\ref{lem:spmd-update-nu}, is that it admits a closed-form solution. Here we present its proof.
\begin{proof}[Proof of Lemma~\ref{lem:spmd-update-nu}]
From \eqref{eqn:breg} we have
\begin{equation*}
 \frac{\partial}{\partial \nu} D_\varphi(\nu, \nu_{t-1})
 = -\varphi(\nu) - \varphi'(\nu_{t-1}).
\end{equation*}
With $\varphi(\nu) = e^{-\nu}$, we compute the gradient of the problem~(\ref{eqn:SCENT-nu}) and set it to zero for computing the optimal solution $\nu_t$, i.e., 
\begin{equation*}
  - \ea{s(\w_t;\zeta_t)- \nu_t} + 1 + \frac{1}{\alpha_t}( -\ea{-\nu_t}  + \ea{-\nu_{t-1}}) = 0,
\end{equation*}
which is
\begin{equation}    \label{eqn:enu:3}
    -\left(\ea{s(\w_t;\zeta_t)}+ \frac{1}{\alpha_{t}}\right) \ea{-\nu_{t}}+ 1+ \frac{1}{\alpha_{t}}\ea{-\nu_{t- 1}}= 0.
\end{equation}
Rearranging the terms, we get
\begin{equation*}
\begin{split}
      \ea{\nu_t} & =  \frac{\ea{s(\w_t;\zeta_t)} + 1/\alpha_t}
    {1 + e^{-\nu_{t-1}}/\alpha_t}\\
    & = \frac{\ea{\nu_{t-1}}  + \alpha_t \ea{\nu_{t-1}} \ea{s(\w_t;\zeta_t)}}
    {1 + \alpha_t \ea{\nu_{t-1}}},
\end{split}
\end{equation*}
which leads to \eqref{eqn:enu}. This completes the proof.
\end{proof}

Moreover, we also have the following update of \(\ea{-\nu_{t}}\).
\begin{lemma}   \label[lemma]{lem:spmd-update-exp-neg-nu}
    Let \(\expnegnu_{t}= \ea{-\nu_{t}}\). If $\nu_t$ follows the update of~(\ref{eqn:SCENT-nu}) with a Bregman divergence defined in~(\ref{eqn:breg}), we have
    \begin{equation*}
        \expnegnu_{t} = \frac{\expnegnu_{t- 1}+\alpha_t}{1 + \alpha_t\ea{s(\w_t; \zeta_t)}}.
    \end{equation*}
\end{lemma}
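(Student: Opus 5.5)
The plan is to derive the identity directly from the closed-form update in Lemma~\ref{lem:spmd-update-nu}, by taking reciprocals. The only care needed is to avoid dividing by quantities that might vanish, and here all of them are strictly positive.

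Starting from \eqref{eqn:enu}, I will factor the common term $e^{\nu_{t-1}}/(1+\alpha_t e^{\nu_{t-1}})$ and write
\[
e^{\nu_t} = \frac{e^{\nu_{t-1}}\bigl(1+\alpha_t e^{s(\w_t;\zeta_t)}\bigr)}{1+\alpha_t e^{\nu_{t-1}}}.
\]
Every factor on the right is strictly positive, since $\alpha_t>0$ and exponentials are positive. Taking reciprocals therefore gives
\[
\expnegnu_t = e^{-\nu_t} = \frac{e^{-\nu_{t-1}}\bigl(1+\alpha_t e^{\nu_{t-1}}\bigr)}{1+\alpha_t e^{s(\w_t;\zeta_t)}}.
\]
Expanding the numerator, $e^{-\nu_{t-1}}(1+\alpha_t e^{\nu_{t-1}}) = \expnegnu_{t-1}+\alpha_t$, which is exactly the claimed expression.

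As a cross-check, the same identity can be obtained from the first-order optimality condition \eqref{eqn:enu:3}, which is linear in $e^{-\nu_t}$. Multiplying that condition by $\alpha_t$ gives
\[
\bigl(1+\alpha_t e^{s(\w_t;\zeta_t)}\bigr)\expnegnu_t = \alpha_t + \expnegnu_{t-1},
\]
and dividing by the positive factor on the left yields the same formula.

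No step here is a real obstacle; the computation is routine once Lemma~\ref{lem:spmd-update-nu} is available.
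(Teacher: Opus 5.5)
Your proof is correct and is essentially the paper's argument. The paper simply rearranges the stationarity condition \eqref{eqn:enu:3}, which is exactly your cross-check; your primary route of taking reciprocals in \eqref{eqn:enu} is equivalent, since that closed form was itself obtained from \eqref{eqn:enu:3}.
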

\begin{proof}
    From~\eqref{eqn:enu:3} and rearranging the terms, we can immediately get the desired result.
\end{proof}

We can show that the following terms are bounded with the update of \(\nu\) in SCENT for the Log-E-Exp problem.
\begin{lemma}\label[lemma]{lem:var-grad-w-nu}
Under Assumption~\ref{ass:cerm} (ii), if $\nu_0\in[c_0, c_1]$, then $\nu_t\in[c_0, c_1], \forall t$. If in addition Assumption~\ref{ass:cerm} (iii) holds, and let
\begin{align*}
&\sigma_t^2:= \E_{\zeta'_t}[\|\ea{s(\w_t; \zeta'_t) - \nu_t}\nabla s(\w_t; \zeta'_t)\|_2^2], \\
&\delta_t^2:= \E_{\zeta_t}[e^{-\nu_{t-1}}|\ea{s(\w_t;\zeta_t)} - \E_{\zeta}[\ea{s(\w_t;\zeta)}]|^2], 
\end{align*}
then \(\sigma_{t}, \delta_{t}\) are finite \(\forall t\).
\end{lemma}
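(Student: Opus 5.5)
The plan is to argue by induction on $t$, using the closed-form update of Lemma~\ref{lem:spmd-update-nu}. The key observation is that \eqref{eqn:enu} writes $e^{\nu_t}$ as a convex combination of $e^{\nu_{t-1}}$ and $e^{s(\w_t;\zeta_t)}$, with weights
\[
\frac{1}{1+\alpha_t e^{\nu_{t-1}}}\quad\text{and}\quad \frac{\alpha_t e^{\nu_{t-1}}}{1+\alpha_t e^{\nu_{t-1}}}.
\]
Both weights are nonnegative and sum to one, since $\alpha_t>0$.

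For the induction step, suppose $\nu_{t-1}\in[c_0,c_1]$, so that $e^{\nu_{t-1}}\in[e^{c_0},e^{c_1}]$. By Assumption~\ref{ass:cerm}(ii), $s(\w_t;\zeta_t)\in[c_0,c_1]$, so $e^{s(\w_t;\zeta_t)}\in[e^{c_0},e^{c_1}]$ as well. This uses that $\w_t\in\W$, which holds because of the projection step. An interval is convex, so $e^{\nu_t}\in[e^{c_0},e^{c_1}]$, and monotonicity of $\log$ gives $\nu_t\in[c_0,c_1]$. The base case is the hypothesis $\nu_0\in[c_0,c_1]$. The minimizer in \eqref{eqn:SCENT-nu} exists and is unique because the objective is strictly convex and coercive, and the closed form confirms this. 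One remark for the $n\ge1$ version (Lemma~\ref{lem:bounded-nu}): coordinates $i\notin\B_t$ are simply copied, so the same induction applies coordinatewise.

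For finiteness, first bound $\sigma_t^2$. Since $s(\w_t;\zeta'_t)\le c_1$ and $\nu_t\ge c_0$, we have $e^{2(s-\nu_t)}\le e^{2(c_1-c_0)}$. Therefore
\[
\sigma_t^2\le e^{2(c_1-c_0)}\,\E\|\nabla s(\w_t;\zeta'_t)\|_2^2\le e^{2(c_1-c_0)}G^2
\]
by Assumption~\ref{ass:cerm}(iii).

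Next bound $\delta_t^2$. Both $e^{s(\w_t;\zeta_t)}$ and its mean lie in $[e^{c_0},e^{c_1}]$, so the squared deviation is at most $(e^{c_1}-e^{c_0})^2$. Together with $e^{-\nu_{t-1}}\le e^{-c_0}$, this gives
\[
\delta_t^2\le e^{-c_0}(e^{c_1}-e^{c_0})^2<\infty.
\]

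There is no real obstacle here. The only point needing care is recognizing the convex-combination structure of \eqref{eqn:enu}, and noting that the assumption bounds $s$ only on $\W$, which the iterates $\w_t$ respect.
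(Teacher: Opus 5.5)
Your proposal is correct and follows the paper's proof. Like the paper, you argue by induction on $t$, using that \eqref{eqn:enu} writes $e^{\nu_t}$ as a convex combination of $e^{\nu_{t-1}}$ and $e^{s(\w_t;\zeta_t)}$, both lying in $[e^{c_0},e^{c_1}]$, and then get finiteness of $\sigma_t,\delta_t$ from these two-sided bounds plus Assumption~\ref{ass:cerm}(iii). Your explicit constants $\sigma_t^2\le e^{2(c_1-c_0)}G^2$ and $\delta_t^2\le e^{-c_0}(e^{c_1}-e^{c_0})^2$ make quantitative what the paper only states qualitatively. As in the paper, the $t=1$ step also implicitly requires $\w_1\in\W$.
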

\begin{proof}
The proof of this lemma is by induction.
It is trivial that $\nu_0 \in [c_0, c_1]$. If the result holds for $v_{t-1}$, then $e^{\nu_{t-1}} \in [e^{c_0}, e^{c_1}]$. Assumption~\ref{ass:cerm} (ii) implies that $\ea{s(\w_t;\zeta_t)}\in [e^{c_0}, e^{c_1}]$ as well.  As $e^{\nu_t}$ in \eqref{eqn:enu} is a convex combination of $e^{\nu_{t-1}}$ and $\ea{s(\w_t; \zeta_t)}$, we have $e^{\nu_t} \in [e^{c_0}, e^{c_1}]$. Thus, $\nu_t \in [c_0, c_1]$. Then we know $\sigma_t, \delta_t$ are finite because $e^{\nu_t}, e^{\nu_{t-1}}$ and $\eb{s(\w_t; \zeta_t)}$ are upper and lower bounded, and \(\E_{\zeta'_t}[\|\nabla s(\w_t; \zeta'_t)\|_2^2]\) is upper bounded from Assumption~\ref{ass:cerm} (iii). This completes the proof.
\end{proof}

\subsection{Convergence Analysis of SCENT}

In order to prove the convergence of SCENT for solving the Log-E-Exp problem, we need the following three lemmas.
\begin{lemma}\label[lemma]{lem:bias-point}
Under Assumption~\ref{ass:cerm}, if $\alpha_t\leq \rho e^{-\nu_{t-1}}$, then we have
\begin{equation*}
|\E[(\nabla_{\nu} F(\w_t, \nu_t)^{\top}(\nu_{t} - \nu_*) - \nabla_{\nu} \Phi(\w_t, \nu_t; \zeta_t)^{\top}(\nu_{t} - \nu_*)]|\leq \alpha_t \delta_t^2 (1+\rho)(1+c_1 - c_0).
\end{equation*}
\end{lemma}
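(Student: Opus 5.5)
The plan is a decoupling argument: replace the random $\nu_t$ by a point that does not depend on $\zeta_t$, then pay for the replacement with a Lipschitz bound in the variable $\expnegnu=e^{-\nu}$. All expectations are first taken conditionally on $\mathcal F_{t}$, the history up to the choice of $\w_t$ and $\nu_{t-1}$, so that $\w_t,\nu_{t-1}$ are fixed.

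Write $z_t=e^{s(\w_t;\zeta_t)}$ and $m_t=\E_{\zeta}[e^{s(\w_t;\zeta)}]$. Since $\nabla_\nu F(\w_t,\nu)=1-m_te^{-\nu}$ and $\nabla_\nu\Phi(\w_t,\nu;\zeta_t)=1-z_te^{-\nu}$, the quantity to bound is
\[
\E\bigl[(z_t-m_t)\,h(\nu_t)\bigr],\qquad h(\nu):=e^{-\nu}(\nu-\nu_*).
\]
This is not zero because $\nu_t$ depends on $\zeta_t$.

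\textbf{Step 1 (deterministic comparison point).} Define $\tilde\nu_t$ by the same SPMD step with $z_t$ replaced by its mean $m_t$. By Lemma~\ref{lem:spmd-update-exp-neg-nu}, $\tilde\expnegnu_t:=e^{-\tilde\nu_t}=(\expnegnu_{t-1}+\alpha_t)/(1+\alpha_t m_t)$. This point is $\mathcal F_t$-measurable, so $\E[(z_t-m_t)h(\tilde\nu_t)\mid\mathcal F_t]=0$. Hence
\[
\bigl|\E[(z_t-m_t)h(\nu_t)]\bigr| = \bigl|\E[(z_t-m_t)(h(\nu_t)-h(\tilde\nu_t))]\bigr| \le \E\bigl[|z_t-m_t|\,|h(\nu_t)-h(\tilde\nu_t)|\bigr].
\]

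\textbf{Step 2 (Lipschitz bound in $\expnegnu$).} Rewrite $h$ as a function of $\expnegnu=e^{-\nu}$, namely $\tilde h(\expnegnu)=\expnegnu(-\log\expnegnu-\nu_*)$. Its derivative is $\tilde h'(\expnegnu)=\nu-\nu_*-1$.
\begin{itemize}
\item By Lemma~\ref{lem:var-grad-w-nu}, both $\nu_t$ and $\tilde\nu_t$ lie in $[c_0,c_1]$; for $\tilde\nu_t$ this holds because $e^{\tilde\nu_t}$ is a convex combination of $e^{\nu_{t-1}}$ and $m_t$, both in $[e^{c_0},e^{c_1}]$.
\item Also $\nu_*=\log m\in[c_0,c_1]$.
\end{itemize}
So on the segment between $\expnegnu_t$ and $\tilde\expnegnu_t$ we have $|\tilde h'|\le 1+c_1-c_0$, and therefore $|h(\nu_t)-h(\tilde\nu_t)|\le(1+c_1-c_0)|\expnegnu_t-\tilde\expnegnu_t|$.

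\textbf{Step 3 (perturbation of the closed-form update).} From the two closed forms,
\[
|\expnegnu_t-\tilde\expnegnu_t| = \frac{(\expnegnu_{t-1}+\alpha_t)\,\alpha_t\,|m_t-z_t|}{(1+\alpha_tz_t)(1+\alpha_tm_t)} \le \alpha_t(\expnegnu_{t-1}+\alpha_t)|z_t-m_t| \le \alpha_t(1+\rho)\,e^{-\nu_{t-1}}|z_t-m_t|.
\]
The last inequality uses the hypothesis $\alpha_t\le\rho e^{-\nu_{t-1}}$.

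\textbf{Combining.} Chaining Steps 1–3 gives
\[
\bigl|\E[(z_t-m_t)h(\nu_t)]\bigr| \le \alpha_t(1+\rho)(1+c_1-c_0)\,\E\bigl[e^{-\nu_{t-1}}|z_t-m_t|^2\bigr] = \alpha_t(1+\rho)(1+c_1-c_0)\,\delta_t^2 .
\]
Taking total expectation then gives the claim.

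The main obstacle is Step 1, the choice of decoupling point. Comparing against $\nu_{t-1}$ instead of $\tilde\nu_t$ would leave an $O(1)$ rather than $O(\alpha_t)$ difference. Working in the $\expnegnu$ variable is what makes both the Lipschitz constant and the perturbation explicit without exponential factors.
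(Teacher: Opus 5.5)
Your proof is correct and is essentially the paper's argument. Both reduce the bias to $\E[(z_t-m_t)\,h_t(z_t)]$ with $h_t(z)=e^{-\nu_t(z)}(\nu_t(z)-\nu_*)$, and bound it by the Lipschitz constant $\alpha_t(1+\rho)(1+c_1-c_0)e^{-\nu_{t-1}}$ of $z\mapsto h_t(z)$ times the conditional variance. The only difference is the decoupling step: the paper symmetrizes with an independent copy $z'$, writing $\E[(m_t-z)h_t(z)]=\tfrac12\E[(z'-z)(h_t(z)-h_t(z'))]$, and differentiates $h_t$ in $z$ directly, whereas you subtract the value at the deterministic point $z=m_t$ and factor the Lipschitz bound through $\pi=e^{-\nu}$. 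Both routes give exactly the same constant. One small notational slip: here $\nu_*=\log\E_\zeta[e^{s(\w_*;\zeta)}]$ rather than ``$\log m$''; it still lies in $[c_0,c_1]$ by Assumption~\ref{ass:cerm}(ii), so nothing changes.
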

\begin{proof}
In the following proof, $\mathcal F_{t-1}$ denotes the filtration (i.e., the ``information available") up to iteration  $t-1$.
Define $z_t =  e^{s(\w_t;\zeta_t)}$, $m_t = \E_{\zeta}[e^{s(\w_t;\zeta)}|\mathcal F_{t-1}]$, and $\expnegnu_t = e^{-\nu_t}$. Since $\nu_t$ depends on $z_t$, we define the following random functions:
\begin{align*}
&\expnegnu_t(z)=\frac{\expnegnu_{t-1}+\alpha_t}{\alpha_t z+1},\quad \nu_t(z) = -\log \expnegnu_t(z)\\
&h_t(z)= e^{-\nu_{t}(z)}\big(\nu_{t}(z)- \nu_*\big).
\end{align*}
According to \Cref{lem:spmd-update-exp-neg-nu}, we have $\expnegnu_t= \expnegnu_t(z_t), \nu_t = \nu_t(z_{t})$, and thus \(h_t(z_{t})= \expnegnu_t(z_{t})(\nu_t(z_{t})-\nu_*)\). For the target, we have
\begin{align}
\E[(\nabla_{\nu} \Phi(\w_t, \nu_t; \zeta_t)-\nabla_{\nu} F(\w_t, \nu_t))^{\top}(\nu_{t} - \nu_*) \mid \mathcal F_{t-1}]& =\E[(\E_{\zeta}[e^{s(\w_t;\zeta)}]-e^{s(\w_t;\zeta_t)})e^{-\nu_{t}}\big(\nu_{t}-\nu_*\big)\mid \mathcal F_{t-1}] \notag \\
& =\E[(m_t- z_t)h_t(z_t)\mid \mathcal F_{t-1}] = \E_z[(m_t- z)h_t(z)|\mathcal F_{t-1}]. \label{eqn:target-v}
\end{align}
Let  $z$ and $z'$ two independent variables so that $\E[z|\mathcal F_{t-1}]=\E[z'|\mathcal F_{t-1}] =m_t$.
Using the  conditional independence,
\begin{align*}
&\E\big[(m_t-z)h_t(z)\mid \mathcal{F}_{t-1}\big]
=\E\big[(z'-z)h_t(z)\mid \mathcal{F}_{t-1}\big].
\end{align*}
By the exchangeability of $(z,z')$ conditioned on $\mathcal{F}_{t-1}$,
\[
\E\big[(z'-z)h_t(z')\mid \mathcal{F}_{t-1}\big]
=-\,\E\big[(z'-z)h_t(z)\mid \mathcal{F}_{t-1}\big].
\]
Combining the above two equations, we get
\begin{align}\label{eqn:varz}
\E\big[(m_t-z)h_t(z)\mid \mathcal{F}_{t-1}\big]
=\frac12\,\E\big[(z'-z)\big(h_t(z)-h_t(z')\big)\mid \mathcal{F}_{t-1}\big].
\end{align}

Next, we show that $h(z)$ is Lipschitz continuous. By definition,
\[
\expnegnu_t(z)=\frac{\expnegnu_{t-1}+\alpha_t}{\alpha_t z+1}, \quad h_t(z)= \expnegnu_t(z)\big(\nu_t(z)-\nu_*\big).
\]
Differentiating \(\pi_{t}(z)\) with respect to $z$, we get
\[
\frac{d\expnegnu_t(z)}{dz}
= (\expnegnu_{t-1}+\alpha_t)\,\frac{d}{dz}\bigl((\alpha_t z+1)^{-1}\bigr)
= -\frac{\alpha_t(\expnegnu_{t-1}+\alpha_t)}{(\alpha_t z+1)^2}.
\]
Using the equation $\expnegnu_t(z)(\alpha_t z+1)=\expnegnu_{t-1}+\alpha_t$, we can rewrite the above equality as
\[
\frac{d\expnegnu_t(z)}{dz}
=-\,\frac{\alpha_t \expnegnu_t(z)}{\alpha_t z+1}.
\]
Since $\nu_t(z)=-\log \expnegnu_t(z)$, we have
\[
\frac{d\nu_t(z)}{dz}
=-\frac{1}{\expnegnu_t(z)}\frac{d\expnegnu_t(z)}{dz}
=\frac{\alpha_t}{\alpha_t z+1}.
\]
As a result,
\[
\frac{dh_t(z)}{dz}
=\frac{d\expnegnu_t(z)}{dz}\big(\nu_t(z)-\nu_*\big)
+\expnegnu_t(z)\frac{d\nu_t(z)}{dz}
=\frac{\alpha_t \expnegnu_t(z)}{\alpha_t z+1}\,\bigl(1-(\nu_t(z)-\nu_*)\bigr).
\]
From Assumption~\ref{ass:cerm} (ii), we have
\begin{equation*}
    \E_{\zeta}[e^{s(\w_{*};\zeta)}]\in [e^{c_0}, e^{c_1}], \quad \nu_{*}= \log \E_{\zeta}[e^{s(\w_{*};\zeta)}]\in [c_{0}, c_{1}].
\end{equation*}
Since $\nu_t(z)\in[c_0, c_1]$ as well, we get
\[
\bigl|1-(\nu_t(z)-\nu_*)\bigr|
\le 1+c_1-c_0.
\]
Since $\expnegnu_t(z)=\frac{\expnegnu_{t-1}+\alpha_t}{\alpha_t z+1}\leq \expnegnu_{t-1} + \alpha_t\leq (1+\rho)\expnegnu_{t-1}$, we have
\[
\left|\frac{dh_t}{dz}\right|
\le \alpha_t \expnegnu_{t-1}(1+\rho)(1+c_1 -c_0),
\]
which means $h_t$ is $L_t$-Lipschitz with \(L_t\leq \alpha_t \expnegnu_{t-1}(1+\rho)(1+c_1 - c_0)\). Then we have
\begin{equation*}
    \big|(z'-z)\big(h_t(z)-h_t(z')\big)\big| \le L_t\,(z'-z)^2 \leq C\alpha_t \expnegnu_{t-1}(z'-z)^2,
\end{equation*}
where \(C= (1+\rho)(1+c_1 - c_0)\).
Thus,
\begin{align*}
\E\bigg[\big|(z'-z)\big(h_t(z)-h_t(z')\big)\mid \mathcal F_{t-1}\bigg]\leq& C\alpha_t \E[\expnegnu_{t-1}(z'-z)^2\mid \mathcal F_{t-1}]\\
\leq& C\alpha_t \cdot2\E[\expnegnu_{t-1}(z - \E[z])^2 \mid \mathcal F_{t-1}]= 2C\alpha_t \delta_t^2,
\end{align*}
where the last step uses the second equality in Lemma~\ref{lem:var-grad-w-nu}. Applying the above result to~(\ref{eqn:varz}), we have
\[
\Bigl|\E\big[(m_t-z)h_t(z)\mid \mathcal{F}_{t-1}\big]\Bigr|
= \frac{1}{2}\E\bigg[\big|(z'-z)\big(h_t(z)-h_t(z')\big)\big|\mid \mathcal F_{t-1}\bigg] \leq 
C\alpha_t \delta_t^2.
\]
By noting~(\ref{eqn:target-v}) and the definition of \(C= (1+\rho)(1+c_1 - c_0)\), we finish the proof. 
\end{proof}

The following lemma characterizes the change when we update \(\nu_{t+ 1}\) from \(\nu_{t}\).
\begin{lemma}\label[lemma]{lem:nu-s-step}
Let \(\Phi(\w, \nu; \zeta)=\ea{s(\w;\zeta) - \nu} + \nu\), and consider the update of \(\nu_{t}\):
\begin{equation*}
    \nu_{t} = \argmin_{\nu} \alpha_t \Phi(\w_t, \nu; \zeta_t) + D_{\varphi}(\nu, \nu_{t-1}).
\end{equation*}
Then we have
\begin{align*}
& \alpha_t \nabla_{\nu} \Phi(\w_t, \nu_t; \zeta_t)^{\top}(\nu_{t} - \nu_*)\leq  D_\varphi(\nu_*, \nu_{t-1}) - D_\varphi(\nu_*,\nu_{t} ) -  D_{\varphi}(\nu_{t}, \nu_{t-1}).
\end{align*}
\end{lemma}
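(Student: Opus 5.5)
This is the standard three-point inequality for proximal mirror descent, specialized to the objective $\alpha_t\Phi(\w_t,\cdot;\zeta_t)+D_\varphi(\cdot,\nu_{t-1})$. The plan is to write down the first-order optimality condition of the (strictly convex, one-dimensional) subproblem and then invoke the three-point identity for Bregman divergences.

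\emph{Step 1 (optimality).} Since $\varphi(\nu)=e^{-\nu}$ is strictly convex and $\Phi(\w_t,\cdot;\zeta_t)$ is convex in $\nu$, the minimizer $\nu_t$ exists and is unique; by Lemma~\ref{lem:spmd-update-nu} it is given in closed form. It is an unconstrained minimizer over $\R$, so the gradient vanishes:
\[
\alpha_t\nabla_\nu\Phi(\w_t,\nu_t;\zeta_t)+\varphi'(\nu_t)-\varphi'(\nu_{t-1})=0.
\]
Multiplying by $(\nu_t-\nu_*)$ gives
\[
\alpha_t\nabla_\nu\Phi(\w_t,\nu_t;\zeta_t)(\nu_t-\nu_*)=\bigl(\varphi'(\nu_{t-1})-\varphi'(\nu_t)\bigr)(\nu_t-\nu_*).
\]

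\emph{Step 2 (three-point identity).} Expanding $D_\varphi(a,b)=\varphi(a)-\varphi(b)-\varphi'(b)(a-b)$ directly verifies
\[
D_\varphi(\nu_*,\nu_{t-1})-D_\varphi(\nu_*,\nu_t)-D_\varphi(\nu_t,\nu_{t-1})=\bigl(\varphi'(\nu_{t-1})-\varphi'(\nu_t)\bigr)(\nu_t-\nu_*).
\]
In this expansion the $\varphi(\nu_*)$, $\varphi(\nu_t)$ and $\varphi(\nu_{t-1})$ terms cancel, and the linear terms regroup. Combining with Step 1 yields the claim, in fact with equality.

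The only point needing care is sign bookkeeping in the three-point identity. As a remark, if one prefers not to rely on differentiability and exact stationarity, one can instead use the strong-convexity-type inequality of the prox objective relative to $D_\varphi$: $\alpha_t\Phi$ is convex, so $\alpha_t\Phi(\nu_*)+D_\varphi(\nu_*,\nu_{t-1})\ge \alpha_t\Phi(\nu_t)+D_\varphi(\nu_t,\nu_{t-1})+D_\varphi(\nu_*,\nu_t)$. This variant yields a function-value version rather than the gradient version, so the direct optimality argument above is the one I would use.
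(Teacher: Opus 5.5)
Your proof is correct and matches the paper's argument: the stationarity condition of the prox subproblem, multiplied by $(\nu_t-\nu_*)$, combined with the three-point identity for $D_\varphi$, which in fact gives the claim with equality. One small point: strict convexity gives uniqueness, but existence of the unconstrained minimizer comes from the closed form (or from coercivity of the prox objective), which you correctly cite.
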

\begin{proof}
Recall the definition
\begin{equation*}
    \varphi(\nu) = e^{-\nu}, \quad  D_{\varphi}(a, b) = \varphi(a) - \varphi(b) - \langle \nabla \varphi(b), a - b \rangle.
\end{equation*}
The first-order optimality of \(\nu_{t}\) gives
\begin{equation*}
    \alpha_t\nabla_{\nu} \Phi(\w_t, \nu_t; \zeta_t) + \nabla \varphi(\nu_{t}) - \nabla \varphi(\nu_{t-1}) = 0.
\end{equation*}
Taking inner product with $(\nu_{t} - \nu_*)$ and rearranging the terms, we get
\begin{equation}    \label{eqn:nu-s-step}
    \alpha_t \nabla_{\nu} \Phi(\w_t, \nu_t; \zeta_t)^{\top}(\nu_{t} - \nu_*)= (\nabla \varphi(\nu_{t-1}) - \nabla \varphi(\nu_{t}))^{\top}(\nu_{t} - \nu_*).
\end{equation}
We have
\begin{align*}
    &D_\varphi(\nu_{*}, \nu_{t})- D_\varphi(\nu_{*}, \nu_{t- 1}) \\
    =& -\varphi(\nu_{t})- \nabla \varphi(\nu_{t})^{\top}(\nu_{*}- \nu_{t})+ \varphi(\nu_{t- 1})+  \nabla \varphi(\nu_{t- 1})^{\top}(\nu_{*}- \nu_{t- 1}) \\
    =& (\nabla \varphi(\nu_{t})- \nabla \varphi(\nu_{t- 1}))^{\top}(\nu_{t}- \nu_{*}) - \varphi(\nu_{t})+ \varphi(\nu_{t- 1})+ \nabla \varphi(\nu_{t- 1})^{\top}(\nu_{t}- \nu_{t- 1})\\
    =& (\nabla \varphi(\nu_{t})- \nabla \varphi(\nu_{t- 1}))^{\top}(\nu_{t}- \nu_{*})- D_{\varphi}(\nu_{t}, \nu_{t- 1}).
\end{align*}
Rearranging the terms, we get
\begin{equation}    \label{eqn:tri-smd}
    (\nabla \varphi(\nu_{t- 1})- \nabla \varphi(\nu_{t}))^{\top}(\nu_{t}- \nu_{*})= D_{\varphi}(\nu_*, \nu_{t-1}) - D_{\varphi}(\nu_*, \nu_{t}) - D_{\varphi}(\nu_{t}, \nu_{t-1}).
\end{equation}
Combining \eqref{eqn:nu-s-step} and \eqref{eqn:tri-smd} completes the proof.
\end{proof}

The following lemma characterizes the change when we update \(\w_{t+ 1}\) from \(\w_{t}\).
\begin{lemma}\label[lemma]{lem:one-step-SGD-ns}
Under Assumption~\ref{ass:cerm} (ii), let \(\Phi(\w, \nu; \zeta)=\ea{s(\w;\zeta) - \nu} + \nu\) and \(\sigma_t^2:= \E_{\zeta'_t}[\|\nabla_{\w} \Phi(\w, \nu; \zeta)\|_2^2]\), and consider the update of \(\w_{t+ 1}\):
\begin{equation*}
    \w_{t+ 1}= \Pi_{\W}[\w_{t}- \eta_{t}\nabla_{\w} \Phi(\w_t, \nu_t; \zeta_t')].
\end{equation*}
Then we have
\begin{equation*}
    \E[\nabla_{\w} F(\w_t, \nu_{t})^{\top}(\w_{t} - \w_*)]\leq \E\left[\frac{1}{2\eta_t}\|\w_* - \w_t\|_2^2 -  \frac{1}{2\eta_t}\|\w_* - \w_{t+1}\|_2^2\right] +\frac{\eta_t}{2}\sigma_{t}^2.
\end{equation*}
\end{lemma}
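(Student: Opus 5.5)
This is the standard projected-SGD one-step argument. The only care needed is that $\nu_t$ depends on $\zeta_t$ but not on $\zeta'_t$.

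\textbf{Step 1: pathwise inequality.} Write $\v_t=\nabla_{\w}\Phi(\w_t,\nu_t;\zeta'_t)=e^{s(\w_t;\zeta'_t)-\nu_t}\nabla s(\w_t;\zeta'_t)$. Since $\w_*\in\W$, the projection $\Pi_{\W}$ is nonexpansive, so
\[
\|\w_{t+1}-\w_*\|_2^2\le\|\w_t-\eta_t\v_t-\w_*\|_2^2
=\|\w_t-\w_*\|_2^2-2\eta_t\v_t^{\top}(\w_t-\w_*)+\eta_t^2\|\v_t\|_2^2 .
\]
Rearranging and dividing by $2\eta_t$ gives, for every realization,
\[
\v_t^{\top}(\w_t-\w_*)\le\frac{1}{2\eta_t}\|\w_t-\w_*\|_2^2-\frac{1}{2\eta_t}\|\w_{t+1}-\w_*\|_2^2+\frac{\eta_t}{2}\|\v_t\|_2^2 .
\]

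\textbf{Step 2: conditional expectations.} Condition on $\mathcal G_t$, the $\sigma$-algebra generated by $\mathcal F_{t-1}$ and $\zeta_t$. Then $\w_t$ and $\nu_t$ are $\mathcal G_t$-measurable, while $\zeta'_t$ is independent of $\mathcal G_t$. This gives two facts.
\begin{itemize}
\item $\E[\v_t\mid\mathcal G_t]=\E_{\zeta}[e^{s(\w_t;\zeta)-\nu_t}\nabla s(\w_t;\zeta)]=\nabla_{\w}F(\w_t,\nu_t)$. Exchanging gradient and expectation is legitimate because $s\in[c_0,c_1]$ and the bounded second moment of $\nabla s$ from Assumption~\ref{ass:cerm} give domination.
\item $\E[\|\v_t\|_2^2\mid\mathcal G_t]=\sigma_t^2$, by the definition of $\sigma_t^2$ (an expectation over $\zeta'_t$ with $\w_t,\nu_t$ fixed).
\end{itemize}
Lemma~\ref{lem:var-grad-w-nu} shows $\sigma_t^2$ is finite, so all these expectations exist.

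\textbf{Step 3: conclude.} Take $\E[\cdot\mid\mathcal G_t]$ of the Step 1 inequality, then the total expectation. This yields the claimed bound, with $\frac{\eta_t}{2}\sigma_t^2$ inside the expectation, or outside it if $\sigma_t^2$ is read as its expected value.

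The only delicate point is the independence structure in Step 2. The estimator is unbiased for $\nabla_{\w}F(\w_t,\nu_t)$ evaluated at the random $\nu_t$ only because a fresh sample $\zeta'_t$ is used. That is also why the lemma is stated at $\nu_t$ rather than at some fixed $\nu$.
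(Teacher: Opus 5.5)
Your proof is correct. Its skeleton matches the paper's: a pathwise one-step inequality for the projected step, then unbiasedness of $\v_t$ from the fresh sample $\zeta'_t$. The difference is in how you derive the pathwise inequality.

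The paper writes the projection as a proximal step with $r=1_{\W}$. It combines first-order optimality with convexity of $r$ to get the three-point inequality $\v_t^{\top}(\w_{t+1}-\w_*)\le\frac{1}{2\eta_t}\bigl(\|\w_t-\w_*\|_2^2-\|\w_{t+1}-\w_*\|_2^2-\|\w_t-\w_{t+1}\|_2^2\bigr)$. It then bounds the leftover cross term $\v_t^{\top}(\w_t-\w_{t+1})$ by Young's inequality, which exactly cancels the $-\frac{1}{2\eta_t}\|\w_t-\w_{t+1}\|_2^2$ term.

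Your nonexpansiveness argument reaches the same inequality in one line and is more elementary. The paper's proximal template is the one that extends to a general convex regularizer or a Bregman proximity term, and it mirrors the argument used for the SPMD step in Lemma~\ref{lem:nu-s-step}.

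In the expectation step you are more careful than the paper, which only invokes independence of $\w_t$ and $\zeta'_t$. Unbiasedness actually requires $(\w_t,\nu_t)$ to be independent of $\zeta'_t$, and your conditioning on $\mathcal{G}_t$ makes this explicit. You also rightly note that $\sigma_t^2$ is random; the paper itself places it inside the expectation in Theorem~\ref{thm:conv-sent}.

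One minor imprecision concerns your domination argument. Assumption~\ref{ass:cerm}(iii) bounds $\E\|\nabla s\|_2^2$ only at the iterates, and (ii) holds only on $\W$. Together they do not give a dominating function on a neighborhood of $\w_t$. A cleaner justification uses convexity: the difference quotients of $e^{s(\cdot;\zeta)-\nu_t}$ along the feasible direction $\w_*-\w_t$ are monotone and bounded above by an integrable function thanks to (ii). Monotone convergence then identifies the one directional derivative the argument needs. The paper simply takes $\nabla_{\w}F=\E[\nabla_{\w}\Phi]$ for granted, so this is not a gap relative to it.
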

\begin{proof}
First we note that a finite \(\sigma_{t}^{2}\) is well-defined from Assumption~\ref{ass:cerm} (ii) and Lemma~\ref{lem:var-grad-w-nu}.
Note that the update of \(\w_{t+ 1}\) is equivalent to
\begin{equation*}
    \w_{t+1} = \argmin_{\w} \nabla_{\w} \Phi(\w_t, \nu_t; \zeta_t')^{\top}(\w - \w_t) + \frac{1}{2\eta_t}\|\w - \w_t\|_2^2+ r(\w),
\end{equation*}
where
\begin{equation*}
    r(\w)=1_{\W}(\w)=
    \begin{cases}
        0, &\textrm{if } \w\in \W, \\
        +\infty, &\textrm{otherwise}.
    \end{cases}
\end{equation*}
By the first-order optimality condition, for any $\w$ we have
\begin{equation*}
(\nabla_{\w} \Phi(\w_t, \nu_t; \zeta_t')+ \partial r(\w_{t+1}) + \frac{1}{\eta_t}(\w_{t+1} - \w_t))^{\top}(\w - \w_{t+1})\geq 0.
\end{equation*}
By the convexity of $r$, we have
\begin{align*}
r(\w_{t+1})\leq r(\w) + \partial r(\w_{t+1})^{\top}(\w_{t+1}- \w).
\end{align*}
Combining the above two inequalities, we have
\begin{align*}
\nabla_{\w} \Phi(\w_t, \nu_t; \zeta_t')^{\top}(\w_{t+1} - \w) + r(\w_{t+1}) - r(\w)\leq& \frac{1}{\eta_t}(\w_t - \w_{t+1})^{\top}(\w_{t+1} - \w)\\
=&\frac{1}{2\eta_t}(\|\w_t- \w\|_2^2 - \|\w_{t+1} - \w\|_2^2 - \|\w_t - \w_{t+1}\|_2^2),
\end{align*}
where the last equality uses the fact that $2(a-b)^{\top}(b-c) = \|a-c\|_2^2 - \|a-b\|_2^2 - \|b - c\|_2^2$.
When \(\w= \w_{*}\), we have \(\w_{t+ 1}, \w_{*}\in \W\), and thus \(r(\w_{t+ 1})= r(\w_{*})= 0\). Rearranging the terms, we get
\begin{align*}
    &\nabla_{\w} \Phi(\w_t, \nu_t; \zeta_t')^{\top}(\w_{t} - \w_*) \\
    \leq& \frac{1}{2\eta_t}\|\w_* - \w_t\|_2^2 -  \frac{1}{2\eta_t}\|\w_* - \w_{t+1}\|_2^2 -  \frac{1}{2\eta_t}\|\w_{t+1} - \w_t\|_2^2 +\nabla_{\w} \Phi(\w_t, \nu_t; \zeta_t')^{\top}(\w_{t+1} - \w_t) \\
    \leq& \frac{1}{2\eta_t}\|\w_* - \w_t\|_2^2 -  \frac{1}{2\eta_t}\|\w_* - \w_{t+1}\|_2^2 -  \frac{1}{2\eta_t}\|\w_{t+1} - \w_t\|_2^2 +\frac{\eta_t}{2}\|\nabla_{\w} \Phi(\w_t, \nu_t; \zeta_t')\|_2^2 + \frac{1}{2\eta_t}\|\w_{t+1} - \w_t\|_2^2,
\end{align*}
where the last inequality uses the Young's inequality. Taking expectation on both sides, and recalling the definition of \(\sigma_{t}^{2}\), we have
\begin{equation*}
    \E[\nabla_{\w} \Phi(\w_t, \nu_t; \zeta_t')^{\top}(\w_{t} - \w_*)]\leq \E\left[\frac{1}{2\eta_t}\|\w_* - \w_t\|_2^2 -  \frac{1}{2\eta_t}\|\w - \w_{t+1}\|_2^2\right] +\frac{\eta_t}{2}\sigma_{t}^2. 
\end{equation*}
Since $\w_t$ is independent of $\zeta_t'$, we have $\E[\nabla_{\w} \Phi(\w_t, \nu_t; \zeta_t')^{\top}(\w_{t} - \w_*)]=\E[\nabla_{\w} F(\w_t, \nu_{t})^{\top}(\w_{t} - \w_*)]$. Thus we get
\begin{align*}
    \E[\nabla_{\w} F(\w_t, \nu_{t})^{\top}(\w_{t} - \w_*)]=& \E[\nabla_{\w} \Phi(\w_t, \nu_t; \zeta_t')^{\top}(\w_{t} - \w_*)]\\
    \leq& \E\left[\frac{1}{2\eta_t}\|\w_* - \w_t\|_2^2 -  \frac{1}{2\eta_t}\|\w_* - \w_{t+1}\|_2^2\right] +\frac{\eta_t}{2}\sigma_{t}^2. 
\end{align*}
Then we complete the proof.
\end{proof}

Now we are ready to prove the convergence of SCENT.
\begin{theorem} \label{thm:conv-sent}
    Under \Cref{ass:cerm}, let  $\eta_t=\eta \alpha_t$, $\alpha_t<\rho e^{-\nu_{t-1}}$. Then SCENT guarantees that 
\begin{align*}
\E\left[\sum_{t=1}^T\alpha_t(F(\w_t,\nu_t) - F(\w_*, \nu_*))\right]\leq  \frac{1}{2\eta}\|\w_1- \w_*\|_2^2 +  D_\varphi(\nu_*, \nu_{0}) +\E\left[\sum_{t=1}^T\frac{\eta\alpha_t^2\sigma_t^2}{2} + \sum_{t=1}^TC\alpha_t^2\delta_t^2\right]. 
\end{align*}
\end{theorem}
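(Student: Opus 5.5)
The plan is to combine the three preceding lemmas through the joint convexity of Lemma~\ref{lem:conv-F-oce}. Since $F$ is jointly convex in $(\w,\nu)$ and differentiable, for every $t$
\begin{equation*}
F(\w_t,\nu_t) - F(\w_*,\nu_*) \le \nabla_{\w} F(\w_t,\nu_t)^{\top}(\w_t-\w_*) + \nabla_{\nu} F(\w_t,\nu_t)(\nu_t-\nu_*).
\end{equation*}
I will multiply by $\alpha_t$, take expectations and sum over $t$. This splits the task into bounding a primal linearized regret and a dual linearized regret separately. A useful feature of the $n=1$ case is that $\nu_t$ is not random once we condition on $\zeta'_t$, because $\zeta'_t$ is independent of $\nu_t$. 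Hence Lemma~\ref{lem:one-step-SGD-ns} applies directly at the point $(\w_t,\nu_t)$.

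\textbf{Primal part.} I multiply Lemma~\ref{lem:one-step-SGD-ns} by $\alpha_t$ and use $\eta_t=\eta\alpha_t$, so that $\alpha_t/(2\eta_t)=1/(2\eta)$ is a constant. This gives
\begin{equation*}
\E[\alpha_t\nabla_{\w} F(\w_t,\nu_t)^{\top}(\w_t-\w_*)] \le \frac{1}{2\eta}\E\bigl[\|\w_t-\w_*\|_2^2-\|\w_{t+1}-\w_*\|_2^2\bigr] + \frac{\eta\alpha_t^2\sigma_t^2}{2}.
\end{equation*}
Summing over $t$ telescopes the distance terms to at most $\frac{1}{2\eta}\|\w_1-\w_*\|_2^2$. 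The choice of a single time scale, $\eta_t\propto\alpha_t$, is exactly what makes this telescoping work without any extra boundedness assumption on the domain.

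\textbf{Dual part.} I first write
\begin{equation*}
\alpha_t\nabla_\nu F(\w_t,\nu_t)(\nu_t-\nu_*) = \alpha_t\nabla_\nu\Phi(\w_t,\nu_t;\zeta_t)(\nu_t-\nu_*) + \alpha_t\bigl(\nabla_\nu F-\nabla_\nu\Phi\bigr)(\nu_t-\nu_*).
\end{equation*}
For the first term, Lemma~\ref{lem:nu-s-step} gives the bound $D_\varphi(\nu_*,\nu_{t-1})-D_\varphi(\nu_*,\nu_t)$, after dropping the nonpositive term $-D_\varphi(\nu_t,\nu_{t-1})$. For the second term, Lemma~\ref{lem:bias-point} applies under the condition $\alpha_t\le\rho e^{-\nu_{t-1}}$. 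One point needs care: $\alpha_t$ must be treated as $\mathcal F_{t-1}$-measurable, so that it can be pulled inside the conditional expectation. After that, the bias contributes at most $C\alpha_t^2\delta_t^2$. Summing over $t$ telescopes the Bregman terms to at most $D_\varphi(\nu_*,\nu_0)$, since $D_\varphi\ge 0$.

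\textbf{Combining.} Adding the primal and dual bounds and invoking convexity yields the stated inequality, with $\E\bigl[\sum_t(\eta\alpha_t^2\sigma_t^2/2 + C\alpha_t^2\delta_t^2)\bigr]$ on the right-hand side.

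\textbf{Main obstacle.} The delicate step is the dual bias term. Its difficulty is that $\nu_t$ depends on $\zeta_t$, and that difficulty is already absorbed by Lemma~\ref{lem:bias-point}. What remains is the measurability bookkeeping: $\alpha_t$, $\nu_{t-1}$ and $\w_t$ must all be $\mathcal F_{t-1}$-measurable, and $\delta_t^2$ must be read as a conditional quantity that sits inside the outer expectation. I would check this carefully and otherwise expect the assembly to be routine.
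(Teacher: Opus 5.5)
Your proposal is correct and follows the paper's proof essentially step for step. It uses the primal bound from Lemma~\ref{lem:one-step-SGD-ns} with $\eta_t=\eta\alpha_t$, the dual bound from Lemma~\ref{lem:nu-s-step} together with the bias control of Lemma~\ref{lem:bias-point}, joint convexity from Lemma~\ref{lem:conv-F-oce}, and telescoping of both the distance terms and the Bregman terms. The measurability point you flag is bookkeeping the paper leaves implicit: $\alpha_t$, and hence $\eta_t$, must be $\mathcal F_{t-1}$-measurable so it can be moved inside the conditional expectations. Your unbiasedness remark for the primal step is better phrased as follows: conditioned on $(\mathcal F_{t-1},\zeta_t)$, $\nu_t$ is fixed while $\zeta'_t$ is still a fresh independent sample.
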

\begin{proof}
Since $\eta_t = \eta \alpha_t$, from \Cref{lem:one-step-SGD-ns}, we obtain
\begin{equation*}
\E[\alpha_t\nabla_{\w} F(\w_t, \nu_t)^{\top}(\w_{t} - \w_*)]\leq \E\left[\frac{1}{2\eta}\|\w_t- \w_{*}\|_2^2 - \frac{1}{2\eta}\|\w_{t+1} - \w_*\|_2^2 + \frac{\eta\alpha_t^2\sigma_t^2}{2}\right].
\end{equation*}
Combining the above inequality with \Cref{lem:bias-point,lem:nu-s-step}, we get
\begin{align}
&\E[\alpha_t(\nabla_{\w} F(\w_t, \nu_t)^{\top}(\w_{t} - \w_*)+\nabla_{\nu} F(\w_t, \nu_t)^{\top}(\nu_{t} - \nu_*))] \notag \\
\leq&  \E\left[\frac{1}{2\eta}\|\w_t- \w_{*}\|_2^2 - \frac{1}{2\eta}\|\w_{t+1} - \w_*\|_2^2+  D_\varphi(\nu_*, \nu_{t-1}) - D_\varphi(\nu_*,\nu_{t} ) \right]+\E\bigg[\frac{\eta\alpha_t^2\sigma_t^2}{2} + C\alpha_t^2\delta_t^2\bigg]. \label{eqn:conv-sent:one-step}
\end{align}
By the joint convexity of $F(\w, \nu)$ from Lemma~\ref{lem:conv-F-oce}, we have
\begin{equation}    \label{eqn:conv-sent:joint}
    \alpha_t(F(\w_t,\nu_t) - F(\w_*, \nu_*))\leq \alpha_t(\nabla_{\w} F(\w_t, \nu_t)^{\top}(\w_{t} - \w_*)+\nabla_{\nu} F(\w_t, \nu_t)^{\top}(\nu_{t} - \nu_*)).
\end{equation}
Combining \eqref{eqn:conv-sent:one-step} and \eqref{eqn:conv-sent:joint} and summing over $t=1,\ldots, T$, we have
\begin{equation*}
\E\left[\sum_{t=1}^T\alpha_t(F(\w_t,\nu_t) - F(\w_*, \nu_*))\right]\leq  \frac{1}{2\eta}\|\w_1- \w_*\|_2^2 +  D_\varphi(\nu_*, \nu_{0}) +\E\left[\sum_{t=1}^T\frac{\eta\alpha_t^2\sigma_t^2}{2} + \sum_{t=1}^TC\alpha_t^2\delta_t^2\right].
\end{equation*}
Then we complete the proof.
\end{proof}


Next we present a corollary of Theorem~\ref{thm:conv-sent} with a specific choice of learning rate for \(\nu\), which leads to the SCGD algorithm.
\begin{corollary}   \label{cor:conv-sent:2}
   Under \Cref{ass:cerm}, let $\eta_t=\eta \alpha_t$, $\alpha_t=\frac{\alpha e^{-\nu_{t-1}}}{\sqrt{T}}$. If $\frac{1}{T}\sum_{t=1}^Te^{-\nu_{t-1}}\geq S$ almost surely, then SCENT guarantees that
\begin{align*}
&\E\left[F_{\mathrm{CERM}}(\hat\w_T) - F_{\mathrm{CERM}}(\w_*)\right] \leq  \frac{\frac{1}{2\eta}\|\w_1- \w_*\|_2^2 +  D_\varphi(\nu_*, \nu_{0})}{\alpha \sqrt{T}S}+\frac{\alpha\bar V}{\sqrt{T}S}. 
\end{align*}
where $\hat\w_T=\frac{\sum_t\alpha_t\w_t}{\sum_{t=1}^T\alpha_t}$ and
\[
\bar V=\E\left[\frac{\eta \sum_{t=1}^Te^{-2\nu_{t-1}}\sigma_t^2}{2T} + \frac{\sum_{t=1}^TCe^{-2\nu_{t-1}}\delta_t^2}{T}\right].
\]
\end{corollary}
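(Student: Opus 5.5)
The plan is to apply Theorem~\ref{thm:conv-sent} directly with the specific step size $\alpha_t=\alpha e^{-\nu_{t-1}}/\sqrt{T}$. First I will check its hypothesis $\alpha_t<\rho e^{-\nu_{t-1}}$. This holds for any $\rho>\alpha/\sqrt{T}$, so the constant $C=(1+\rho)(1+c_1-c_0)$ remains a fixed constant. Note that $\alpha_t$ is $\mathcal F_{t-1}$-measurable, because it depends only on $\nu_{t-1}$. This is what the proofs of Lemmas~\ref{lem:bias-point} and~\ref{lem:one-step-SGD-ns} need: there the step size is conditioned on the past and treated as fixed, and the one-step inequalities were stated with $\alpha_t$ inside the expectation. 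So the bound of Theorem~\ref{thm:conv-sent} holds verbatim. Substituting $\alpha_t^2=\alpha^2 e^{-2\nu_{t-1}}/T$ turns the right-hand side into
\[
\frac{1}{2\eta}\|\w_1-\w_*\|_2^2+D_\varphi(\nu_*,\nu_0)+\alpha^2\bar V .
\]

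Next I will pass from the weighted joint gap to the primal gap. For every $t$ we have $F_{\mathrm{CERM}}(\w_t)=\min_\nu F(\w,\nu)-1\le F(\w_t,\nu_t)-1$, using the $-1$ offset from~\eqref{eqn:dual}, and $F_{\mathrm{CERM}}(\w_*)=F(\w_*,\nu_*)-1$. Hence
\[
\sum_t\alpha_t\bigl(F_{\mathrm{CERM}}(\w_t)-F_{\mathrm{CERM}}(\w_*)\bigr)\le\sum_t\alpha_t\bigl(F(\w_t,\nu_t)-F(\w_*,\nu_*)\bigr).
\]
The function $F_{\mathrm{CERM}}$ is convex, since it is a partial minimization of the jointly convex $F$ (Lemma~\ref{lem:conv-F-oce}). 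Jensen's inequality with weights $\alpha_t/\sum_s\alpha_s$ then gives
\[
\Bigl(\sum_t\alpha_t\Bigr)\bigl(F_{\mathrm{CERM}}(\hat\w_T)-F_{\mathrm{CERM}}(\w_*)\bigr)\le \sum_t\alpha_t\bigl(F_{\mathrm{CERM}}(\w_t)-F_{\mathrm{CERM}}(\w_*)\bigr).
\]

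The main obstacle is that the normalizer $\sum_t\alpha_t$ is random, so we cannot simply divide inside the expectation. Here the almost-sure assumption does the work: $\sum_t\alpha_t=\frac{\alpha}{\sqrt T}\sum_t e^{-\nu_{t-1}}\ge \alpha\sqrt{T}S$ almost surely. The gap $F_{\mathrm{CERM}}(\hat\w_T)-F_{\mathrm{CERM}}(\w_*)$ is nonnegative, because $\hat\w_T\in\W$ as a convex combination of iterates in $\W$, and $\w_*$ is the minimizer. Therefore
\[
\alpha\sqrt T S\,\bigl(F_{\mathrm{CERM}}(\hat\w_T)-F_{\mathrm{CERM}}(\w_*)\bigr)\le \sum_t\alpha_t\bigl(F(\w_t,\nu_t)-F(\w_*,\nu_*)\bigr)
\]
pointwise. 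Taking expectations and dividing by the deterministic quantity $\alpha\sqrt T S$ yields the stated bound. The $D_\varphi(\nu_*,\nu_0)$ and $\|\w_1-\w_*\|^2$ terms are divided by $\alpha\sqrt T S$, and the variance term becomes $\alpha\bar V/(\sqrt T S)$.
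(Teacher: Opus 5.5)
Your proposal is correct and takes essentially the same route as the paper. You apply Theorem~\ref{thm:conv-sent}, substitute $\alpha_t^2=\alpha^2e^{-2\nu_{t-1}}/T$ to get $\alpha^2\bar V$, and then combine the almost-sure bound $\sum_t\alpha_t\ge\alpha\sqrt{T}S$ with nonnegativity of the gap and convexity. The only difference is the order of the last steps: the paper first normalizes the nonnegative joint gaps $F(\w_t,\nu_t)-F(\w_*,\nu_*)$ and then uses joint convexity of $F$ at the averaged pair, whereas you pass to $F_{\mathrm{CERM}}$ first and apply Jensen to it as a partial minimum. Your explicit handling of the $-1$ offset and of the $\mathcal F_{t-1}$-measurability of $\alpha_t$ is, if anything, more careful than the paper's.
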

\begin{proof}
Let $\hat\alpha_t = \frac{\alpha_t}{\sum_{t'=1}^T\alpha_{t'}}$.  From \Cref{thm:conv-sent}, we have
\begin{equation*}
    \E\left[\sum_{t'=1}^T\alpha_{t'} \left(\sum_{t=1}^T\hat\alpha_t(F(\w_t,\nu_t) - F(\w_*, \nu_*))\right)\right]\leq  \frac{1}{2\eta}\|\w_1- \w_*\|_2^2 +  D_\varphi(\nu_*, \nu_{0}) +\E\left[\sum_{t=1}^T\frac{\eta\alpha_t^2\sigma_t^2}{2} + \sum_{t=1}^TC\alpha_t^2\delta_t^2\right]. 
\end{equation*}
Since $\sum_{t'=1}^T\alpha_{t'}= \sum_{t'=1}^T\frac{\alpha e^{-\nu_{t'-1}}}{\sqrt{T}}\geq \alpha \sqrt{T} S$, then from Jensen's inequality
\begin{align*}
&\E\left[\sum_{t=1}^T\hat\alpha_t(F(\w_t,\nu_t) - F(\w_*, \nu_*))\right] \leq  \frac{\frac{1}{2\eta}\|\w_1- \w_*\|_2^2 +  D_\varphi(\nu_*, \nu_{0}) }{\alpha \sqrt{T}S}+\frac{\alpha\bar V}{\sqrt{T}S}. 
\end{align*}
Applying the joint convexity of $F(\w, \nu)$ and $F_{\mathrm{CERM}}=\min_\nu F(\w, \nu)$, we get
\begin{align*}
    \E\left[F_{\mathrm{CERM}}(\sum_{t= 1}^{T}\hat\alpha_{t}\w_t) - F_{\mathrm{CERM}}(\w_*)\right] \leq& \E\left[F(\sum_{t= 1}^{T}\hat\alpha_{t}\w_t, \sum_{t= 1}^{T}\hat\alpha_{t}\nu_t) - F(\w_*, \nu_*)\right] \\
    \leq& \E\left[\sum_{t= 1}^{T}\hat\alpha_{t} F(\w_t, \nu_t) - F(\w_*, \nu_*)\right]
\end{align*}
Combining the above two equations, we complete the proof.
\end{proof}


\section{Proofs of Results in \Cref{sec:3.1}}
\label[appendix]{app:cermn}

In this section, we present the convergence analysis of SCENT for solving CERM.
\begin{proof}[Proof of Lemma~\ref{lem:bounded-nu}]
    This lemma is directly implied from applying Lemma~\ref{lem:var-grad-w-nu} to each \(i\).
\end{proof}

Then we are ready to analyze the update of \(\w_{t}\).
\begin{proof}[Proof of Lemma~\ref{lem:scent-w-one-step}]
Let \(\z_{t}= \frac{1}{B}\sum_{i\in\B_t}\ea{s_i(\w_t; \zeta'_{i,t}) - \nu_{i,t}}\nabla s_i(\w_t; \zeta'_{i,t})\). We first bound $\E[\|\z_t\|_2^2 \mid \mathcal F_{t-1}]$.
\begin{align*}
\E[\|\z_t\|_2^2 \mid \mathcal F_{t-1}] &= \E\left[\bigg\|\frac{1}{B}\sum_{i\in\B_t}\ea{s_i(\w_t; \zeta'_{i,t}) - \nu_{i,t}}\nabla s_i(\w_t; \zeta'_{i,t})\bigg\|_2^2 \mid \mathcal F_{t-1}\right]\\
& =\E_{\B_t, \zeta_t}\E_{\zeta'_t}\left[\bigg\|\frac{1}{B}\sum_{i\in\B_t}\ea{s_i(\w_t; \zeta'_{i,t}) - \nu_{i,t}}\nabla s_i(\w_t; \zeta'_{i,t})\bigg\|_2^2 \mid \mathcal F_{t-1}, \B_t, \zeta_t\right]\\
& \leq \E_{\B_t, \zeta_t}\bigg[\frac{1}{B}\sum_{i\in\B_t}\sigma_{i,t}^2\bigg] = \frac{1}{n}\sum_{i=1}^n\sigma_{i,t}^2. 
\end{align*}
Since $\bar\nu_{i,t}=\nu_{i,t}, \forall i\in\B_t$, we have 
\[
\E[\z_t \mid \mathcal F_{t-1}] = \E_{\zeta'_t, \zeta_t, \B_t}\bigg[\frac{1}{B}\sum_{i\in\B_t}\nabla_{\w}\Phi_i(\w_t, \bar\nu_{i,t}; \zeta'_{i,t})\bigg] = \nabla_{\w} F(\w_t, \bar\bnu_t).
\]
Replacing \(\nabla_{\w} \Phi(\w_t, \nu_t; \zeta_t')\) with \(\z_{t}\) in Lemma~\ref{lem:one-step-SGD-ns}, we finish the proof.
\end{proof}

Next, we analyze the update of $\bar\nu_t$. 
\begin{proof}[Proof of Lemma~\ref{lem:bnu-s-step}]
By applying Lemma~\ref{lem:bias-point} and Lemma~\ref{lem:nu-s-step} for each coordinate of $\bar\nu_{i,t}$, we have
\begin{align*}
& \E[\alpha_t \nabla_{\nu} F_i(\w_t, \bar\nu_{i,t})^{\top}(\bar\nu_{i,t} - \nu_{i,*})]\leq  D_\varphi(\nu_{i,*}, \nu_{i,t-1}) - D_\varphi(\nu_{i,*},\bar\nu_{i,t} ) + C\alpha_t^2 \delta_{i,t}^2, \forall i. 
\end{align*}
Averaging the above inequality over $i=1,\ldots, n$, we have 
\begin{align}\label{eqn:bnu-one}
& \E[\alpha_t \nabla_{\bnu} F(\w_t, \bar\bnu_t)^{\top}(\bar\bnu_{t} - \bnu_{*})]\leq  \frac{1}{n}\sum_{i=1}^n\left(D_\varphi(\nu_{i,*}, \nu_{i,t-1}) - D_\varphi(\nu_{i,*},\bar\nu_{i,t} )\right) + C\alpha_t^2 \delta_{t}^2.  
\end{align}
Due to the randomness of $\B_t$,  we have
\begin{align*}
\E[D_{\varphi}(\nu_{i,*},  \nu_{i,t})] = \E\bigg[(1-\frac{B}{n}) D_{\varphi}(\nu_{i,*}, \nu_{i,t-1}) + \frac{B}{n} D_{\varphi}(\nu_{i,*}, \bar \nu_{i,t})\bigg], \forall i.
\end{align*}
Hence
\begin{align*}
&\E\left[\frac{1}{n}\sum_{i=1}^n\left(D_\varphi(\nu_{i,*}, \nu_{i,t-1}) - D_\varphi(\nu_{i,*},\bar\nu_{i,t} )\right)\right] \\
=& \E\left[\frac{1}{n}\sum_{i=1}^n\left(D_\varphi(\nu_{i,*}, \nu_{i,t-1}) - \frac{n}{B}D_\varphi(\nu_{i,*},\nu_{i,t})  + (\frac{n}{B}-1) D_{\varphi}(\nu_{i,*}, \nu_{i,t-1})\right)\right] \\
=& \frac{1}{B}\cdot \E\left[\sum_{i=1}^n (D_{\varphi}(\nu_{i,*}, \nu_{i,t-1})-  D_{\varphi}(\nu_{i,*}, \nu_{i,t}))\right].
\end{align*}
Combining the above equality with~(\ref{eqn:bnu-one}), we finish the proof. 
\end{proof}

Finally, we prove the convergence result of SCENT.
\begin{proof}[Proof of \Cref{thm:conv-scent}]
Since $\eta_t = \eta \alpha_t$, from \Cref{lem:scent-w-one-step}, we obtain
\begin{align*}
&\E[\alpha_t\nabla_{\w} F(\w_t, \bar\bnu_t)^{\top}(\w_{t} - \w_*)]\leq  \E\left[\frac{1}{2\eta}\|\w_t- \w_*\|_2^2 - \frac{1}{2\eta}\|\w_{t+1} - \w_*\|_2^2 \right] + \frac{\eta\alpha_t^2\sigma_t^2}{2}.
\end{align*}
Combining the above inequality with \Cref{lem:bnu-s-step} and using the definition of \(D_\varphi\) in \Cref{eq:breg_decompose}, we have
\begin{align}
&\E[\alpha_t(\nabla_{\w} F(\w_t, \bar\bnu_t)^{\top}(\w_{t} - \w_*)+\nabla_{\bnu} F(\w_t, \bar\bnu_t)^{\top}(\bar\bnu_{t} - \bnu_*))] \notag \\
\leq&  \E\left[\frac{1}{2\eta}\|\w_t- \w_*\|_2^2 - \frac{1}{2\eta}\|\w_{t+1} - \w_*\|_2^2+  \frac{1}{B}D_\varphi(\bnu_*, \bnu_{t-1}) - \frac{1}{B}D_\varphi(\bnu_*,\bnu_{t} ) \right]+\frac{\eta\alpha_t^2\sigma_t^2}{2} + C\alpha_t^2\delta_t^2. \label{eqn:conv-scent:one-step}
\end{align}
By the  joint convexity of $F(\w, \bnu)$ from Lemma~\ref{lem:conv-F-oce}, we have
\begin{equation}    \label{eqn:conv-scent:joint}
    \alpha_t(F(\w_t,\bar\bnu_t) - F(\w_*, \bnu_*))\leq \alpha_t(\nabla_{\w} F(\w_t, \bar\bnu_t)^{\top}(\w_{t} - \w_*)+\nabla_{\bnu} F(\w_t, \bar\bnu_t)^{\top}(\bar\bnu_{t} - \bnu_*)). 
\end{equation}
Combining \eqref{eqn:conv-scent:one-step} and \eqref{eqn:conv-scent:joint} and summing over $t=1,\ldots, T$, we have
\begin{equation*}
\E\left[\sum_{t=1}^T\alpha_t(F(\w_t,\bar\bnu_t) - F(\w_*, \bnu_*))\right] \leq  \frac{1}{2\eta}\|\w_1- \w_*\|_2^2 +  \frac{1}{B}D_\varphi(\bnu_*, \bnu_{0}) +\sum_{t=1}^T\frac{\eta\alpha_t^2\sigma_t^2}{2} + \sum_{t=1}^TC\alpha_t^2\delta_t^2. 
\end{equation*}
Since $F_{\mathrm{CERM}}(\w_*) = F(\w_*, \bnu_*)$, and $F_{\mathrm{CERM}}(\w_t)\leq F(\w_t,\bar\bnu_t)$, we have
\begin{equation*}
\E\left[\sum_{t=1}^T\alpha_t(F_{\mathrm{CERM}}(\w_t) - F_{\mathrm{CERM}}(\w_*))\right] \leq  \frac{1}{2\eta}\|\w_1- \w_*\|_2^2 +  \frac{1}{B}D_\varphi(\bnu_*, \bnu_{0}) +\sum_{t=1}^T\frac{\eta\alpha_t^2\sigma_t^2}{2} + \sum_{t=1}^TC\alpha_t^2\delta_t^2. 
\end{equation*}
Plugging in the choice of $\alpha_t= \frac{\alpha}{\sqrt{T}}$, we obtain 
\begin{equation*}
\frac{\alpha}{\sqrt{T}}\E\left[\sum_{t=1}^T (F_{\mathrm{CERM}}(\w_t) - F_{\mathrm{CERM}}(\w_*))\right] \leq  \frac{1}{2\eta}\|\w_1- \w_*\|_2^2 +  \frac{1}{B}D_\varphi(\bnu_*, \bnu_{0}) +\frac{\alpha^2}{T}\cdot \left(\sum_{t=1}^T\frac{\eta\sigma_t^2}{2} + \sum_{t=1}^TC\delta_t^2\right). 
\end{equation*}
Multiplying $1/(\sqrt{T}\alpha)$ on both sides completes the proof. 
\end{proof}

\section{Proof of Results in \Cref{sec:bound_analysis}}
\label[appendix]{app:sec4}

\subsection{Bounds on the Variance Terms}

In this section, we present the proof of \Cref{lem:delta-noZmax}. First, we prove that \(e^{\nu_{*}- \nu}\) is always bounded by the optimality gap \(F(\w, \nu)-F(\w, \nu_*)\).
\begin{lemma}[Self-bounding inequality]
\label[lemma]{lem:self-bounding-r}
For any $r>0$, we have \(r \le 2\,(r-\log r)\).
Equivalently, for \(r(\nu):= e^{\nu_{*}- \nu}\) and any $\w, \nu$,
\begin{equation*}
r(\nu)\le 2\bigl(F(\w, \nu)-F(\w, \nu_*)+1\bigr),
\end{equation*}
where \(\nu_{*}= \argmin_{\nu} F(\w, \nu)\). In the case where \(\w\) is fixed as in~\eqref{eqn:cerm_fixed_w}, the notation becomes
\begin{equation*}
r(\nu)\le 2\bigl(F(\nu)-F(\nu_*)+1\bigr).
\end{equation*}
\end{lemma}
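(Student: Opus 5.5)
The plan has two parts: prove the scalar inequality $r\le 2(r-\log r)$ for all $r>0$, and then show that the gap $F(\w,\nu)-F(\w,\nu_*)+1$ is exactly $r-\log r$ with $r=r(\nu)$. Substituting the second fact into the first gives the lemma.

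\textbf{Scalar inequality.} The claim $r\le 2(r-\log r)$ is equivalent to $r\ge 2\log r$. Set $g(r)=r-2\log r$ on $(0,\infty)$. Its derivative $g'(r)=1-2/r$ vanishes only at $r=2$, and $g$ is convex, so $g$ attains its global minimum there. The minimum value is $g(2)=2-2\log 2>0$, since $\log 2<1$. Hence $g(r)>0$ for every $r>0$. An alternative route uses the elementary bound $\log x\le x-1$ with $x=r/2$: it gives $\log r\le \log 2 + r/2-1\le r/2$, because $\log 2<1$.

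\textbf{Identifying the gap.} Fix $\w$ and write $m(\w)=\E_\zeta e^{s(\w;\zeta)}$, so that $F(\w,\nu)=m(\w)e^{-\nu}+\nu$. Setting the derivative in $\nu$ to zero gives $\nu_*=\log m(\w)$, which means $m(\w)=e^{\nu_*}$. Substituting,
\[
F(\w,\nu)=e^{\nu_*-\nu}+\nu = r(\nu)+\nu .
\]
At the minimizer $r(\nu_*)=1$, so $F(\w,\nu_*)=1+\nu_*$. Subtracting,
\[
F(\w,\nu)-F(\w,\nu_*)+1 = r(\nu)-(\nu_*-\nu) = r(\nu)-\log r(\nu).
\]
Applying the scalar inequality with $r=r(\nu)$ yields $r(\nu)\le 2\bigl(F(\w,\nu)-F(\w,\nu_*)+1\bigr)$. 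The fixed-$\w$ version for \eqref{eqn:cerm_fixed_w} is the same computation with $m$ in place of $m(\w)$.

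\textbf{Main obstacle.} There is no real obstacle; the only point needing care is the bookkeeping for the additive constant. The paper drops the $-1$ in the min--min objective, and this is precisely why the gap term appears as $F(\w,\nu)-F(\w,\nu_*)+1$ rather than without the $+1$. With that constant tracked correctly, both steps are routine.
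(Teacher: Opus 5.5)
Your proof is correct and matches the paper's. The paper proves $r\le 2(r-\log r)$ by splitting into $r\le 2$ (using $r-\log r\ge 1$) and $r\ge 2$ (using $\log r\le r/2$), whereas you prove $\log r\le r/2$ directly for all $r>0$; the gap identity $F(\w,\nu)-F(\w,\nu_*)=r(\nu)-1-\log r(\nu)$ is the same in both. One small correction to your closing remark: the $+1$ has nothing to do with dropping the constant $-1$ from the objective, since differences $F(\w,\nu)-F(\w,\nu_*)$ are unaffected by additive constants, and it appears simply because the gap equals $r-\log r-1$.
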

\begin{proof}
If $0<r\le 2$, then $r\le 2 \le 2(r-\log r)$ since $r-\log r\ge 1$ for all $r>0$.
If $r\ge 2$, then $\log r \le r/2$, hence $r-\log r\ge r/2$, i.e.\ $r\le 2(r-\log r)$. This completes the proof for the first part.
For the second part, note that the optimality gap can be written as
\begin{align*}
    F(\w, \nu)-F(\w, \nu_*)=& \E_{\zeta}[\ea{s(\w, \zeta)- \nu}]- \E_{\zeta}[\ea{s(\w, \zeta)- \nu_{*}}]+ \nu- \nu_{*} \\
    =& r(\nu)- 1-\log r(\nu),
\end{align*}
where the last equality comes from the definition of \(\nu_{*}\). Applying the first part with $r=r(\nu)$ completes the proof.
\end{proof}

Then we are ready to prove \Cref{lem:delta-noZmax}.
\begin{proof}[Proof of \Cref{lem:delta-noZmax}]
We first prove the bound on \(\delta_{t}^{2}\). Recalling the definitions of \(z(\w_{t}, \zeta_{t})\) and \(m_{t}\) in \Cref{sec:bound_analysis:variance}, we get
\begin{equation*}
    \delta_t^2= \mathbb{E}_{\zeta_t}\!\left[e^{-\nu_{t-1}}\bigl(z(\w_t;\zeta_t)-m_t\bigr)^2\right]= e^{-\nu_{t-1}}\operatorname{Var}(z(\w_t;\zeta)).
\end{equation*}
By Assumption~\ref{ass:entropic-var-local} (i), we have \(\operatorname{Var}(z(\w_t;\zeta))\le (\kappa-1)m_t^2\). Hence
\begin{equation}    \label{eqn:b-delta-temp}
    \delta_t^2\le (\kappa-1)e^{-\nu_{t-1}}m_t^2= (\kappa-1)m_t\cdot (m_t e^{-\nu_{t-1}}).
\end{equation}
Let $\tilde r_{t-1}= m_t e^{-\nu_{t-1}}$. Then we have
\begin{equation*}
    F(\w_t,\nu_{t-1})=\E_{\zeta_{t}}[e^{s(\w_t; \zeta_{t})-\nu_{t-1}}]+ \nu_{t-1} = \tilde r_{t-1} + \nu_{t-1}.
\end{equation*}
Since $\tilde  r_{t-1}=\eb{\log m_t - \nu_{t-1}}$, with the definition of \(\mu\) in \Cref{sec:bound_analysis:variance}, we get
\begin{equation*}
    F(\w_t,\nu_{t-1})-(1+\mu_t)= \tilde r_{t-1} + \nu_{t-1} - (1+\mu_t)= \tilde r_{t-1}-\log \tilde r_{t-1}-1.
\end{equation*}
Using \Cref{lem:self-bounding-r}, we have
\[
\tilde r_{t-1}\le 2\big(F(\w_t,\nu_{t-1})-(1+\mu_t)+1\big).
\]
Since $\w_*$ minimizes $\mu(\w)$, we have $\mu_t=\mu(\w_t)\ge \mu(\w_*)$ and thus
$(1+\mu_t)\ge (1+\mu(\w_*))=F(\w_*,\nu_*)$, implying
\[
F(\w_t,\nu_{t-1})-(1+\mu_t)
\le
F(\w_t,\nu_{t-1})-F(\w_*,\nu_*).
\]
As a result, we have
\begin{equation}    \label{eqn:b-tilde-r}
\tilde r_{t-1}\le 2\big(F(\w_t,\nu_{t-1})-F(\w_*,\nu_*) + 1\big).
\end{equation}
Combining \eqref{eqn:b-delta-temp} with \eqref{eqn:b-tilde-r}, we obtain the desired result on \(\delta_{t}^{2}\). Next we prove the bound on \(\sigma_{t}^{2}\). We have
\begin{align*}
\sigma_t^2 & =\E_{\zeta'_t}[\|\exp(s(\w_t; \zeta'_t) - \nu_t)\nabla s(\w_t; \zeta'_t)\|_2^2], \\
& = \E_{\zeta'_t}[e^{2(\mu_t - \nu_t)}\|\exp(s(\w_t; \zeta'_t) - \mu_t)\nabla s(\w_t; \zeta'_t)\|_2^2] \leq r_t^2\sigma'^2,
\end{align*}
where $r_t = e^{\mu_t - \nu_t}$.  Similar to~(\ref{eqn:b-tilde-r}), we can show that 
\[
r_t\leq  2\big(F(\w_t,\nu_{t})- F(\w_*, \nu_*)+1\big).
\]
Hence, 
\begin{align*}
\sigma_t^2 & \leq 4\sigma'^2\big(F(\w_t,\nu_{t})- F(\w_*, \nu_*)+1\big)^2.
\end{align*}
Then we complete the proof.
\end{proof}

\subsection{Convergence Analysis of SPMD for Fixed $\w$}

In this section, we present the proof of SPMD when \(\w\) is fixed.
\begin{proof}[Proof of Theorem~\ref{thm:1}]
By applying Lemma~\ref{lem:bias-point} and Lemma~\ref{lem:nu-s-step}, we obtain the SPMD averaged bound
\begin{equation}
\label{eq:pmd-template-new}
\bar G_T \;\coloneqq\; \frac{1}{T}\sum_{t=1}^T \mathbb{E}\!\left[F(\nu_{t})-F(\nu_*)\right] \;\le\; \frac{D_\varphi(\nu_*,\nu_0)}{\alpha T} + C\,\alpha\,V,
\end{equation}
where 
\[
V\coloneqq \frac{1}{T}\sum_{t=1}^T \mathbb{E}[\delta_t^2],
\qquad
\delta_t^2=\mathbb{E}\!\left[e^{-\nu_{t-1}}(z-m)^2\right]
=e^{-\nu_{t-1}}\mathrm{Var}(z).
\]
Since $e^{-\nu_{t-1}}=r(\nu_{t-1})/m$, we can rewrite \(V\) as
\begin{equation}\label{eqn:V-exp}
V= \frac{\mathrm{Var}(z)}{m}\cdot \frac{1}{T}\sum_{t=1}^T \mathbb{E}[r(\nu_{t-1})].
\end{equation}
From \Cref{lem:self-bounding-r}, we have
\begin{equation*}
\frac{1}{T}\sum_{t=1}^T \mathbb{E}[r(\nu_{t-1})]\le \frac{2}{T}\sum_{t=1}^T \mathbb{E}\!\left[F(\nu_{t-1})-F(\nu_*)+1\right]= 2\left(1+\frac{1}{T}\sum_{t=1}^T \mathbb{E}[F(\nu_{t-1})-F(\nu_*)]\right).
\end{equation*}
Observing the index shift, we get
\begin{align*}
\sum_{t=1}^T \mathbb{E}[F(\nu_{t-1})-F(\nu_*)]=& \mathbb{E}[F(\nu_0)-F(\nu_*)]+ \sum_{t=1}^{T-1}\mathbb{E}[F(\nu_t)-F(\nu_*)] \\
\le& \mathbb{E}[F(\nu_0)-F(\nu_*)]+ \sum_{t=1}^{T}\mathbb{E}[F(\nu_t)-F(\nu_*)].
\end{align*}
Dividing both sides by $T$ yields
\begin{equation*}
\frac{1}{T}\sum_{t=1}^T \mathbb{E}[F(\nu_{t-1})-F(\nu_*)] \le \frac{\mathbb{E}[F(\nu_0)-F(\nu_*)]}{T}+\bar G_T.
\end{equation*}
Combining the above inequality with~\eqref{eqn:V-exp}, we have
\begin{equation}
\label{eq:V-closed-new}
V \le \frac{2\,\mathrm{Var}(z)}{m} \left(1+\bar G_T+\frac{\mathbb{E}[F(\nu_0)-F(\nu_*)]}{T}\right).
\end{equation}
Plugging \eqref{eq:V-closed-new} into \eqref{eq:pmd-template-new} yields
\begin{equation*}
    \bar G_T \le \frac{D_\varphi(\nu_*,\nu_0)}{\alpha T} + \frac{2C\alpha\,\mathrm{Var}(z)}{m} \left(1+\bar G_T+\frac{\mathbb{E} [F(\nu_0)-F(\nu_*)]}{T}\right).
\end{equation*}
Since $\alpha\le \frac{m}{4C\,\mathrm{Var}(z)}$, we have $\frac{2C\alpha\,\mathrm{Var}(z)}{m}\le \frac12$, and therefore
\begin{align*}
\bar G_T\le& \frac{2D_\varphi(\nu_*,\nu_0)}{\alpha T} +\frac{4C\alpha\,\mathrm{Var}(z)}{m} \left(1+\frac{\mathbb{E}[F(\nu_0)-F(\nu_*)]}{T}\right) \\
\le& \frac{2D_\varphi(\nu_*,\nu_0)}{\alpha T} +\frac{4C\alpha\,\mathrm{Var}(z)}{m} +\frac{F(\nu_0)-F(\nu_*)}{T}.
\end{align*}
Optimizing the right-hand side over $\alpha$ (assuming $T$ is large enough) gives the final bound.
\end{proof}

\subsection{Convergence Analysis of SGD for Fixed $\w$}

For completeness of the paper, in this section we present the convergence results of SGD when \(\w\) is fixed. Since we consider the projected SGD update~\eqref{eq:sgd-proj} in Section~\ref{sec:bound_analysis:sgd}, we consider the following problem and update, which is equivalent to projected SGD:
\begin{equation*}
    \min_{\nu} F(\nu)+ 1_{[c_{0}, c_{1}]}(\nu),
\end{equation*}
where 
\begin{equation*}
    1_{[c_{0}, c_{1}]}(\nu)=
    \begin{cases}
        0, &\textrm{if } \nu\in [c_{0}, c_{1}], \\
        +\infty, &\textrm{otherwise}.
    \end{cases}
\end{equation*}

And the projected SGD update is equivalent to
\begin{equation}\label{eqn:spgd}
\begin{aligned}
\nu_{t+1} &= \argmin_{\nu} F'(\nu_t; \zeta_t)\cdot (\nu - \nu_t)  + 1_{[c_{0}, c_{1}]}(\nu) + \frac{1}{2\alpha_t}(\nu - \nu_t)^2\\
&=\argmin_{\nu}  1_{[c_{0}, c_{1}]}(\nu) +\frac{1}{2\alpha_t}(\nu - (\nu_t-\alpha_t F'(\nu_t; \zeta_t)))^2.
\end{aligned}
\end{equation}
To see the equivalence between the above update and the projected SGD update, we note that the projected SGD update for minimizing a function \(g\) on a set \([c_{0}, c_{1}]\) can be written as
\begin{equation*}
    \nu_{t+ 1}= \Pi_{[c_{0}, c_{1}]}(\nu_{t}- \alpha_{t}F'(\nu_{t}; \zeta_{t}))= \argmin_{\nu} 1_{[c_{0}, c_{1}]}(\nu)+\frac{1}{2\alpha_t}(\nu - (\nu_t-\alpha_t F'(\nu_t; \zeta_t)))^2.
\end{equation*}

Thus the update~\eqref{eqn:spgd} is equivalent to the projected SGD update. In this section, we will then focus on the convergence analysis of \eqref{eqn:spgd}. First we present the non-expansiveness property of the update.

\begin{lemma}   \label[lemma]{lem:perturb}
If $1_{[c_{0}, c_{1}]}(\cdot)$ is convex and let
\begin{equation*}
\prox{\alpha}(\nu_{1}):=\argmin_{\nu} 1_{[c_{0}, c_{1}]}(\nu) + \frac{1}{2\alpha}(\nu-\nu_{1})^2,
\end{equation*}
then we have
\begin{equation*}
    |\prox{\alpha}(\nu_{1}) - \prox{\alpha}(\nu_{2})| \leq |\nu_{1}- \nu_{2}|.
\end{equation*}
\end{lemma}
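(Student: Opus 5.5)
The statement is the standard nonexpansiveness of the Euclidean projection onto the closed interval $[c_0,c_1]$. Since $1_{[c_0,c_1]}$ is the indicator of a nonempty closed convex set, the minimizer $\prox{\alpha}(\nu_1)$ exists, is unique, and equals the projection $\Pi_{[c_0,c_1]}(\nu_1)$. This holds because $\frac{1}{2\alpha}(\nu-\nu_1)^2$ is strictly convex and coercive, and on the feasible set minimizing it is the same as minimizing $|\nu-\nu_1|$. So it suffices to show that $\Pi_{[c_0,c_1]}$ is $1$-Lipschitz.

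I would prove this through the variational characterization rather than by clipping case analysis, so that the argument mirrors the optimality-condition style used in the proof of \Cref{lem:one-step-SGD-ns}. Write $p_1=\prox{\alpha}(\nu_1)$ and $p_2=\prox{\alpha}(\nu_2)$. The first-order optimality condition for the convex problem defining $p_j$ gives
\[
(p_j-\nu_j)(\nu-p_j)\ge 0\quad\text{for all }\nu\in[c_0,c_1],\ j=1,2.
\]
Take $\nu=p_2$ in the condition for $j=1$ and $\nu=p_1$ in the condition for $j=2$. Adding the two inequalities yields
\[
(p_1-p_2)^2\le (\nu_1-\nu_2)(p_1-p_2).
\]
Cauchy--Schwarz (here simply $|ab|=|a||b|$) then gives $(p_1-p_2)^2\le|\nu_1-\nu_2|\,|p_1-p_2|$. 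If $p_1=p_2$ the claim is trivial; otherwise divide by $|p_1-p_2|$ to conclude.

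There is no real obstacle here. The only point needing care is justifying the variational inequality. For the subgradient of the indicator one has $0\in \frac{1}{\alpha}(p_j-\nu_j)+\partial 1_{[c_0,c_1]}(p_j)$, and the normal cone at $p_j$ encodes exactly the displayed inequality. As an alternative, one could verify the claim directly from the formula $\Pi(\nu)=\min(\max(\nu,c_0),c_1)$, noting that it is a composition of two $1$-Lipschitz monotone maps.
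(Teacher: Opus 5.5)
Your proof is correct and is essentially the paper's argument. The paper writes the optimality conditions as $(\nu_j-p_j)/\alpha\in\partial 1_{[c_0,c_1]}(p_j)$, adds the two subgradient inequalities (i.e., monotonicity of the subdifferential) to get $(p_1-p_2)^2\le(\nu_1-\nu_2)(p_1-p_2)$, and concludes exactly as you do; your normal-cone variational inequality is the same condition written out explicitly for the indicator, and your existence/uniqueness remark fills in a point the paper leaves implicit.
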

\begin{proof}
By the optimality of $\prox{\alpha}(\nu_{1})$ and $\prox{\alpha}(\nu_{1})$ we have
\begin{align*}
u := & \frac{\nu_{1} - \prox{\alpha}(\nu_{1})}{\alpha} \in \partial  1_{[c_{0}, c_{1}]}(\prox{\alpha}(\nu_{1})) \\  
v := & \frac{\nu_{2} - \prox{\alpha}(\nu_{2})}{\alpha} \in \partial  1_{[c_{0}, c_{1}]}(\prox{\alpha}(\nu_{2})). 
\end{align*}
Since $1_{[c_{0}, c_{1}]}(\x)$ is convex, we have
\begin{align*}
1_{[c_{0}, c_{1}]}(\prox{\alpha}(\nu_{1}))&\geq 1_{[c_{0}, c_{1}]}(\prox{\alpha}(\nu_{2})) + v\cdot (\prox{\alpha}(\nu_{1}) - \prox{\alpha}(\nu_{2}))\\
1_{[c_{0}, c_{1}]}(\prox{\alpha}(\nu_{2}))&\geq 1_{[c_{0}, c_{1}]}(\prox{\alpha}(\nu_{1})) + u\cdot (\prox{\alpha}(\nu_{2})- \prox{\alpha}(\nu_{1})).
\end{align*}
Adding them together, we have
\begin{align*}
0\geq&(v- u)\cdot(\prox{\alpha}(\nu_{1}) - \prox{\alpha}(\nu_{2})) \\
=&-\frac{1}{\alpha}  (\nu_{1} -\nu_{2} + \prox{\alpha}(\nu_{2}) - \prox{\alpha}(\nu_{1}))\cdot( \prox{\alpha}(\nu_{1}) - \prox{\alpha}(\nu_{2})).
\end{align*}
which implies
\begin{align*}
\frac{1}{\alpha} (\prox{\alpha}(\nu_{1}) - \prox{\alpha}(\nu_{2}))^2 &\leq  \frac{1}{\alpha}(\nu_{1} -  \nu_{2})\cdot( \prox{\alpha}(\nu_{1}) - \prox{\alpha}(\nu_{2}))\\
&\leq \frac{1}{\alpha}|\nu_{1} -  \nu_{2}|\cdot |\prox{\alpha}(\nu_{1}) - \prox{\alpha}(\nu_{2})|
\end{align*}
Thus $|\prox{\alpha}(\nu_{1}) - \prox{\alpha}(\nu_{2})|\leq |\nu_{1} -  \nu_{2}|.$
\end{proof}


Before presenting the proof for the convergence of the projected SGD update, we first present the proof of \Cref{lem:smoothness-exp}.
\begin{proof}[Proof of \Cref{lem:smoothness-exp}]
We have $F''(\nu)=m e^{-\nu}$, which is decreasing in $\nu$, so the maximum over $[c_0,c_1]$
is attained at $c_0$.
\end{proof}
Then we are ready to prove the convergence of the projected SGD update~\eqref{eq:sgd-proj}, which is equivalent to the update~\eqref{eqn:spgd}.
\begin{proof}[Proof of Theorem~\ref{thm:sgd-exp-constants}]
By the first-order optimality condition of~(\ref{eqn:spgd}), for any $\nu$ we have
\begin{equation*}
(F'(\nu_t; \zeta_t)+ \partial 1_{[c_{0}, c_{1}]}(\nu_{t+1}) + \frac{1}{\alpha_t}(\nu_{t+1} - \nu_t))\cdot (\nu - \nu_{t+1})\geq 0.
\end{equation*}
By the convexity of $1_{[c_{0}, c_{1}]}$, we have
\begin{equation*}
1_{[c_{0}, c_{1}]}(\nu_{t+1})\leq 1_{[c_{0}, c_{1}]}(\nu) + \partial 1_{[c_{0}, c_{1}]}(\nu_{t+1})\cdot(\nu_{t+1}- \nu).
\end{equation*}
Adding the above two inequalities, we have
\begin{align}
F'(\nu_t)\cdot(\nu_{t+1} - \nu) + 1_{[c_{0}, c_{1}]}(\nu_{t+1}) - 1_{[c_{0}, c_{1}]}(\nu)\leq& \frac{1}{\alpha_t}(\nu_t - \nu_{t+1})\cdot(\nu_{t+1} - \nu) \notag \\
=&\frac{1}{2\alpha_t}((\nu_t- \nu)^2 - (\nu_{t+1} - \nu)^2 - (\nu_t - \nu_{t+1})^2). \label{eqn:sgd-exp-constants:three-point}
\end{align}
where the equality uses the fact that $2(a-b)\cdot(b-c) = (a-c)^2 - (b - c)^2 - (a-b)^2$. By \Cref{lem:smoothness-exp}, we have
\begin{align*}
F(\nu_{t+1})\leq F(\nu_t) +F'(\nu_t)\cdot (\nu_{t+1} - \nu_t) + \frac{L}{2}(\nu_{t+1} - \nu_t)^2.
\end{align*}
By the convexity of $F$, we have
\begin{align*}
F(\nu_t)\leq F(\nu) + F'(\nu_t)\cdot (\nu_ t- \nu).
\end{align*}
Adding the above two inequalities, we have
\begin{align*}
F(\nu_{t+1})&\leq  F(\nu) + F'(\nu_t)\cdot(\nu_{t+1}- \nu)+ \frac{L}{2}(\nu_{t+1} - \nu_t)^2.
\end{align*}
Note that \(1_{[c_{0}, c_{1}]}(\nu_{*})=0\) and \(1_{[c_{0}, c_{1}]}(\nu_{t})= 0, \forall t\). Combining the above inequality with~\eqref{eqn:sgd-exp-constants:three-point}, and setting $\nu=\nu_*$, we have
\begin{align}
F(\nu_{t+1}) - F(\nu_*) \leq& \frac{1}{2\alpha_t}((\nu_t- \nu_*)^2 - (\nu_{t+1} - \nu_*)^2 - (\nu_t - \nu_{t+1})^2) +\frac{L}{2}(\nu_{t+1} - \nu_t)^2 \notag \\
&+ (F'(\nu_t)- F'(\nu_t; \zeta_t))\cdot (\nu_{t+1} - \nu_*). \label{eqn:Fr}
\end{align}
Define
\begin{align*}
\hat\nu_{t+1} = \argmin_{\nu}  \frac{1}{2\alpha_t}(\nu - (\nu_t -\alpha_t F'(\nu_t)))^2 + 1_{[c_{0}, c_{1}]}(\nu).
\end{align*}
Then we can bound the expectation of last term on the RHS of~(\ref{eqn:Fr}): 
\begin{align}\label{eqn:spgd-vb}
    \E[(F'(\nu_t)- F'(\nu_t; \zeta_t))\cdot (\nu_{t+1} - \nu_*)]=& \E[(F'(\nu_t)- F'(\nu_t; \zeta_t))\cdot (\nu_{t+1} - \hat\nu_{t+1}+ \hat\nu_{t+1} - \nu_*)]\notag \\
    =&\E[(F'(\nu_t)- F'(\nu_t; \zeta_t))\cdot (\nu_{t+1} - \hat\nu_{t+1})]\notag \\
    \leq& \alpha_t\E[(F'(\nu_t)- F'(\nu_t, \zeta_t))^2]=\alpha_t\sigma_{t}^2,
\end{align}
where the inequality is due to \Cref{lem:perturb}. Taking expectation of~\eqref{eqn:Fr} and plugging in~\eqref{eqn:spgd-vb}, we get
\begin{equation*}
    \E[F(\nu_{t+1}) - F(\nu_*)]\leq \frac{1}{2\alpha_t} (\nu_t- \nu_*)^2- \frac{1}{2\alpha_t}(\nu_{t+1} - \nu_*)^2- \left( \frac{1}{2\alpha_t}- \frac{L}{2}\right)(\nu_t - \nu_{t+1})^2+ \alpha_{t}\sigma_{t}^{2}.
\end{equation*}
Telescoping the sum for \(t= 0, \ldots, T- 1\), and noting that \(\alpha_{t}= \alpha'\leq 1/L\), we get
\begin{equation*}
    \sum_{t= 0}^{T- 1}\E[F(\nu_{t+1}) - F(\nu_*)]\leq \frac{(\nu_0- \nu_*)^2}{2\alpha'}+ \alpha'\sum_{t= 0}^{T- 1}\sigma_{t}^2.
\end{equation*}
Dividing both sides by \(T\), and from the definition of \(\bar\nu_{T}\) and the convexity of \(F\), we have
\begin{equation*}
    \frac{1}{T}\sum_{t=1}^{T}\mathbb{E}\!\left[F(\nu_t)-F(\nu_*)\right] \le \frac{(\nu_0-\nu_*)^2}{2\alpha' T}+\alpha'V',
\end{equation*}
where
\begin{equation*}
    V'= \frac{\alpha'}{T}\sum_{t=0}^{T-1}\sigma_{t}^2= \frac{\mathrm{Var}(z)}{T}\sum_{t=0}^{T-1}\mathbb{E}[e^{-2\nu_t}]\leq \mathrm{Var}(z)e^{-2c_0}.
\end{equation*}
We finish the proof by noting that $ \mathrm{Var}(z)e^{-2c_0}=m^2(\kappa-1)e^{-2c_0}=e^{2(\nu_*-c_0)}(\kappa-1)$ and optimizing the upper bound over $\alpha'$. 
\end{proof}

\section{A Distribution-free Lower Bound and Matching Upper Bound of SPMD}
\label{app:lower_upper_bound}

In this section, we present a lower bound on the complexity of algorithms solving~\eqref{eqn:cerm_fixed_w}. Then we show that with a specific choice of the learning rate, the convergence of SPMD matches the lower bound.

\subsection{A Distribution-free Lower Bound}

We consider an optimal bound for a black-box oracle model where the underlying distribution of $z$ is unknown and for any query $\nu$ the oracle returns
\begin{equation*}
    \Phi(\nu;\zeta)=z e^{-\nu}+\nu, \qquad g(\nu;\zeta)=\nabla_\nu \Phi(\nu;\zeta)=1-ze^{-\nu}.
\end{equation*}
Since \begin{align*}
&z=e^\nu(\Phi(\nu;\zeta)-\nu)=e^\nu(1-g(\nu;\zeta)),
\end{align*}
any $T$-query algorithm can reconstruct $T$ i.i.d.\ samples $z_1,\dots,z_T$ from $P$.
Thus, it suffices to prove the lower bound in the standard i.i.d.\ sampling model for $z$.
We first present three lemmas that are useful for our proof.

\begin{lemma}
\label[lemma]{lem:phi-quad}
Let $\phi(u)\coloneqq e^{-u}+u-1$. Then $\phi(0)=\phi'(0)=0$ and $\phi''(u)=e^{-u}$.
In particular, for all $|u|\le 1$,
\begin{equation*}
\phi(u)\ \ge\ \frac{e^{-1}}{2}\,u^2.
\end{equation*}
\end{lemma}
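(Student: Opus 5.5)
The statement \Cref{lem:phi-quad} is elementary, so I will argue directly from the definition $\phi(u)=e^{-u}+u-1$.

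The identities come first and are immediate. Substituting gives $\phi(0)=1+0-1=0$. Differentiating gives $\phi'(u)=1-e^{-u}$, so $\phi'(0)=0$. Differentiating once more gives $\phi''(u)=e^{-u}$.

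For the quadratic lower bound I will use Taylor's theorem with the integral form of the remainder around $0$. Since $\phi(0)=\phi'(0)=0$,
\begin{equation*}
\phi(u)=\int_0^u (u-s)\,\phi''(s)\,ds=\int_0^u (u-s)\,e^{-s}\,ds .
\end{equation*}
Fix $|u|\le 1$. Every $s$ between $0$ and $u$ satisfies $|s|\le 1$, so $e^{-s}\ge e^{-1}$.

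The only point needing care is the sign of $(u-s)$ when $u<0$. To handle it, I rewrite the integral as $\int_0^{1}(1-\theta)u^2 e^{-\theta u}\,d\theta$ using the substitution $s=\theta u$. After this substitution the integrand is nonnegative for either sign of $u$. Bounding $e^{-\theta u}\ge e^{-1}$ for $\theta\in[0,1]$ and $|u|\le 1$ then gives
\begin{equation*}
\phi(u)\ \ge\ e^{-1}u^2\int_0^1(1-\theta)\,d\theta=\frac{e^{-1}}{2}u^2 .
\end{equation*}

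A Lagrange-remainder version works equally well: $\phi(u)=\tfrac12 e^{-\xi}u^2$ for some $\xi$ between $0$ and $u$, and $|\xi|\le 1$ implies $e^{-\xi}\ge e^{-1}$. I expect no real obstacle; the only care needed is making the argument uniform in the sign of $u$, which the substitution (or the Lagrange form) takes care of.
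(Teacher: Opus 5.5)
Your proof is correct and is essentially the paper's argument. The paper observes that $\phi''(u)=e^{-u}\ge e^{-1}$ on $[-1,1]$, so $\phi$ is $e^{-1}$-strongly convex there, and combines this with $\phi(0)=\phi'(0)=0$; your integral-remainder (or Lagrange-remainder) computation simply derives that strong-convexity lower bound explicitly.
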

\begin{proof}
On the interval $[-1,1]$, $\phi''(u)=e^{-u}\ge e^{-1}$, so $\phi$ is $e^{-1}$-strongly convex
on $[-1,1]$. Since $\phi(0)=\phi'(0)=0$, strong convexity implies
$\phi(u)\ge \frac{e^{-1}}{2}u^2$ for all $|u|\le 1$.
\end{proof}

\begin{lemma}
\label{lem:inf-sum-loss-rigorous}
Let $\phi(u)=e^{-u}+u-1$. Fix $\nu_0<\nu_1$ and let $\Delta\coloneqq \nu_1-\nu_0$.
Define
\begin{equation*}
    H(\nu)\coloneqq \phi(\nu-\nu_0)+\phi(\nu-\nu_1).
\end{equation*}
Then $H$ is strictly convex and its unique minimizer $\nu^\dagger$ lies in $(\nu_0,\nu_1)$.
Moreover, if $\Delta\le 1$, then
\begin{equation*}
\inf_{\nu\in\mathbb{R}} H(\nu) \ \ge\ \frac{e^{-1}}{4}\,\Delta^2.
\end{equation*}
\end{lemma}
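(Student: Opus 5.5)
Strict convexity is immediate: $H''(\nu)=e^{-(\nu-\nu_0)}+e^{-(\nu-\nu_1)}>0$ by \Cref{lem:phi-quad}, which gives $\phi''(u)=e^{-u}$. For the location of the minimizer, use $\phi'(u)=1-e^{-u}$, which is negative for $u<0$ and positive for $u>0$. At $\nu=\nu_0$ we get $H'(\nu_0)=\phi'(0)+\phi'(-\Delta)=1-e^{\Delta}<0$. At $\nu=\nu_1$ we get $H'(\nu_1)=\phi'(\Delta)+\phi'(0)=1-e^{-\Delta}>0$. Since $H'$ is continuous and strictly increasing, it has exactly one zero $\nu^\dagger\in(\nu_0,\nu_1)$, and this zero is the unique minimizer. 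Existence is guaranteed because $H\to\infty$ in both directions, but the sign change of $H'$ already settles it.

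For the lower bound, assume $\Delta\le 1$ and write $a=\nu^\dagger-\nu_0$ and $b=\nu_1-\nu^\dagger$. Both are positive, they satisfy $a+b=\Delta\le1$, and in particular $|a|\le1$ and $|b|\le1$. Apply the quadratic bound of \Cref{lem:phi-quad} to $u=a$ and to $u=-b$:
\[
\inf_\nu H(\nu)=H(\nu^\dagger)=\phi(a)+\phi(-b)\ \ge\ \frac{e^{-1}}{2}\,(a^2+b^2).
\]
Finally, $a^2+b^2\ge (a+b)^2/2=\Delta^2/2$ by Cauchy--Schwarz, and this gives $\inf H\ge \frac{e^{-1}}{4}\Delta^2$.

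The only point needing care is that \Cref{lem:phi-quad} is valid only on $|u|\le1$. This is why we first show $\nu^\dagger\in(\nu_0,\nu_1)$, so that both arguments of $\phi$ at the minimizer have magnitude at most $\Delta\le1$. We also evaluate the infimum at $\nu^\dagger$ rather than bounding $H$ pointwise over all of $\mathbb R$. Outside $[\nu_0,\nu_1]$ the quadratic bound may fail, but this does not matter, since the infimum is attained at $\nu^\dagger$.
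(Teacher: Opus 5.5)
Your proof is correct and follows essentially the same route as the paper: strict convexity from $H''>0$, the sign change $H'(\nu_0)=1-e^{\Delta}<0<1-e^{-\Delta}=H'(\nu_1)$ to locate $\nu^\dagger\in(\nu_0,\nu_1)$, and then the local quadratic bound of \Cref{lem:phi-quad} combined with $a^2+b^2\ge (a+b)^2/2$. The only cosmetic difference is that the paper bounds $H$ pointwise on all of $[\nu_0,\nu_1]$ and then minimizes the resulting quadratic, whereas you evaluate directly at $\nu^\dagger$; both use the location of the minimizer in the same way.
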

\begin{proof}
From \Cref{lem:phi-quad} we know $H$ is strictly convex with
\begin{equation*}
    H'(\nu)=\phi'(\nu-\nu_0)+\phi'(\nu-\nu_1)=2-e^{-(\nu-\nu_0)}-e^{-(\nu-\nu_1)}.
\end{equation*}
At the endpoints,
\[
H'(\nu_0)=2-1-e^{-(\nu_0-\nu_1)}=1-e^{\Delta}<0,
\qquad
H'(\nu_1)=2-e^{-(\nu_1-\nu_0)}-1=1-e^{-\Delta}>0.
\]
Since $H'$ is strictly increasing (because $H''>0$), there is a unique root
$\nu^\dagger\in(\nu_0,\nu_1)$ and thus
$\inf_{\nu\in\mathbb{R}}H(\nu)=\inf_{\nu\in[\nu_0,\nu_1]}H(\nu)$.
Assume $\Delta\le 1$. Then for all $\nu\in[\nu_0,\nu_1]$ we have
$|\nu-\nu_0|\le \Delta\le 1$ and $|\nu-\nu_1|\le \Delta\le 1$.
Applying \Cref{lem:phi-quad}, we know that for all $\nu\in[\nu_0,\nu_1]$,
\begin{equation*}
    H(\nu)\ge \frac{e^{-1}}{2}\bigl((\nu-\nu_0)^2+(\nu-\nu_1)^2\bigr).
\end{equation*}
Minimizing the right-hand-side over $\nu$ yields
$\inf_\nu \bigl((\nu-\nu_0)^2+(\nu-\nu_1)^2\bigr)=\Delta^2/2$,
this completes the proof.
\end{proof}

\begin{lemma}[Le Cam's Two-point Method]
\label{lem:lecam-general-loss}
Let $P_0,P_1$ be two distributions and let $L_0(\cdot),L_1(\cdot)$ be nonnegative loss functions.
For any estimator $\widehat a$ measurable w.r.t.\ the data,
\begin{equation*}
\max\{\mathbb E_{P_0}[L_0(\widehat a)],\ \mathbb E_{P_1}[L_1(\widehat a)]\}
\ \ge\
\frac{1-\mathrm{TV}(P_0,P_1)}{2}\ \inf_{a}\bigl(L_0(a)+L_1(a)\bigr),
\end{equation*}
where \(\mathrm{TV}\) is the total variation distance.
\end{lemma}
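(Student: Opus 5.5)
We prove the Le Cam two-point inequality by reducing to a testing problem and then using the variational characterization of total variation. Let $X$ denote the data, with law $P_0$ or $P_1$, and let $\widehat a=\widehat a(X)$ be any measurable estimator. We write $p_0,p_1$ for densities of $P_0,P_1$ with respect to a common dominating measure $\mu$; for instance $\mu=P_0+P_1$ always works.

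The first step bounds the maximum by the average:
\[
\max\{\mathbb E_{P_0}[L_0(\widehat a)],\mathbb E_{P_1}[L_1(\widehat a)]\}\ \ge\ \tfrac12\bigl(\mathbb E_{P_0}[L_0(\widehat a)]+\mathbb E_{P_1}[L_1(\widehat a)]\bigr).
\]
The second step rewrites this average as a single integral,
\[
\tfrac12\int \bigl(L_0(\widehat a(x))p_0(x)+L_1(\widehat a(x))p_1(x)\bigr)\,d\mu(x).
\]
The key pointwise step is to use nonnegativity of both losses. Since $L_0,L_1\ge 0$, at every $x$ we have
\[
L_0(\widehat a(x))p_0(x)+L_1(\widehat a(x))p_1(x)\ \ge\ \bigl(L_0(\widehat a(x))+L_1(\widehat a(x))\bigr)\min\{p_0(x),p_1(x)\}.
\]
Bounding the bracket below by $K:=\inf_a(L_0(a)+L_1(a))$, which does not depend on $x$, gives
\[
\text{average}\ \ge\ \tfrac{K}{2}\int\min\{p_0,p_1\}\,d\mu.
\]

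The last step invokes the standard identity
\[
\int\min\{p_0,p_1\}\,d\mu=1-\mathrm{TV}(P_0,P_1).
\]
It follows from $\min\{a,b\}=\tfrac{a+b-|a-b|}{2}$ together with $\mathrm{TV}=\tfrac12\int|p_0-p_1|\,d\mu$. Combining this with the previous displays yields the claimed bound.

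There is no real obstacle in this argument. The only points needing care are the existence of a dominating measure and that $K$ may be $+\infty$ or $0$, where the inequality holds trivially. Measurability of $x\mapsto L_j(\widehat a(x))$ is implicit in the statement, since the expectations must be defined.
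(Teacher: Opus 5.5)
Your proof is correct and is essentially the paper's argument. The paper fixes the dominating measure $M=(P_0+P_1)/2$ and writes the densities as $1\pm f$ with $|f|\le 1$, so your $\min\{p_0,p_1\}$ is its $1-|f|$, your pointwise bound is its step $(L_0+L_1)+f(L_0-L_1)\ge (L_0+L_1)(1-|f|)$, and your identity $\int\min\{p_0,p_1\}\,d\mu=1-\mathrm{TV}(P_0,P_1)$ is its $\int |f|\,dM=\mathrm{TV}(P_0,P_1)$.
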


\begin{proof}
Let $M\coloneqq (P_0+P_1)/2$ and write $dP_0=(1+f)\,dM$, $dP_1=(1-f)\,dM$ where $|f|\le 1$ and
$\int |f|\,dM=\mathrm{TV}(P_0,P_1)$.
Then for any (possibly random) decision $A$,
\begin{align*}
\mathbb E_{P_0}[L_0(A)]+\mathbb E_{P_1}[L_1(A)]
&=\int \Big(L_0(A)(1+f)+L_1(A)(1-f)\Big)\,dM\\
&=\int \Big( (L_0(A)+L_1(A)) + f(L_0(A)-L_1(A))\Big)\,dM\\
&\ge \int \Big( (L_0(A)+L_1(A)) - |f|\,(L_0(A)+L_1(A))\Big)\,dM\\
&=\int (L_0(A)+L_1(A))(1-|f|)\,dM\\
&\ge \inf_a(L_0(a)+L_1(a))\int (1-|f|)\,dM\\
&=(1-\mathrm{TV}(P_0,P_1))\inf_a(L_0(a)+L_1(a)).
\end{align*}
Taking half and using $\max\{x,y\}\ge (x+y)/2$ completes the proof.
\end{proof}

The final distribution-free suboptimality lower bound is stated in the following theorem. 
\begin{theorem}
\label{thm:subopt-lb}
Let $z=e^{s(\zeta)}\ge 0$ with $m(P)=\mathbb E_P[z]$ and $\nu_*(P)=\log m(P)$.
For $\kappa\ge 2$, define
\[
\mathcal P_\kappa
\coloneqq
\left\{ P:\ z\ge 0,\ 0<\mathbb E_P[z]<\infty,\ 
\frac{\mathbb E_P[z^2]}{\mathbb E_P[z]^2}\le \kappa \right\}.
\]
Let $F_P(\nu)\coloneqq m(P)e^{-\nu}+\nu$ and $\nu_*(P)=\arg\min_\nu F_P(\nu)$.
Then there exists an absolute constant $c>0$ such that for all $T\ge \kappa$,
any (possibly adaptive) algorithm using $T$ value/gradient oracle calls and outputting $\widehat\nu$
satisfies
\begin{equation}
\label{eq:subopt-lb}
\sup_{P\in\mathcal P_\kappa}\ 
\mathbb E_P\!\left[F_P(\widehat\nu)-F_P(\nu_*(P))\right]
\ \ge\
c\,\frac{\kappa-1}{T}.
\end{equation}
\end{theorem}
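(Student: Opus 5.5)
The plan is Le Cam's two-point method (\Cref{lem:lecam-general-loss}) with losses $L_j(\nu)=F_{P_j}(\nu)-F_{P_j}(\nu_*(P_j))$. By the oracle-reduction remark at the start of this appendix, it suffices to work in the i.i.d.\ sampling model, where the data are $z_1,\dots,z_T\sim P^{\otimes T}$. The key identity is
\[
F_P(\nu)-F_P(\nu_*(P)) = m e^{-\nu}+\nu-1-\log m=\phi(\nu-\nu_*(P)),
\]
with $\phi$ as in \Cref{lem:phi-quad}. Hence $L_0(a)+L_1(a)=H(a)$ in the notation of \Cref{lem:inf-sum-loss-rigorous}, with the two centers $\nu_*(P_0),\nu_*(P_1)$. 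So whenever $\Delta=|\log m_1-\log m_0|\le 1$, we get $\inf_a(L_0+L_1)\ge \frac{e^{-1}}{4}\Delta^2$.

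For the construction I would use two-point mixtures with a rare large atom, so that the second-moment ratio is controlled by $\kappa$. Set $p=1/\kappa\le 1/2$. Let $P_0$ put mass $p$ on $z=a$ and mass $1-p$ on $z=0$. Let $P_1$ put mass $p(1+\epsilon)$ on $z=a$ and the rest on $0$, where $\epsilon\in(0,1]$ is to be chosen. Then $\mathbb E[z^2]/\mathbb E[z]^2=1/p'$ for the respective atom weight $p'$. This equals $\kappa$ for $P_0$ and $\kappa/(1+\epsilon)\le\kappa$ for $P_1$, so both distributions lie in $\mathcal P_\kappa$. (If $z=0$ causes issues with $s=\log z$, I would replace it by a tiny $\varepsilon_0>0$ and take a limit.) The means are $pa$ and $pa(1+\epsilon)$, so $\Delta=\log(1+\epsilon)\ge \epsilon/2$ and $\Delta\le 1$.

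It remains to keep the total variation at most $1/2$. I would use Pinsker together with tensorization of KL: $\mathrm{TV}(P_0^{\otimes T},P_1^{\otimes T})\le\sqrt{T\,\mathrm{KL}/2}$. The KL between Bernoulli$(p)$ and Bernoulli$(p(1+\epsilon))$ is bounded by the $\chi^2$ divergence, which equals $\frac{(p\epsilon)^2}{p(1-p)}\le 2p\epsilon^2$. Choosing $\epsilon^2=\frac{1}{4Tp}=\frac{\kappa}{4T}$ gives $\mathrm{TV}\le 1/2$, and $\epsilon\le 1$ holds precisely because $T\ge\kappa$. \Cref{lem:lecam-general-loss} then gives a max risk of at least $\frac14\cdot\frac{e^{-1}}{4}\cdot\frac{\epsilon^2}{4}\gtrsim \kappa/T$. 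Finally, $\kappa\ge \kappa-1$ yields \eqref{eq:subopt-lb}.

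The main obstacle I expect is the parameter bookkeeping. The requirements are that both distributions lie in $\mathcal P_\kappa$ with the correct orientation (the bound $1/p'\le\kappa$ needs $p'\ge p$, which is why the larger weight goes to $P_1$), that $\Delta\le1$, and that the chi-square bound uses $p\le1/2$ (ensured by $\kappa\ge2$). I would also check that adaptivity does not matter. It does not, because the oracle answers are deterministic functions of the i.i.d.\ draws and the queries, so the transcript law is dominated by $P^{\otimes T}$ and the TV bound applies to it.
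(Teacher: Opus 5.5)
Your proposal is correct and follows essentially the same route as the paper: Le Cam's two-point method on two-atom laws whose large atom has probability $1/\kappa$ perturbed by order $1/\sqrt{\kappa T}$, with KL bounded by $\chi^2$, tensorization plus Pinsker, and \Cref{lem:inf-sum-loss-rigorous} for the separation. The differences are cosmetic. The paper puts the small atom at $\varepsilon>0$, so that $z=e^{s}$ stays positive and membership needs the check $R_\varepsilon(p)\le 1/p$, and it uses an additive perturbation, which yields $(\kappa-1)^2/(\kappa T)$ before invoking $\kappa\ge 2$. Your atom at $0$, which the stated $\mathcal P_\kappa$ permits, makes $\Delta=\log(1+\epsilon)$ exact and gives $\kappa/T$ directly, with a better constant.
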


\begin{proof}

We construct two strictly positive hard instances in $\mathcal P_\kappa$.
Fix $\varepsilon\in(0,1]$ and define two distributions supported on $\{\varepsilon,\kappa\}$:
\[
P_i^\varepsilon:\quad
\mathbb P(z=\kappa)=p_i,\qquad
\mathbb P(z=\varepsilon)=1-p_i,\qquad i\in\{0,1\},
\]
where
\[
p_0\coloneqq \frac{1}{\kappa},
\qquad
p_1\coloneqq p_0+h,
\qquad
h\coloneqq \frac{1}{8\sqrt{\kappa T}}.
\]
Since $T\ge \kappa$, we have $h\le \frac{1}{8\kappa}$ so $p_1\in(0,1)$.
Next we show that  $P_0^\varepsilon,P_1^\varepsilon\in\mathcal P_\kappa$.  For a generic $p\in(0,1)$ and support $\{\varepsilon,\kappa\}$, define
\[
R_\varepsilon(p)\coloneqq \frac{\mathbb E[z^2]}{\mathbb E[z]^2}
=
\frac{p\kappa^2+(1-p)\varepsilon^2}{\bigl(p\kappa+(1-p)\varepsilon\bigr)^2}.
\]
Let $u\coloneqq \varepsilon/\kappa\in(0,1/\kappa]\subset(0,1]$. Then
\[
R_\varepsilon(p)
=
\frac{p+(1-p)u^2}{\bigl(p+(1-p)u\bigr)^2}.
\]
We claim $R_\varepsilon(p)\le \frac{1}{p}$ for all $u\in[0,1]$. Indeed,
\begin{align*}
\bigl(p+(1-p)u\bigr)^2 - p\bigl(p+(1-p)u^2\bigr) &=
p^2+2p(1-p)u+(1-p)^2u^2 - p^2 - p(1-p)u^2\\
&=
(1-p)u\Bigl(2p+(1-2p)u\Bigr)\ \ge\ 0,
\end{align*}
since $u\in[0,1]$ and $2p+(1-2p)u\ge \min\{2p,1\}\ge 0$.
Thus $R_\varepsilon(p)\le 1/p$.
Since $p_0=1/\kappa$ and $p_1\ge p_0$, we have $1/p_i\le \kappa$, hence
$R_\varepsilon(p_i)\le \kappa$ and therefore $P_0^\varepsilon,P_1^\varepsilon\in\mathcal P_\kappa$.
Next, we compute the separation $\Delta$ between $\nu_*$'s.
Let $m_i^\varepsilon=\mathbb E_{P_i^\varepsilon}[z]=\varepsilon+p_i(\kappa-\varepsilon)$ and
$\nu_i^\varepsilon=\log m_i^\varepsilon$.
Then
\[
m_1^\varepsilon-m_0^\varepsilon = h(\kappa-\varepsilon)\ge h(\kappa-1),
\qquad
m_0^\varepsilon = \varepsilon+p_0(\kappa-\varepsilon)=1+\Bigl(1-\frac{1}{\kappa}\Bigr)\varepsilon\in[1,2].
\]
Hence
\[
\Delta \coloneqq |\nu_1^\varepsilon-\nu_0^\varepsilon|
=\log\!\left(1+\frac{m_1^\varepsilon-m_0^\varepsilon}{m_0^\varepsilon}\right)
\ge \frac{1}{2}\cdot \frac{h(\kappa-1)}{2}
=\frac{\kappa-1}{32\sqrt{\kappa T}},
\]
where we used $\log(1+x)\ge x/2$ for $x\in[0,1/2]$ and the fact that
$\frac{h(\kappa-\varepsilon)}{m_0^\varepsilon}\le h\kappa \le 1/8$.
In particular, $\Delta\le h\kappa \le 1/8<1$.
Next, we show the lower bound of $\inf_{\nu}\Big( (F_0(\nu)-F_0(\nu_0^\varepsilon))+(F_1(\nu)-F_1(\nu_1^\varepsilon))\Big)$. Under $P_i^\varepsilon$ the objective is $F_i(\nu)=m_i^\varepsilon e^{-\nu}+\nu$ and the optimal value is
$F_i(\nu_i^\varepsilon)=1+\nu_i^\varepsilon$.
Thus the suboptimality can be written as
\[
F_i(\nu)-F_i(\nu_i^\varepsilon)=e^{\nu_i^\varepsilon-\nu}+(\nu-\nu_i^\varepsilon)-1
=\phi(\nu-\nu_i^\varepsilon),
\qquad \phi(u)=e^{-u}+u-1.
\]
Let $\nu_0^\varepsilon<\nu_1^\varepsilon$ and set $u=\nu-\nu_0^\varepsilon$. Then
\[
\phi(\nu-\nu_0^\varepsilon)+\phi(\nu-\nu_1^\varepsilon)=\phi(u)+\phi(u-\Delta).
\]
The function $u\mapsto \phi(u)+\phi(u-\Delta)$ is convex and its minimizer lies in $[0,\Delta]$.
Since $\Delta\le 1$, applying Lemma~\ref{lem:inf-sum-loss-rigorous} gives
\[
\phi(u)+\phi(u-\Delta)
\ \ge\ \frac{e^{-1}}{4}\Delta^2.
\]
Therefore,
\begin{equation}
\label{eq:inf-sum-loss}
\inf_{\nu}\Big( (F_0(\nu)-F_0(\nu_0^\varepsilon))+(F_1(\nu)-F_1(\nu_1^\varepsilon))\Big)
\ \ge\ \frac{e^{-1}}{4}\Delta^2.
\end{equation}

Next, we show the total variation between $P_0^\varepsilon$, and $P_1^\varepsilon$ is bounded.  Because the two distributions differ only in the Bernoulli parameter,
\[
\mathrm{KL}(P_0^\varepsilon, P_1^\varepsilon)
=
p_0\log\frac{p_0}{p_1}+(1-p_0)\log\frac{1-p_0}{1-p_1}.
\]
Using the bound $\mathrm{KL}(P, Q)\le \chi^2(P, Q)$ and the fact that for Bernoulli measures
$\chi^2(P_0^\varepsilon, P_1^\varepsilon)=\frac{h^2}{p_1(1-p_1)}$, we get
\[
\mathrm{KL}(P_0^\varepsilon, P_1^\varepsilon)\le \frac{h^2}{p_1(1-p_1)}.
\]
Since $h\le \frac{1}{2\kappa}$, we have $p_1\le p_0+h \le \frac{3}{2\kappa}\le \frac{3}{4}$,
hence $1-p_1\ge 1/4$, and also $p_1\ge p_0=1/\kappa$. Therefore
$p_1(1-p_1)\ge \frac{1}{4\kappa}$ and
\[
\mathrm{KL}(P_0^\varepsilon, P_1^\varepsilon)\le 4\kappa h^2.
\]
For $T$ i.i.d.\ samples, this gives
\[
\mathrm{KL}\big((P_0^\varepsilon)^{\otimes T}, (P_1^\varepsilon)^{\otimes T}\big)
=
T\,\mathrm{KL}(P_0^\varepsilon, P_1^\varepsilon)
\le
4\kappa T h^2
=
\frac{1}{16}.
\]
By Pinsker's inequality,
\[
\mathrm{TV}\big((P_0^\varepsilon)^{\otimes T},(P_1^\varepsilon)^{\otimes T}\big)
\le
\sqrt{\frac{1}{2}\mathrm{KL}\big((P_0^\varepsilon)^{\otimes T}, (P_1^\varepsilon)^{\otimes T}\big)}
\le
\sqrt{\frac{1}{32}}
\le \frac{1}{4}.
\]

Finally, we apply Lemma~\ref{lem:lecam-general-loss} to $P_0=(P_0^\varepsilon)^{\otimes T}$,
$P_1=(P_1^\varepsilon)^{\otimes T}$ and losses
\[
L_i(\nu)\coloneqq F_i(\nu)-F_i(\nu_i^\varepsilon)\ge 0.
\]
Using \eqref{eq:inf-sum-loss} and $\mathrm{TV}\le 1/4$ yields for any estimator $\widehat\nu$,
\[
\max_{i\in\{0,1\}}\mathbb E_{P_i^\varepsilon}\!\left[F_i(\widehat\nu)-F_i(\nu_i^\varepsilon)\right]
\ge
\frac{1-\mathrm{TV}}{2}\cdot \frac{e^{-1}}{4}\Delta^2
\ge
\frac{3}{8}\cdot \frac{e^{-1}}{4}\Delta^2
=
\frac{3e^{-1}}{32}\Delta^2.
\]
Substituting $\Delta^2\ge \frac{(\kappa-1)^2}{1024\,\kappa\,T}\ge \frac{\kappa-1}{2048\,T}$ (since $\kappa\ge 2$)
gives
\[
\max_{i\in\{0,1\}}\mathbb E_{P_i^\varepsilon}\!\left[F_i(\widehat\nu)-F_i(\nu_i^\varepsilon)\right]
\ \ge\
\frac{3}{65536\,e}\cdot \frac{\kappa-1}{T}.
\]
Since $P_0^\varepsilon,P_1^\varepsilon\in\mathcal P_\kappa$, this implies \eqref{eq:subopt-lb}
with $c=\frac{3}{65536\,e}$. Then we complete the proof.
\end{proof}

\subsection{An Optimal Bound for SPMD}

In fact, we can improve the convergence rate of SPMD to $O\left(\frac{\kappa -1}{ T}\right)$, which matches the lower bound established above. The key is to use a specially designed learning rate scheme $\alpha_t$. Recall the SPMD update in \Cref{lem:spmd-update-exp-neg-nu}: 
\begin{equation}\label{eq:s-update}
\expnegnu_{t}= \frac{\expnegnu_{t-1}+\alpha_t}{1+\alpha_t z_t},
\end{equation}
where $\expnegnu_{t-1} = e^{-\nu_{t-1}}, z_t = e^{s(\zeta_t)}$. We focus on the case where \(s(\zeta)\) follows a subgaussian distribution.
\begin{assumption}\label[assumption]{ass:subg}
$s(\zeta)$ is $\sigma^2$-subgaussian, i.e.,
\[
\mathbb E\big[e^{\lambda(s(\zeta)-\mathbb E[s(\zeta)])}\big]\le e^{\lambda^2\sigma^2/2}
\quad\forall \lambda\in\mathbb R.
\]
\end{assumption}
The following lemma indicates that with our specific choice of the learning rate, \(\nu_{t}\) is the exact minimizer of an empirical objective.
\begin{lemma}
\label[lemma]{prop:pmd-equals-erm}
Let $S_t\coloneqq \sum_{i=1}^t z_i$ and $\bar z_t\coloneqq S_t/t$.
Initialize $\expnegnu_1=1/z_1$ (or equivalently $\alpha_1=\infty$)  and for $t\ge 2$ choose
\begin{equation}
\label{eq:alpha-erm}
\alpha_t
\;\coloneqq\;
\frac{\expnegnu_{t-1}}{t-1}
\;=\;
\frac{1}{S_{t-1}}.
\end{equation}
Then for all $t\ge 1$,
\begin{equation}
\label{eq:st-erm}
\expnegnu_t
\;=\;
\frac{t}{S_t},
\qquad
\nu_t
\;=\;
-\log \expnegnu_t
\;=\;
\log\Bigl(\frac{S_t}{t}\Bigr)
\;=\;
\log \bar z_t.
\end{equation}
In particular, $\nu_t$ is the exact minimizer of the empirical objective
\[
\widehat F_t(\nu)\;\coloneqq\;\bar z_t e^{-\nu}+\nu
\quad\text{since}\quad
\arg\min_\nu \widehat F_t(\nu)=\log \bar z_t.
\]
\end{lemma}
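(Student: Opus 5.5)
The proof is an induction on $t$ built on the closed-form recursion \eqref{eq:s-update} from \Cref{lem:spmd-update-exp-neg-nu}, namely $\expnegnu_t=(\expnegnu_{t-1}+\alpha_t)/(1+\alpha_t z_t)$.

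\textbf{Base case.} The initialization fixes $\expnegnu_1=1/z_1=1/S_1$, which is \eqref{eq:st-erm} at $t=1$. This agrees with the limit $\alpha_1\to\infty$ of the recursion: the ratio $(\expnegnu_0+\alpha_1)/(1+\alpha_1 z_1)$ tends to $1/z_1$ whatever $\expnegnu_0$ is. This is the same $\alpha_t=\infty$ reduction that recovers BSGD in the appendix.

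\textbf{Inductive step.} Assume $\expnegnu_{t-1}=(t-1)/S_{t-1}$ for some $t\ge 2$. The step-size rule then gives $\alpha_t=\expnegnu_{t-1}/(t-1)=1/S_{t-1}$, which also confirms that the two expressions in \eqref{eq:alpha-erm} coincide. Substituting into the recursion:
\begin{equation*}
\expnegnu_t=\frac{\frac{t-1}{S_{t-1}}+\frac{1}{S_{t-1}}}{1+\frac{z_t}{S_{t-1}}}
=\frac{t/S_{t-1}}{(S_{t-1}+z_t)/S_{t-1}}=\frac{t}{S_t}.
\end{equation*}
Taking logarithms gives $\nu_t=-\log\expnegnu_t=\log(S_t/t)=\log\bar z_t$.

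\textbf{Empirical-minimizer claim.} The objective $\widehat F_t(\nu)=\bar z_t e^{-\nu}+\nu$ is strictly convex because $\bar z_t>0$. Its derivative $1-\bar z_t e^{-\nu}$ vanishes exactly at $\nu=\log\bar z_t$, which is the same computation that gives $\nu_*=\log m$ for $F$.

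There is no real obstacle here; the only care needed is in the base case. One must interpret $\alpha_1=\infty$ as the limit of the update, or simply take $\expnegnu_1=1/z_1$ as the definition. One should also note that $z_i=e^{s(\zeta_i)}>0$, so $S_{t-1}>0$ and every $\alpha_t$ is well defined.
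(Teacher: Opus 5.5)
Your proof is correct and follows the paper's argument exactly: induction on $t$ using the closed-form recursion for $e^{-\nu_t}$ from \Cref{lem:spmd-update-exp-neg-nu}, with the same algebra in the inductive step. Your extra remarks are valid details that the paper leaves implicit: the $\alpha_1\to\infty$ limit, the positivity of $S_{t-1}$ so that each $\alpha_t$ is well defined, and the first-order check that $\log \bar z_t$ minimizes $\widehat F_t$.
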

\begin{proof}
We prove \eqref{eq:st-erm} by induction. For $t=1$, $\expnegnu_1=1/z_1=1/S_1$ holds by initialization.
Assume $\expnegnu_{t-1}=(t-1)/S_{t-1}$. Then \eqref{eq:alpha-erm} gives $\alpha_t=1/S_{t-1}$, and the recursion
\eqref{eq:s-update} yields
\[
\expnegnu_t
=
\frac{\frac{t-1}{S_{t-1}}+\frac{1}{S_{t-1}}}{1+\frac{z_t}{S_{t-1}}}
=
\frac{\frac{t}{S_{t-1}}}{\frac{S_{t-1}+z_t}{S_{t-1}}}
=
\frac{t}{S_{t-1}+z_t}
=
\frac{t}{S_t}.
\]
Thus $\expnegnu_t=t/S_t$ and $\nu_t=-\log \expnegnu_t=\log(S_t/t)=\log\bar z_t$. This completes the proof.
\end{proof}

Since $\frac{\mathrm{Var}(z)}{(\E[z])^2} = \kappa - 1$, we have 
\[
\mathrm{Var}(\bar z_T)=\frac{\mathrm{Var}(z)}{T}
=
\frac{(\kappa-1)m^2}{T}.
\]
Since \Cref{prop:pmd-equals-erm} gives $\nu_T=\log\bar z_T$, in light of \Cref{lem:self-bounding-r} we can write
\begin{equation}
\label{eq:gap-in-Q}
F(\nu_T)-F(\nu_*)
=
\frac{m}{\bar z_T}-1+\log\Bigl(\frac{\bar z_T}{m}\Bigr)
=
\frac{1}{Q_T}+\log Q_T-1,
\qquad
Q_T\coloneqq \frac{\bar z_T}{m}.
\end{equation}
Note that $\mathbb{E}[Q_T]=1$ and $\mathrm{Var}(Q_T)=(\kappa-1)/T$.
Let $U_T\coloneqq Q_T-1=(\bar z_T-m)/m$. Then $\mathbb{E}[U_T]=0$ and $\mathbb{E}[U_T^2]=(\kappa-1)/T$.
Define
\[
g(u)\;\coloneqq\;\frac{1}{1+u}+\log(1+u)-1, \forall u>-1
\]
so that by \eqref{eq:gap-in-Q} we have $F(\nu_T)-F(\nu_*)=g(U_T)$. Next we present three lemmas that help prove an upper bound on \(g\).

\begin{lemma}
\label{lem:g-quadratic}
For all $u\ge -\tfrac12$,
\[
g(u)\le 2u^2.
\]
\end{lemma}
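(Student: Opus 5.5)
We need $g(u)=\frac{1}{1+u}+\log(1+u)-1\le 2u^2$ for $u\ge -\tfrac12$. The approach is a second-order Taylor argument around $u=0$, using that $g$ vanishes to second order there. We have $g(0)=0$. The first derivative is
\[
g'(u)=-\frac{1}{(1+u)^2}+\frac{1}{1+u}=\frac{u}{(1+u)^2},
\]
so $g'(0)=0$. Differentiating once more gives
\[
g''(u)=\frac{(1+u)^2-2u(1+u)}{(1+u)^4}=\frac{1-u}{(1+u)^3}.
\]

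By Taylor's theorem with Lagrange remainder, $g(u)=\tfrac12 g''(\xi)u^2$ for some $\xi$ between $0$ and $u$. It therefore suffices to show $g''(\xi)\le 4$ for every $\xi\ge -\tfrac12$.

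We split into two cases according to the sign of $\xi$.

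\emph{Case $\xi\ge 0$.} Here $1-\xi\le 1$ and $(1+\xi)^3\ge 1$, so $g''(\xi)\le 1$.

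\emph{Case $\xi\in[-\tfrac12,0)$.} The numerator $1-\xi$ is positive and decreasing in $\xi$, while the denominator $(1+\xi)^3$ is positive and increasing. Hence $g''$ is decreasing on this interval and its maximum is attained at $\xi=-\tfrac12$:
\[
g''(-\tfrac12)=\frac{3/2}{(1/2)^3}=12.
\]
This exceeds $4$, so the naive Taylor bound only yields $g(u)\le 6u^2$ on the negative side. This is the main obstacle.

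To overcome it, handle $u\in[-\tfrac12,0)$ directly rather than through the worst-case curvature. Define $k(u)=2u^2-g(u)$. Then $k(0)=0$ and
\[
k'(u)=4u-\frac{u}{(1+u)^2}=u\Bigl(4-\frac{1}{(1+u)^2}\Bigr).
\]
For $u\in[-\tfrac12,0)$ we have $(1+u)^2\ge\tfrac14$, so $\frac{1}{(1+u)^2}\le 4$. Hence the bracket is nonnegative while $u<0$, giving $k'(u)\le 0$ on this interval. Thus $k$ is nonincreasing on $[-\tfrac12,0]$, and so $k(u)\ge k(0)=0$ there.

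For $u\ge 0$, the same derivative computation gives $k'(u)=u\bigl(4-(1+u)^{-2}\bigr)\ge 3u\ge 0$. So $k$ is nondecreasing on $[0,\infty)$, and $k(u)\ge k(0)=0$.

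Combining both ranges, $k(u)\ge 0$, i.e.\ $g(u)\le 2u^2$, for all $u\ge-\tfrac12$. The monotonicity argument on $k$ alone suffices, and the constant $2$ is exactly what makes $4-(1+u)^{-2}\ge 0$ hold down to $u=-\tfrac12$.
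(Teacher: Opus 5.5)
Your proof is correct and, once you drop the Taylor-remainder detour, it is essentially the paper's own argument: set $k(u)=2u^2-g(u)$, observe $k'(u)=u\bigl(4-(1+u)^{-2}\bigr)$ is nonpositive on $[-\tfrac12,0]$ and nonnegative on $[0,\infty)$, and conclude that $k$ is minimized at $u=0$, where $k(0)=0$. You can delete the Taylor discussion entirely, since the monotonicity argument alone proves the bound.
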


\begin{proof}
Define $h(u)\coloneqq 2u^2-g(u)$ for $u>-1$.
Since $g'(u)=\frac{u}{(1+u)^2}$, we have
\[
h'(u)=4u-\frac{u}{(1+u)^2}
=u\Big(4-\frac{1}{(1+u)^2}\Big).
\]
For $u\ge -\tfrac12$, $(1+u)^2\ge \tfrac14$, hence $\frac{1}{(1+u)^2}\le 4$.
Therefore $h'(u)\le 0$ for $u\in[-\tfrac12,0]$ and $h'(u)\ge 0$ for $u\ge 0$.
Thus $h$ attains its minimum over $[-\tfrac12,\infty)$ at $u=0$, where $h(0)=0$.
Hence $h(u)\ge 0$ on $[-\tfrac12,\infty)$, i.e., $g(u)\le 2u^2$. This completes the proof.
\end{proof}

\begin{lemma}
\label[lemma]{lem:left-tail}
Let $z_i\ge 0$ i.i.d.\ with finite $\kappa$.
Then
\[
\mathbb P(Q_T\le 1/2)=\mathbb P(\bar z_T\le m/2)\le \ea{-T/(8\kappa)}.
\]
\end{lemma}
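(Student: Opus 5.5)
This is a lower-tail bound for a sum of nonnegative i.i.d.\ variables, so the plan is the Chernoff method using only the nonnegativity of $z_i$ and its first two moments. For $\lambda>0$, Markov's inequality applied to $e^{-\lambda S_T}$ gives
\[
\mathbb P(S_T\le Tm/2)\le e^{\lambda Tm/2}\bigl(\mathbb E[e^{-\lambda z}]\bigr)^T .
\]
To control the Laplace transform, use the elementary inequality $e^{-x}\le 1-x+x^2/2$, valid for all $x\ge 0$. This is where nonnegativity of $z$ is essential, since no upper moment bound is available. It yields
\[
\mathbb E[e^{-\lambda z}]\le 1-\lambda m+\tfrac{\lambda^2}{2}\mathbb E[z^2]\le 1-\lambda m+\tfrac{\lambda^2}{2}\kappa m^2 .
\]
Combining this with $1+y\le e^y$ gives $\mathbb E[e^{-\lambda z}]\le \exp\bigl(-\lambda m+\lambda^2\kappa m^2/2\bigr)$.

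Substituting back, the probability is at most
\[
\exp\Bigl(T\bigl(-\lambda m/2+\lambda^2\kappa m^2/2\bigr)\Bigr).
\]
The exponent is minimized at $\lambda=1/(2\kappa m)$, which gives the value $-T/(8\kappa)$. This is exactly the claimed bound $e^{-T/(8\kappa)}$. Finally, $Q_T\le 1/2$ is equivalent to $\bar z_T\le m/2$, which is equivalent to $S_T\le Tm/2$, so the two probabilities in the statement coincide.

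There is no real obstacle here. The only point to check is that the quadratic upper bound on $e^{-x}$ holds on all of $[0,\infty)$. Set $k(x)=1-x+x^2/2-e^{-x}$. Then $k(0)=0$ and $k'(x)=-1+x+e^{-x}\ge 0$ for $x\ge0$, so $k\ge0$ there. Finiteness of $\kappa$ guarantees that $\mathbb E[z^2]$ is finite, and $m>0$ makes the choice of $\lambda$ well-defined.
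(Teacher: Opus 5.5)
Your proposal is correct and is essentially the paper's proof: a Chernoff bound on $e^{-\lambda S_T}$, the inequality $e^{-x}\le 1-x+x^2/2$ for $x\ge 0$, then $1+y\le e^{y}$ and an optimization over $\lambda$. The only difference is cosmetic. You substitute $\mathbb E[z^2]\le\kappa m^2$ before choosing $\lambda=1/(2\kappa m)$, whereas the paper takes $\lambda=m/(2\mathbb E[z^2])$ and then uses $m^2/\mathbb E[z^2]=1/\kappa$; these are the same choice because $\kappa$ is the exact second-moment ratio, and you additionally verify the elementary inequality that the paper uses without proof.
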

\begin{proof}
For any $\lambda>0$, by the Chernoff bound, we have
\[
\mathbb P\Big(\sum_{i=1}^T z_i\le \tfrac{Tm}{2}\Big)
=
\mathbb P\Big(e^{-\lambda\sum_{i=1}^T z_i}\ge e^{-\lambda Tm/2}\Big)
\le
e^{\lambda Tm/2}\Big(\mathbb E[e^{-\lambda z}]\Big)^T.
\]
Using $e^{-x}\le 1-x+x^2/2$ for $x\ge 0$,
\[
\mathbb E[e^{-\lambda z}]
\le
1-\lambda m+\frac{\lambda^2}{2}\mathbb E[z^2]
\le
\exp\!\Big(-\lambda m+\frac{\lambda^2}{2}\mathbb E[z^2]\Big).
\]
Therefore
\[
\mathbb P(\bar z_T\le m/2)
\le
\exp\!\Big(T\Big(\lambda m/2-\lambda m+\frac{\lambda^2}{2}\mathbb E[z^2]\Big)\Big)
=
\exp\!\Big(-T\Big(\frac{\lambda m}{2}-\frac{\lambda^2}{2}\mathbb E[z^2]\Big)\Big).
\]
Choosing $\lambda=m/(2\mathbb E[z^2])$, we get $-Tm^2/(8\mathbb E[z^2])=-T/(8\kappa)$. This completes the proof.
\end{proof}

\begin{lemma}
\label[lemma]{lem:neg-moment}
If $s$ is $\sigma^2$-subgaussian, then
\[
m^2\,\mathbb E[z^{-2}] \;=\; (\mathbb E[e^{s}])^2\,\mathbb E[e^{-2s}]
\;\le\; e^{3\sigma^2}.
\]
\end{lemma}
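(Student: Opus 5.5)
The plan is to bound the two factors separately using the subgaussian moment generating function, and then multiply. Write $\bar s\coloneqq \mathbb E[s(\zeta)]$ and center the variable as $s=\bar s+(s-\bar s)$.

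For the first factor, $m=\mathbb E[e^{s}]=e^{\bar s}\,\mathbb E[e^{(s-\bar s)}]$. Assumption~\ref{ass:subg} with $\lambda=1$ gives $m\le e^{\bar s}e^{\sigma^2/2}$, and hence
\[
m^2\le e^{2\bar s}e^{\sigma^2}.
\]

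For the second factor, $\mathbb E[z^{-2}]=\mathbb E[e^{-2s}]=e^{-2\bar s}\,\mathbb E[e^{-2(s-\bar s)}]$. Applying Assumption~\ref{ass:subg} with $\lambda=-2$ bounds the last expectation by $e^{4\sigma^2/2}=e^{2\sigma^2}$, so
\[
\mathbb E[z^{-2}]\le e^{-2\bar s}e^{2\sigma^2}.
\]

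Multiplying the two bounds, the $e^{\pm 2\bar s}$ factors cancel and we get $m^2\,\mathbb E[z^{-2}]\le e^{\sigma^2}e^{2\sigma^2}=e^{3\sigma^2}$, which is the claim. The first equality in the statement is just the definitions $m=\mathbb E[e^{s}]$ and $z=e^{s}$.

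There is no real obstacle. The only points to check are that the subgaussian inequality is assumed for all $\lambda\in\mathbb R$, so the negative value $\lambda=-2$ is admissible, and that $\bar s$ is finite, which follows from subgaussianity. Cancelling the mean is what makes the bound independent of $\mathbb E[s]$.
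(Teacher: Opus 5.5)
Your proposal is correct and is essentially the paper's proof: the paper centers $X=s-\mathbb E[s]$, cancels the $e^{\pm 2\mathbb E[s]}$ factors to get $(\mathbb E[e^{X}])^2\,\mathbb E[e^{-2X}]$, and bounds these with the subgaussian inequality at $\lambda=1$ and $\lambda=-2$. The only difference is that you cancel the mean after bounding each factor rather than before, which is immaterial.
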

\begin{proof}
Let $\mu=\mathbb E[s]$ and $X=s-\mu$. Then $\mathbb E[X]=0$ and
$z=e^{s}=e^{\mu}e^{X}$.
Thus
\[
m^2\mathbb E[z^{-2}]
=
\big(e^{\mu}\mathbb E[e^{X}]\big)^2\cdot \big(e^{-2\mu}\mathbb E[e^{-2X}]\big)
=
\big(\mathbb E[e^{X}]\big)^2\,\mathbb E[e^{-2X}].
\]
By subgaussianity,
\[
\mathbb E[e^{X}]\le e^{\sigma^2/2},
\qquad
\mathbb E[e^{-2X}]\le e^{(2^2)\sigma^2/2}=e^{2\sigma^2}.
\]
Hence $m^2\mathbb E[z^{-2}]\le e^{\sigma^2}e^{2\sigma^2}=e^{3\sigma^2}$. This completes the proof.
\end{proof}

Then we are ready to prove the convergence of SPMD with our specific choice of learning rate.
\begin{theorem}
\label{thm:pmd-subgaussian-kappa-over-T}
Under \Cref{ass:subg}, the SPMD iterate $\nu_T$ produced by
$\alpha_t=\expnegnu_{t-1}/(t-1)$ satisfies
\begin{equation*}
\mathbb E\big[F(\nu_T)-F(\nu_*)\big]
\;\le\;
\frac{2(\kappa-1)}{T}
\;+\;
\exp\left(\frac{3}{2}\sigma^2- \frac{T}{16\kappa}\right).
\end{equation*}
In particular, since the second term is exponentially small in $T/\kappa$, and we have
\[
\mathbb E\big[F(\nu_T)-F(\nu_*)\big]=O(\kappa/T),
\]
for every $\sigma^2$-subgaussian $s(\zeta)$.
\end{theorem}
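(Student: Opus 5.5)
The argument combines the ERM representation of the SPMD iterate with a split on the left tail. By \Cref{prop:pmd-equals-erm}, $\nu_T=\log\bar z_T$. By \eqref{eq:gap-in-Q}, $F(\nu_T)-F(\nu_*)=g(U_T)$ with $U_T=Q_T-1$, $\mathbb E[U_T]=0$ and $\mathbb E[U_T^2]=(\kappa-1)/T$. I would split according to the event $E=\{Q_T>1/2\}$, i.e.\ $\{U_T>-1/2\}$:
\[
\mathbb E[g(U_T)]=\mathbb E[g(U_T)\mathbf 1_E]+\mathbb E[g(U_T)\mathbf 1_{E^c}].
\]

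On $E$, \Cref{lem:g-quadratic} gives $g(U_T)\le 2U_T^2$. Hence
\[
\mathbb E[g(U_T)\mathbf 1_E]\le 2\mathbb E[U_T^2]=\frac{2(\kappa-1)}{T},
\]
which is the first term of the bound.

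On $E^c$ we have $Q_T\le 1/2<1$, so $\log Q_T<0$ and $g(U_T)=1/Q_T+\log Q_T-1\le 1/Q_T=m/\bar z_T$. By Cauchy--Schwarz,
\[
\mathbb E[g(U_T)\mathbf 1_{E^c}]\le \sqrt{\mathbb E[m^2\bar z_T^{-2}]}\,\sqrt{\mathbb P(E^c)}.
\]
\Cref{lem:left-tail} gives $\mathbb P(E^c)\le e^{-T/(8\kappa)}$, so the second factor is at most $e^{-T/(16\kappa)}$.

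For the negative moment of $\bar z_T$, note that $x\mapsto x^{-2}$ is convex on $(0,\infty)$. Jensen's inequality on the average then gives $\bar z_T^{-2}\le \frac1T\sum_i z_i^{-2}$, so $\mathbb E[\bar z_T^{-2}]\le \mathbb E[z^{-2}]$. \Cref{lem:neg-moment} yields $m^2\mathbb E[z^{-2}]\le e^{3\sigma^2}$, so the first factor is at most $e^{3\sigma^2/2}$. Multiplying the two factors produces the term $\exp(\tfrac32\sigma^2-\tfrac{T}{16\kappa})$, and adding the two pieces gives the claimed bound.

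The main obstacle is this tail term. Because $g$ blows up as $Q_T\to0$, a bound via the variance alone fails there, and the argument needs the subgaussian negative moment from \Cref{lem:neg-moment}. The Jensen step reducing the negative moment of the average to that of a single sample is what makes this work.

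Finally, the $O(\kappa/T)$ statement follows because the tail term is exponentially small in $T/\kappa$. Specifically, $e^{3\sigma^2/2-T/(16\kappa)}\le \kappa/T$ as soon as $T\gtrsim \kappa(\sigma^2+\log(T/\kappa))$, where the constant depends only on $\sigma^2$.
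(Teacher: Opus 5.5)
Your proposal is correct and follows the paper's proof essentially step for step: the ERM identity $\nu_T=\log\bar z_T$, the split of $\mathbb E[g(U_T)]$ at $Q_T=1/2$ with \Cref{lem:g-quadratic} on the bulk, and on the left tail the bound $g(U_T)\le 1/Q_T$ followed by Cauchy--Schwarz, Jensen for $\mathbb E[\bar z_T^{-2}]\le \mathbb E[z^{-2}]$, \Cref{lem:neg-moment}, and \Cref{lem:left-tail}. The only differences are cosmetic. You place the boundary point $Q_T=1/2$ in the tail event, which matches \Cref{lem:left-tail} exactly. Your closing $O(\kappa/T)$ remark could also be stated for all $T$ via $e^{-x}\le 1/(ex)$, which bounds the tail term by $\tfrac{16}{e}e^{3\sigma^2/2}\,\kappa/T$.
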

\begin{proof}
Since $F(\nu_T)-F(\nu_*) = g(U_T)$, we split the expectation on the events $\{U_T\ge -1/2\}$ and $\{U_T<-1/2\}$:
\[
\mathbb E[g(U_T)]
=
\mathbb E[g(U_T)\mathbf 1\{U_T\ge -1/2\}]
+
\mathbb E[g(U_T)\mathbf 1\{U_T<-1/2\}].
\]
On $\{U_T\ge -1/2\}$, Lemma~\ref{lem:g-quadratic} yields
\begin{equation}    \label{thm:pmd-subgaussian-kappa-over-T:upper}
    \mathbb E[g(U_T)\mathbf 1\{U_T\ge -1/2\}]\leq 2\,\mathbb E[U_T^2]= 2\,\mathrm{Var}(Q_T)= 2\,\frac{\mathrm{Var}(z)}{m^2T}= \frac{2(\kappa-1)}{T}.
\end{equation}

On $\{U_T<-1/2\}$ we have $Q_T\le 1/2$, and since $\log Q_T-1\le 0$,
\[
g(U_T)=\frac{1}{Q_T}+\log Q_T-1 \le \frac{1}{Q_T}.
\]
Hence, by Cauchy--Schwarz inequality, we have
\[
\mathbb E[g(U_T)\mathbf 1\{U_T<-1/2\}]
\le
\mathbb E[Q_T^{-1}\mathbf 1\{Q_T\le 1/2\}]
\le
\big(\mathbb E[Q_T^{-2}]\big)^{1/2}\,\mathbb P(Q_T\le 1/2)^{1/2}.
\]
By Jensen inequality and \Cref{lem:neg-moment},
\[
\mathbb E[Q_T^{-2}]
=
m^2\,\mathbb E[\bar z_T^{-2}]
\le
m^2\,\mathbb E[z^{-2}]
\le
e^{3\sigma^2}.
\]
By \Cref{lem:left-tail}, $\mathbb P(Q_T\le 1/2)\le \exp(-T/(8\kappa))$.
Therefore,
\begin{equation}    \label{thm:pmd-subgaussian-kappa-over-T:lower}
    \mathbb E[g(U_T)\mathbf 1\{U_T<-1/2\}]\leq \exp\left(\frac{3}{2}\sigma^2-\frac{T}{16\kappa}\right).
\end{equation}
Combining~\eqref{thm:pmd-subgaussian-kappa-over-T:upper} and~\eqref{thm:pmd-subgaussian-kappa-over-T:lower}, we complete the proof.
\end{proof}

\section{Additional Experiment Results}
\label[appendix]{app:exp}

In this section, we present additional experiment results. In \Cref{app:exp:ec_auc}, we present more results on extreme classification, partial AUC maximization, and the comparison between SGD and SPMD. And in \Cref{app:exp:clip,app:exp:dro}, we present experiment results on CLIP training and KL-regularized distributionally robust optimization, respectively. Finally, we present the implementation details and hyperparameter choices in \Cref{app:exp:hyperparams}.

\subsection{Supplementary Results for \Cref{sec:bound_analysis,sec:experiments}}
\label[appendix]{app:exp:ec_auc}

\textbf{SGD with momentum optimizer}. We conduct additional experiments on extreme classification and partial AUC maximization using the SGD with momentum optimizer. We apply the same hyperparameter tuning process for all methods as the SGD optimizer. We present the results in \Cref{fig:baselines_momentum,fig:pauc_momentum}, and we observe similar trend as the SGD optimizer in Section~\ref{sec:experiments}.

\begin{figure}[htbp]
    \centering
    \includegraphics[width=0.6\linewidth]{figs/glint_baselines_legend.pdf}

    \begin{subfigure}[b]{0.49\textwidth}
        \centering
        \includegraphics[width=0.49\linewidth]{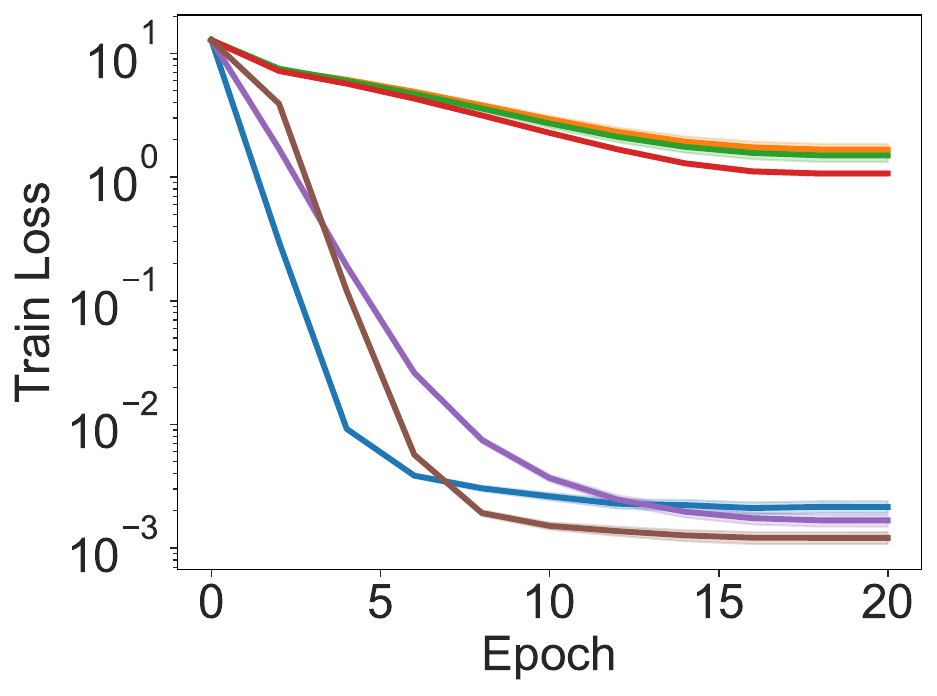}
        \includegraphics[width=0.49\linewidth]{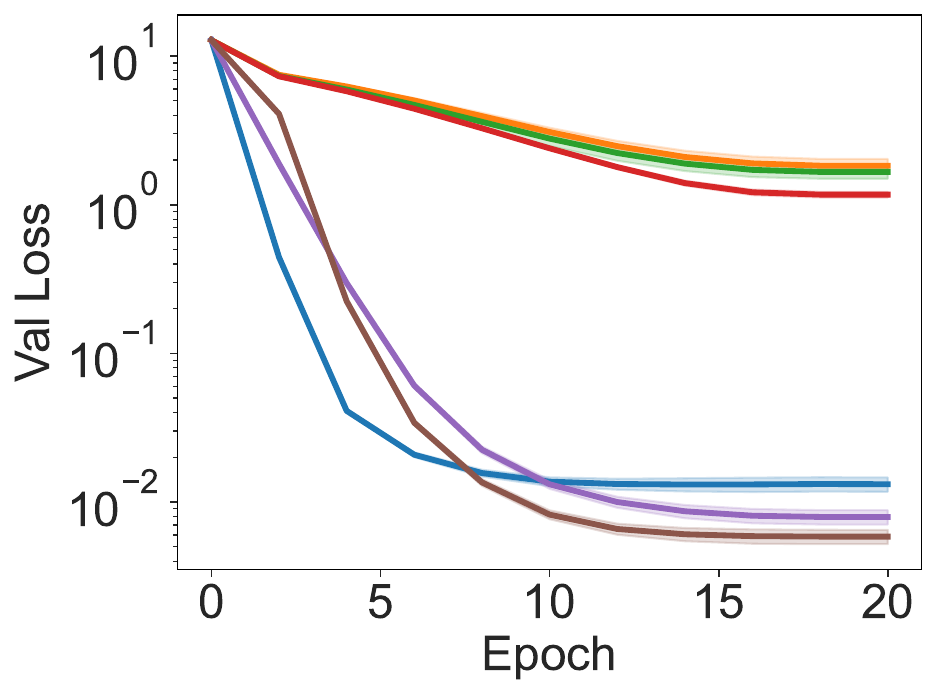}
        \caption{}
        \label{fig:baselines_momentum:glint}
    \end{subfigure}
    \begin{subfigure}[b]{0.49\textwidth}
        \centering
        \includegraphics[width=0.47\linewidth]{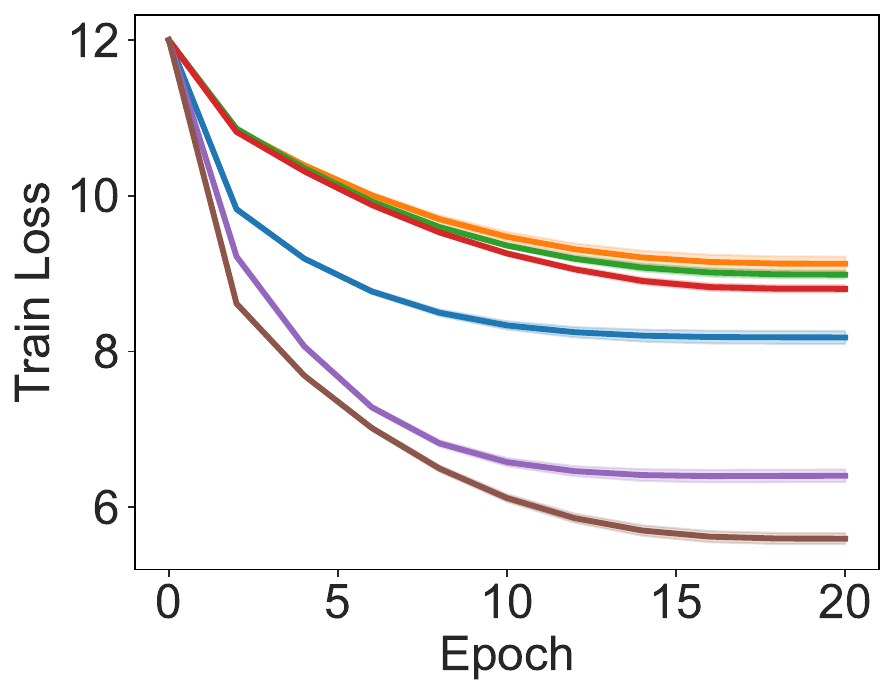}
        \includegraphics[width=0.47\linewidth]{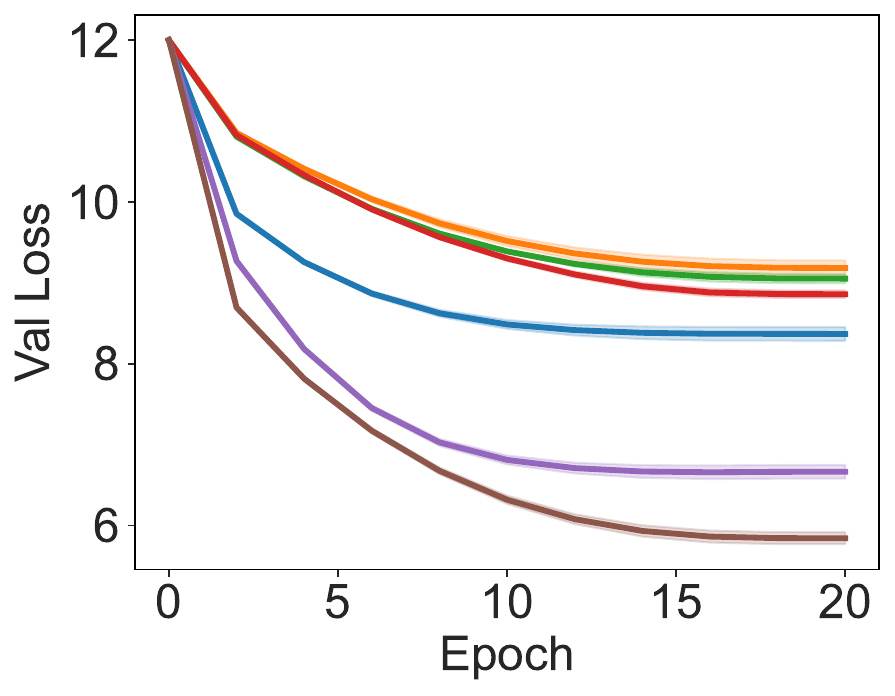}
        \caption{}
        \label{fig:baselines_momentum:treeoflife}
    \end{subfigure}
    \caption{(\subref*{fig:baselines:glint}): cross-entropy loss of different methods on the training set (left) and validation dataset (right) of Glint360K, using SGD with momentum optimizer. (\subref*{fig:baselines:treeoflife}): cross-entropy loss of different methods on the training set (left) and validation dataset (right) of TreeOfLife-10M, using SGD with momentum optimizer.}
    \label{fig:baselines_momentum}
\end{figure}

\begin{figure}[htbp]
    \centering
    \includegraphics[width=0.6\linewidth]{figs/glint_baselines_legend.pdf}

    \begin{subfigure}[b]{0.49\textwidth}
        \centering
        \includegraphics[width=0.49\linewidth]{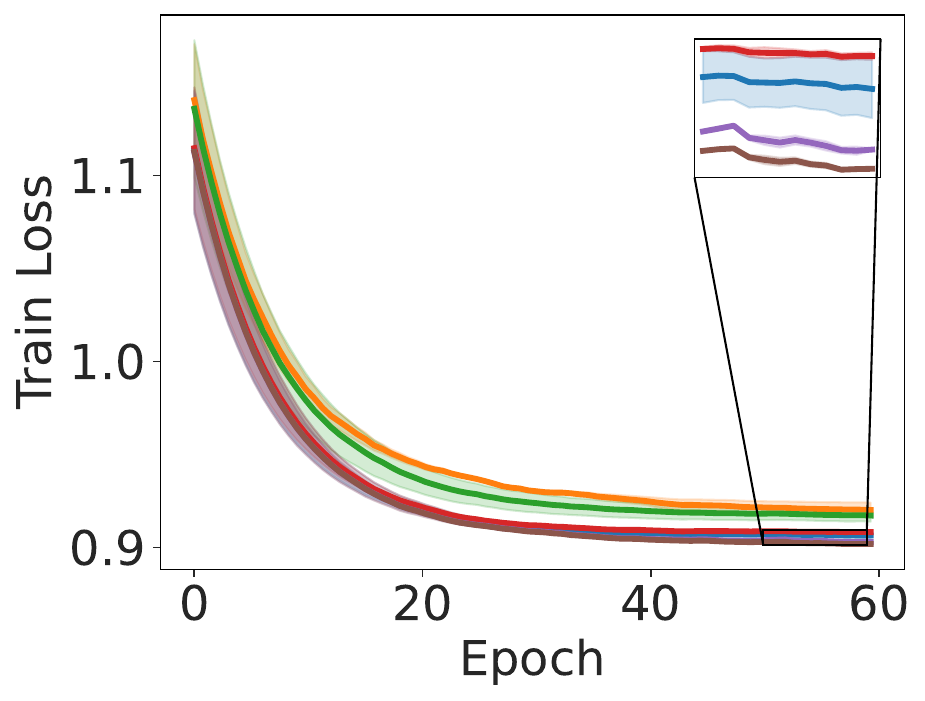}
        \includegraphics[width=0.49\linewidth]{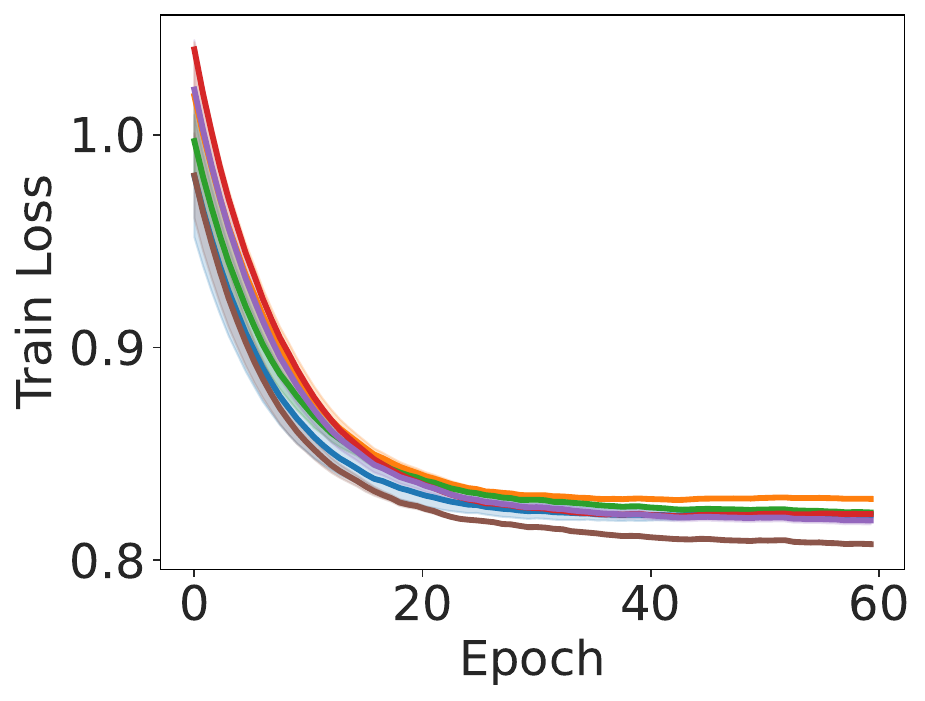}
        \caption{}
        \label{fig:cifar10_pauc_momentum}
    \end{subfigure}
    \begin{subfigure}[b]{0.49\textwidth}
        \centering
        \includegraphics[width=0.49\linewidth]{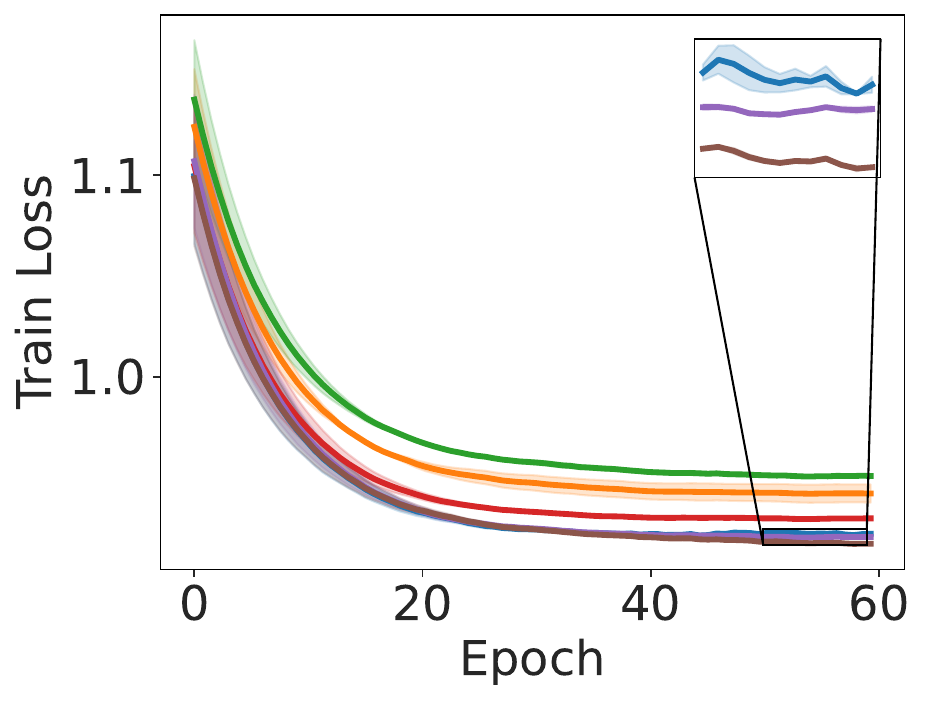}
        \includegraphics[width=0.49\linewidth]{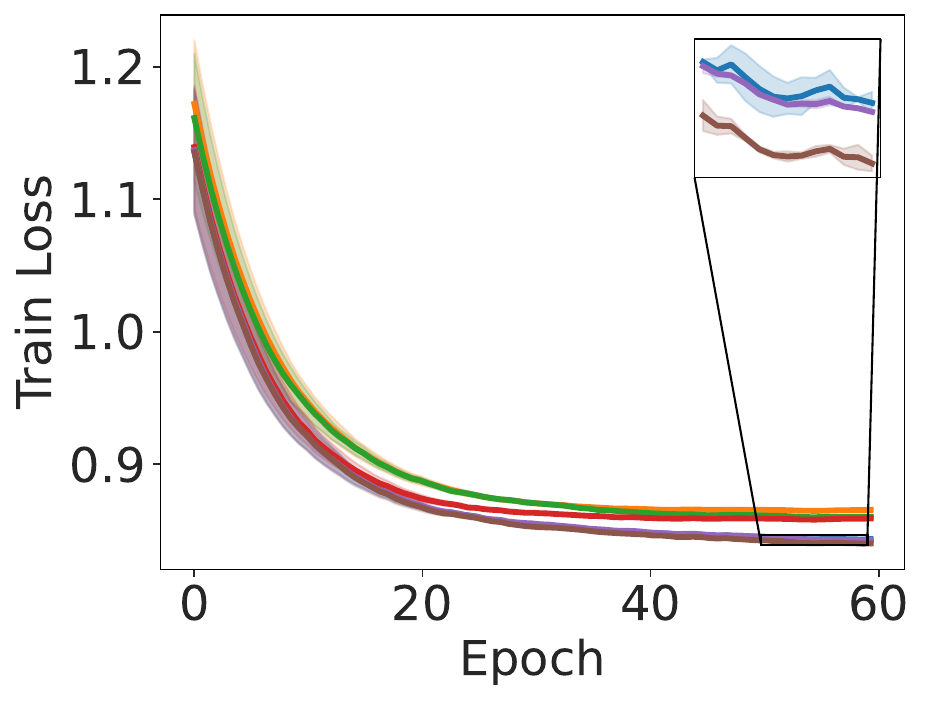}
        \caption{}
        \label{fig:CIFAR100_pauc_momentum}
    \end{subfigure}
    \caption{Training loss curves of different methods using SGD with momentum optimizer for partial AUC maximization. (\subref*{fig:cifar10_pauc_momentum}): on the dataset CIFAR-10 with $\tau=0.05$ (left) and $\tau=0.1$. (\subref*{fig:CIFAR100_pauc_momentum}): on the dataset CIFAR-100 with $\tau=0.05$ (left) and $\tau=0.1$.}
    \label{fig:pauc_momentum}
\end{figure}

\textbf{Comparison between SGD and SPMD with fixed \(\w\)}. In \Cref{fig:gaussian} we present the ratio between the error of SPMD and that of SGD when they are run on Gaussian noise with different means and variances. Here in \Cref{fig:gaussian_individual}, we plot the value of the error of the two methods that are used to compute the ratio.

\begin{figure*}
    \centering
    \begin{subfigure}[b]{0.8\textwidth}
        \centering
        \includegraphics[width=0.2\linewidth]{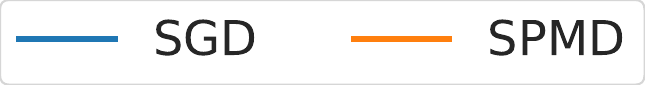}

        \includegraphics[width=0.32\linewidth]{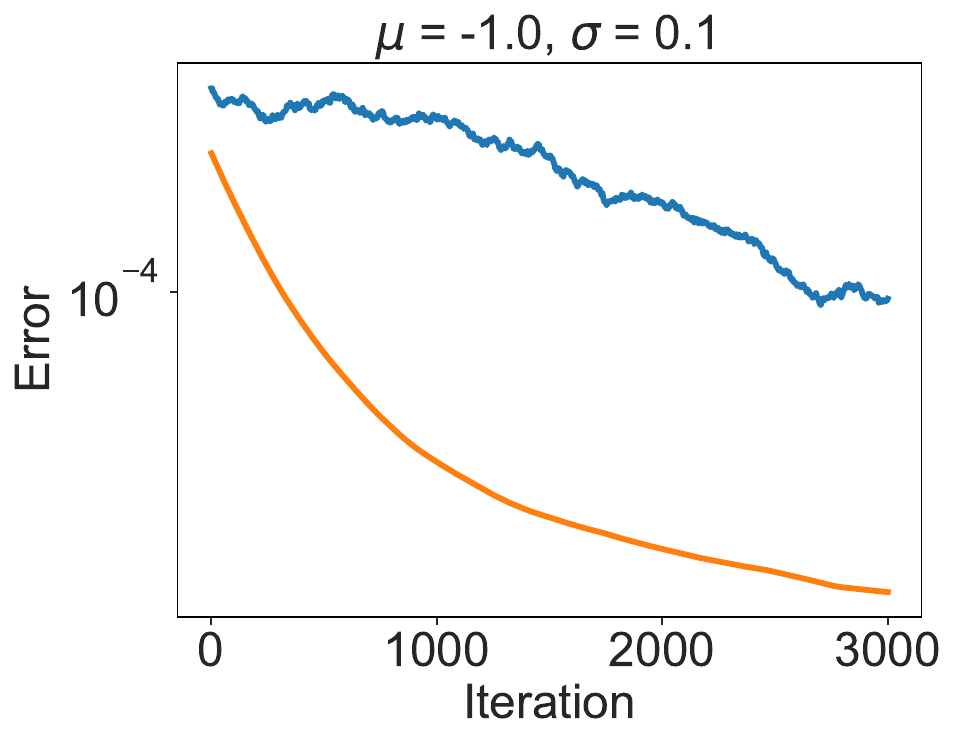}
        \includegraphics[width=0.32\linewidth]{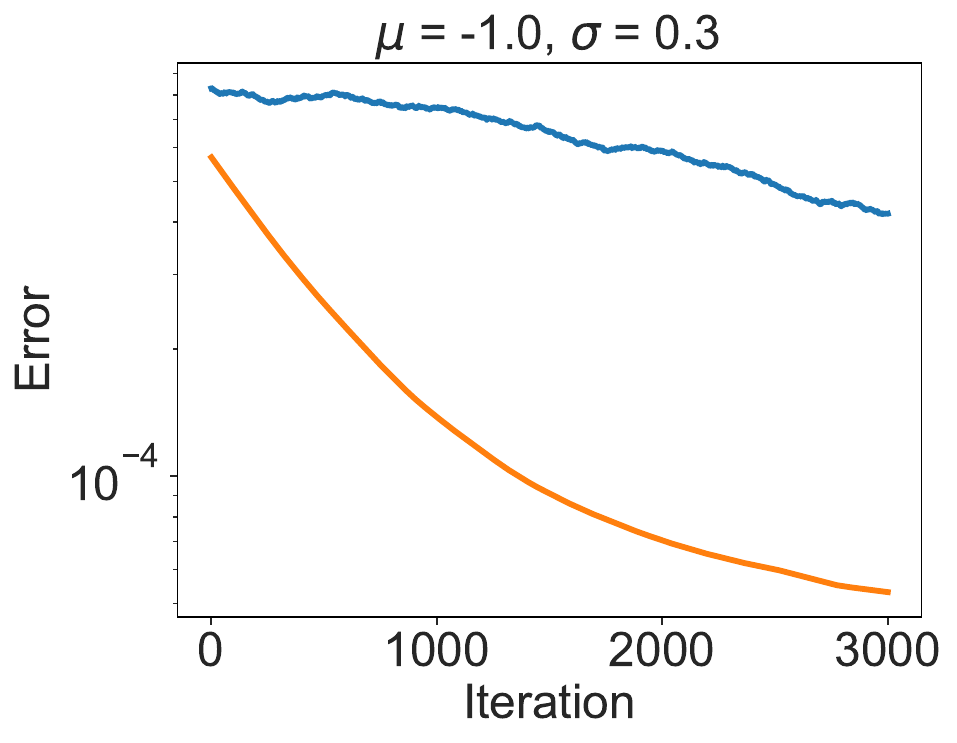}
        \includegraphics[width=0.32\linewidth]{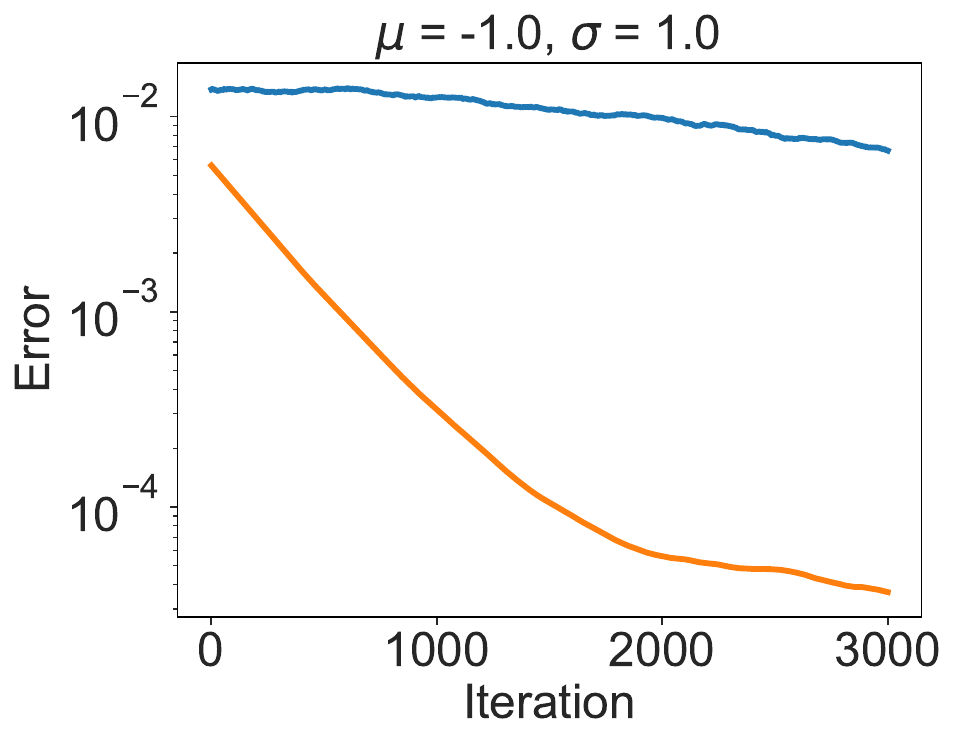}
    \end{subfigure}

    \begin{subfigure}[b]{0.8\textwidth}
        \centering

        \includegraphics[width=0.32\linewidth]{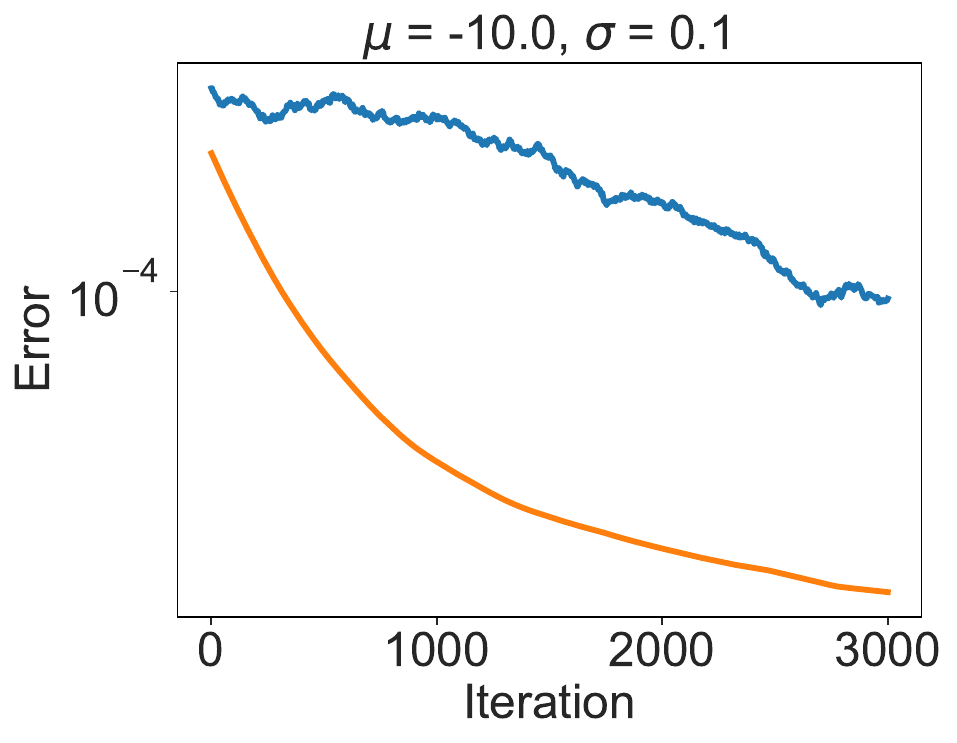}
        \includegraphics[width=0.32\linewidth]{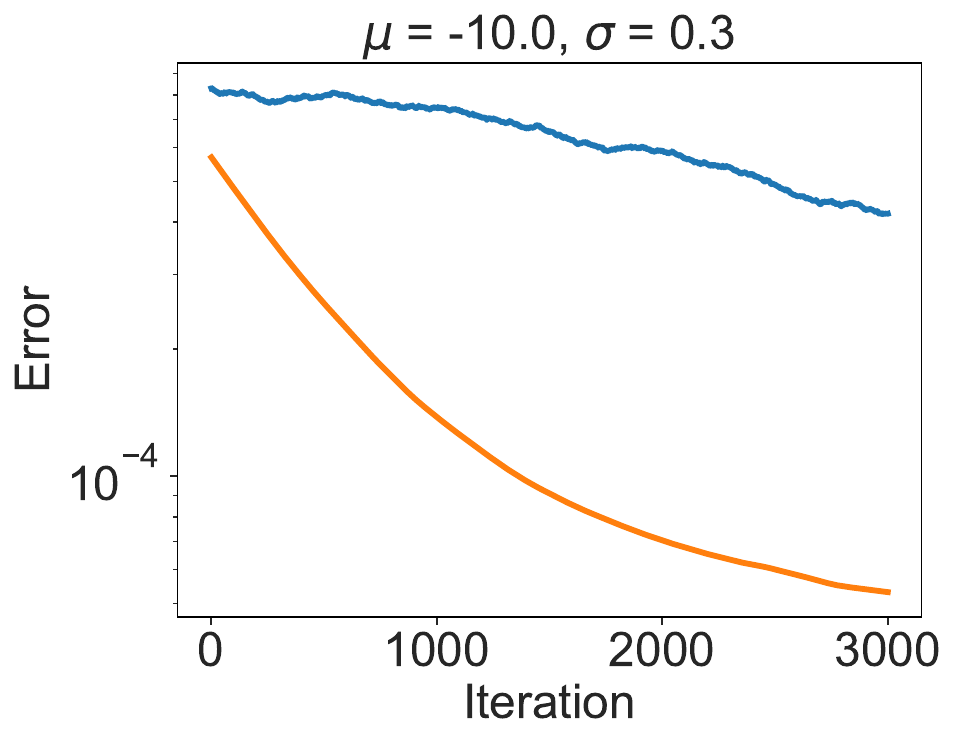}
        \includegraphics[width=0.32\linewidth]{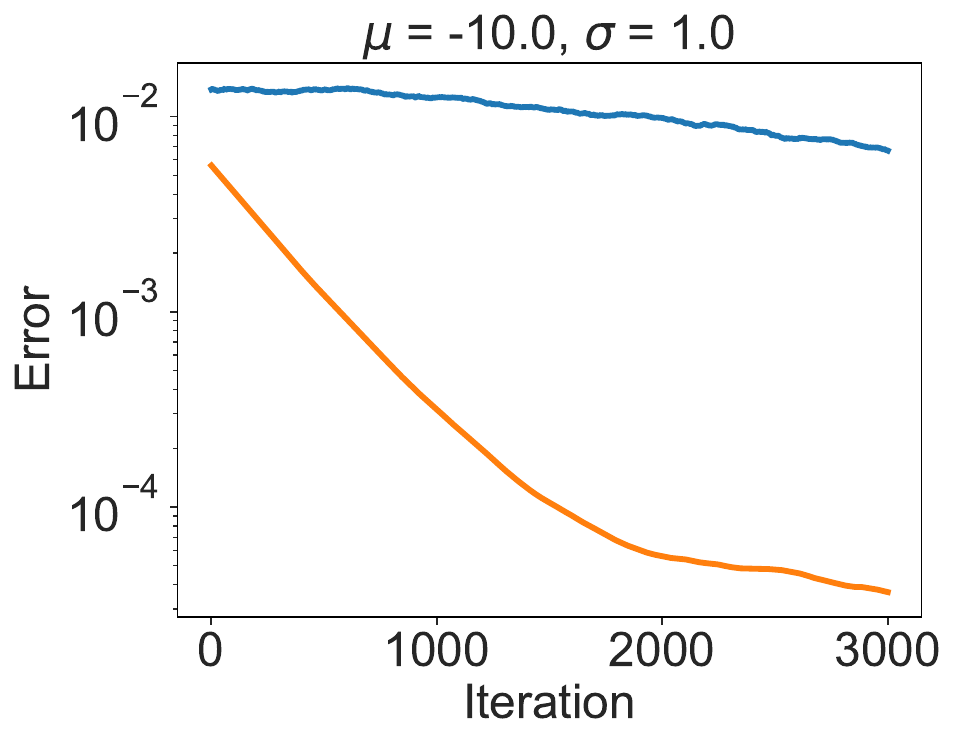}
    \end{subfigure}
    \caption{Error between \(\nu_{t}\) and \(\nu_*\) when trained using different methods on Gaussian noise with different mean (top to bottom: \(\mu= -1.0, -10.0\)) and standard deviation (left to right: \(\sigma= 0.1, 0.3, 1.0\))}
    \label{fig:gaussian_individual}
\end{figure*}

\subsection{CLIP Training}
\label[appendix]{app:exp:clip}

We apply our method to image-text representation learning tasks, namely CLIP~\citep{radford2021learning}. Given a dataset of image-text pairs \(\mathcal{S}= \{(\x_{1}, \y_{1}), \ldots, (\x_{n}, \y_{n})\}\), CLIP aims to train a model \(h\) (parameterized by \(\w\)) that learns the representation of images and texts. In this paper, we consider the Robust Global Contrastive Loss~\citep{wei2024fastclip}:
\begin{align*}
    \min_{\w\in \R^{d}, \tau\in \R} &\tau\cdot \frac{1}{|\mathcal{S}|} \sum_{i\in \mathcal{S}} \log\left(\varepsilon+ \frac{1}{|\mathcal{S}|- 1}\sum_{j\in \mathcal{S}, j\neq i}\exp\left(\frac{h(\x_{i})^{\top}(h(\y_{j})- h(\y_{i}))}{\tau}\right)\right) \\
    &+ \tau\cdot \frac{1}{|\mathcal{S}|} \sum_{i\in \mathcal{S}} \log\left(\varepsilon+ \frac{1}{|\mathcal{S}|- 1}\sum_{j\in \mathcal{S}, j\neq i}\exp\left(\frac{h(\y_{i})^{\top}(h(\x_{j})- h(\x_{i}))}{\tau}\right)\right)+ 2\tau\rho,
\end{align*}
where \(\tau\) is the temperature parameter, \(\rho> 0\) is a hyperparameter, and \(\varepsilon\) is a small constant. The equivalent min-min formulation then becomes
\begin{align*}
    \min_{\w\in \R^{d}, \tau\in \R, \bnu_{1}\in \R^{n}, \bnu_{2}\in \R^{n}} &\tau\cdot \frac{1}{|\mathcal{S}|} \sum_{i\in \mathcal{S}} \left\{\left(\varepsilon+ \frac{1}{|\mathcal{S}|- 1}\sum_{j\in \mathcal{S}, j\neq i}\exp\left(\frac{h(\x_{i})^{\top}(h(\y_{j})- h(\y_{i}))}{\tau}\right)\right)\cdot \ea{-\nu_{1, i}}+ \nu_{1, i}\right\} \\
    &+ \tau\cdot \frac{1}{|\mathcal{S}|} \sum_{i\in \mathcal{S}} \left\{\left(\varepsilon+ \frac{1}{|\mathcal{S}|- 1}\sum_{j\in \mathcal{S}, j\neq i}\exp\left(\frac{h(\y_{i})^{\top}(h(\x_{j})- h(\x_{i}))}{\tau}\right)\right)\cdot \ea{-\nu_{2, i}}+ \nu_{2, i}\right\}+ 2\tau\rho.
\end{align*}
In CLIP training, BSGD is named as OpenCLIP~\citep{cherti2023reproducible} and SOX is named as FastCLIP~\citep{wei2024fastclip}. We use the DFN-14M dataset~\citep{fang2023data} for training. The trained models of different methods are evaluated on Datacomp~\citep{gadre2023datacomp}, a zero-shot evaluation benchmark, which consists of 35 zero-shot image-classification tasks and 3 zero-shot retrieval tasks. We present the average of top-1 accuracy on classification tasks and recall at 1 on retrieval tasks, and denote the metric as Datacomp Average. Moreover, we also present the average performance on two subsets of the benchmark: (1) ImageNet, which is the average top-1 accuracy on ImageNet-1K~\citep{deng2009imagenet} and 6 distribution shift datasets~\citep{wang2019learning,recht2019imagenet,hendrycks2021many,hendrycks2021natural,barhu2019objectnet}, and (2) Retrieval, which is the average of recall at 1 on MSCOCO~\citep{chen2015microsoft} and Flickr30K~\citep{young2014image}. We present the results in \Cref{fig:baselines_clip}, from which we can observe that SCENT has similar or slightly better performance, while ASGD-type methods perform poorly.

\begin{figure}[htbp]
    \centering
    \includegraphics[width=0.6\linewidth]{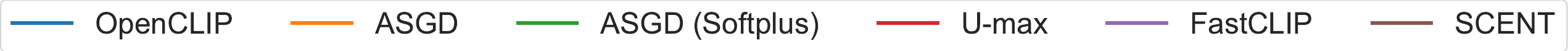}

    \includegraphics[width=0.25\linewidth]{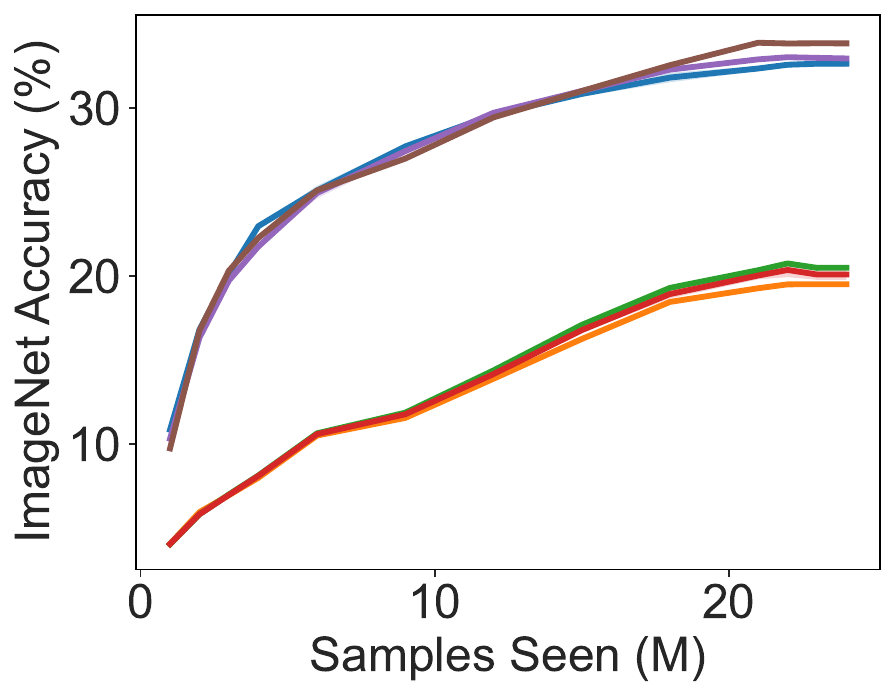}
    \hspace{10pt}
    \includegraphics[width=0.25\linewidth]{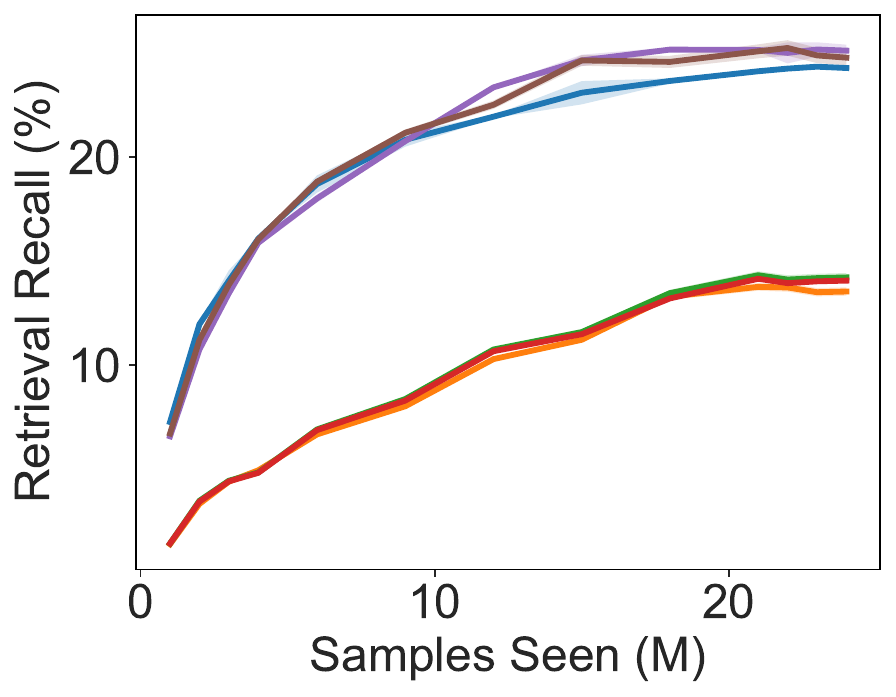}
    \hspace{10pt}
    \includegraphics[width=0.25\linewidth]{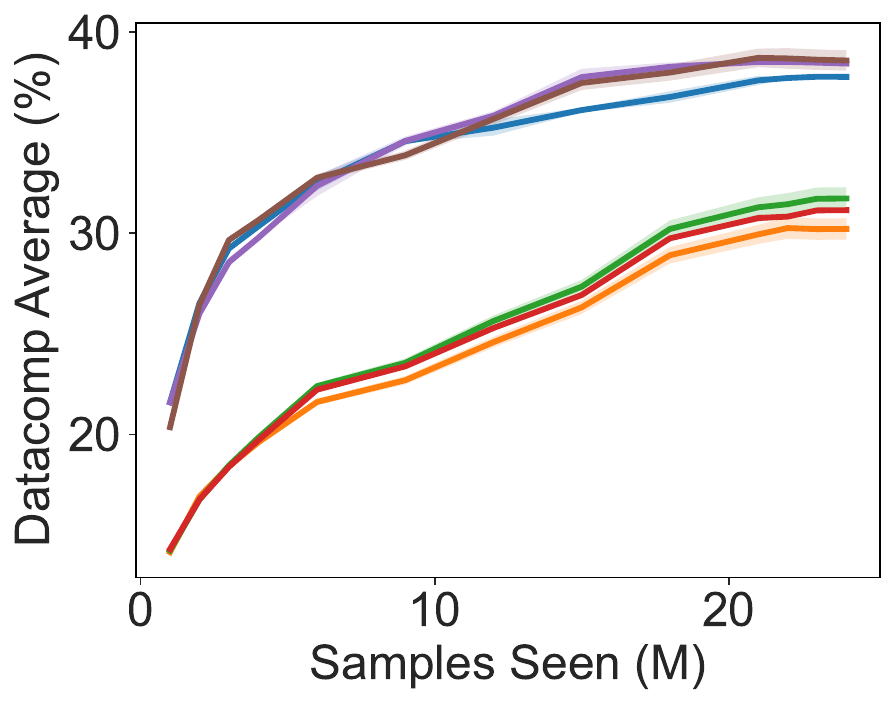}
    \caption{Zero-shot evaluation performance of different methods trained on DFN-14M. Left: ImageNet-1K top1 accuracy. Middle: Retrieval recall. Right: Datacomp average performance.}
    \label{fig:baselines_clip}
\end{figure}

\subsection{KL-Regularized Distributionally Robust Optimization}
\label[appendix]{app:exp:dro}

We also consider KL-regularized distributionally robust optimization problem. Specifically, we consider linear regression task on a dataset \(\mathcal{S}= \{(\mathbf{x_1}, y_1), \ldots, (\mathbf{x_n}, y_n)\}\):
\begin{equation}
\label{eq:dro_exp_ori}
    \min_{\mathbf{a} \in \mathbb{R}^d, b \in \mathbb{R}}\max_{\mathbf{p}\in \Delta^n} \sum_{i\in \mathcal{S}} p_i(\mathbf{a}^{\top}\x_i +b-y_i)^2 -\tau D_{KL}(\mathbf{p}, \mathbf{1}/n),
\end{equation}
where $\Delta^n$ is the unit simplex in $\R^n$ and $D_{KL}(\mathbf p,\mathbf 1/n)\coloneqq\sum_{i=1}^n p_i \log(n p_i)$ is the Kullback–Leibler divergence. For fixed parameters $\mathbf{a}$ and $b$, the optimal solution $\mathbf{p}$ of the maximization problem is given by $p_i^*=\frac{\exp((\mathbf{a}^{\top}\x_i +b-y_i)^2/\tau)}{\sum_j\exp((\mathbf{a}^{\top}\x_j +b-y_j)^2/\tau)}$. Then original problem (\ref{eq:dro_exp_ori}) reduces to
\begin{equation}
\label{eq:dro_exp}
    \min_{\mathbf{a} \in \mathbb{R}^d, b \in \mathbb{R}} \tau\cdot \log \left( \frac{1}{|\mathcal{S}|} \sum_{i\in \mathcal{S}} \eb{\frac{(\mathbf{a}^{\top}\x_i +b-y_i)^2}{\tau}}\right).
\end{equation}
The equivalent min-min formulation then becomes
\begin{equation*}
    \min_{\mathbf{a} \in \mathbb{R}^d, b \in \mathbb{R}, \nu\in \R} \tau\cdot \frac{1}{|\mathcal{S}|} \sum_{i\in \mathcal{S}} \left\{\eb{\frac{(\mathbf{a}^{\top}\x_i +b-y_i)^2}{\tau}- \nu}+ \nu -1\right\}.
\end{equation*}
We consider datasets California housing~\cite{pace1997sparse} and abalone~\cite{abalone_1}. California housing consists of 20,640 objects represented by 8 features, while abalone dataset consists of 4,177 objects represented by 8 features. We compare all methods as previous experiments except ASGD, since it suffers from an overflow issue. Noticing SCGD is a special case of SOX when $n=1$. We present the numerical result in \Cref{tab:lr}, showing the objective value  (\ref{eq:dro_exp}) (mean ± standard deviation across 10 runs) after 300 epochs. The results show SCENT has better performance in most of cases.

\begin{table}[htbp]
\centering
\caption{Objective value (\ref{eq:dro_exp}) across different $\tau$ value (mean ± std across 10 runs). Best results are shown in bold}
\begin{tabular}{c|ccc|ccc}
\toprule
\multirow{2}{*}{Methods} & \multicolumn{3}{c|}{California housing} & \multicolumn{3}{c}{abalone} \\
\cline{2-7}
& $\tau=0.2$ & $\tau=1.0$ & $\tau=5.0$ & $\tau=0.2$ & $\tau=1.0$ & $\tau=5.0$ \\
\midrule
BSGD  & 7.943 (0.037) & 3.175 (0.014) & 0.743 (0.000) & 18.970 (0.033) & 11.313 (0.041) & 0.970 (0.000) \\
ASGD (Softplus)  & 4.953 (0.006) & 2.030 (0.000) & 0.738 (0.002) & 16.094 (0.016) & 5.489 (0.002) & 0.965 (0.000) \\
U-max  & 6.640 (0.173)  & 2.066 (0.002) & 0.742 (0.000) & 10.951 (0.065) & 5.850 (0.027) & 0.966 (0.000) \\
SCGD   & 5.182 (0.008) & 2.073 (0.002) & 0.738 (0.000) & \textbf{10.476} (0.043) & 5.625 (0.009) & \textbf{0.957} (0.000) \\
SCENT  & \textbf{4.741} (0.071) & \textbf{2.001} (0.000) & \textbf{0.737} (0.001) & 13.664 (0.152) & \textbf{5.191} (0.001) & \textbf{0.957} (0.000) \\
\bottomrule
\end{tabular}
\label{tab:lr}
\end{table}

\subsection{Implementation Details and Hyperparameters}
\label[appendix]{app:exp:hyperparams}

\begin{algorithm}[tb]
    \caption{The SCENT Algorithm for Extreme Classification}
    \label{alg:scent_sc}
    \begin{algorithmic}[1]
        \INPUT $\w_1\in \R^{K\times d},\bnu_0\in \R^{n}$, step sizes $\eta_t, \alpha_{t}$, frozen backbone \(h\), and a set of data with labels \(\mathcal{S}= \{(\x_{1}, y_{1}), \ldots, (\x_{n}, y_{n})\}\).
        \FOR{$t=1, \dotsc,T-1$}
            \STATE Sample $\B_t\subset \mathcal{S}$ with $|\B_t| = B$
            \FOR{each $(\x_{i}, y_{i})\in\B_t$}
                \STATE Update $\nu_{i, t}$:
                    \begin{equation*}
                        \nu_{i, t} = \nu_{i, t-1}  + \log \left(1+\alpha_t\cdot \frac{1}{B- 1}\sum_{j\in \B_{t}, j\neq i}\eb{h(\x_{i})^{\top}(\w_{t, y_{j}}- \w_{t, y_{i}})}\right) - \log (1+\alpha_te^{\nu_{i, t-1}}).
                    \end{equation*}
            \ENDFOR
            \STATE Compute the gradient estimator by \(\z_t =\frac{1}{B}\sum_{i\in\B_t} \frac{1}{B- 1}\sum_{j\in \B_{t}, j\neq i}\nabla_{\w} \eb{h(\x_{i})^{\top}(\w_{t, y_{j}}- \w_{t, y_{i}})- \nu_{i, t}}\).
            \STATE Update $\w_{t+1} = \w_t - \eta_t \z_t$
        \ENDFOR
    \end{algorithmic}
\end{algorithm}

\begin{algorithm}[tb]
    \caption{The SCENT Algorithm for Partial AUC maximization}
    \label{alg:scent_pauc}
    \begin{algorithmic}[1]
        \INPUT $\w_1\in \R^{K},\bnu_0\in \R^{|n_+|}$, step sizes $\eta_t,\alpha_t$, frozen backbone \(h\), and a set of positive data \(\mathcal{S}^+= \{(\x_{1}, y_{1}), \ldots, (\x_{n_+}, y_{n_+})\}\) and a set of negative data \(\mathcal{S}^-= \{(\x_{1}, y_{1}), \ldots, (\x_{n_-}, y_{n_-})\}\).
        \FOR{$t=1\dotsc,T-1$}
            \STATE Sample $\S_t^+\subset \{1,\dotsc, n_+\}$ with $|\S_t^+| = S^+$
            \STATE Sample $\S_t^-\subset \{1,\dotsc, n_-\}$ with $|\S_t^-| = S^-$
            \FOR{each $i\in\S_t^+$}
                \STATE Update $\nu_{i, t}$:
                    \begin{equation*}
                        \nu_{i, t} = \nu_{i, t-1}  + \log \left(1+\alpha_t\cdot \frac{1}{S^-}\sum_{j\in \S_{t}^-}\eb{\frac{\ell(\w^{\top}(h(\x_j) - h(\x_i)))}{\tau}}\right) - \log (1+\alpha_te^{\nu_{i, t-1}}).
                    \end{equation*}
            \ENDFOR
            \STATE Compute the gradient estimator by \(\z_t =\frac{\tau}{S^+}\sum_{i\in\S_t^+} \frac{1}{S^-}\sum_{j\in \S_{t}^-}\nabla_{\w} \eb{\frac{\ell(\w^{\top}(h(\x_j) - h(\x_i)))}{\tau}- \nu_{i, t}}\).
            \STATE Update $\w_{t+1} = \w_t - \eta_t \z_t$
        \ENDFOR
    \end{algorithmic}
\end{algorithm}

\begin{algorithm}[tb]
    \caption{The SCENT Algorithm for CLIP Training}
    \label{alg:scent_clip}
    \begin{algorithmic}[1]
        \INPUT CLIP model \(h\) initialized with $\w_1\in \R^{d}$, temperature parameter $\tau$, $\bnu_{1, 0},\bnu_{2, 0}\in \R^{n}$, step sizes $\eta_t, \alpha_t$, and a set of image-text pairs \(\mathcal{S}= \{(\x_{1}, \y_{1}), \ldots, (\x_{n}, \y_{n})\}\).
        \FOR{$t=1, \dotsc,T-1$}
            \STATE Sample $\B_t\subset \mathcal{S}$ with $|\B_t| = B$
            \STATE Obtain features of data in the batch: \(\hat{\B}_{t}= \{(h(\x_{i}), h(\y_{i})): (\x_{i}, \y_{i})\in\B_t\}\)
            \FOR{each $(\e_{1, i}, \e_{2, i})\in\B_t$}
                \STATE Update $\nu_{1, i, t}, \nu_{2, i, t}$:
                    \begin{align*}
                        \nu_{1, i, t} &= \nu_{1, i, t-1}  + \log \left(1+\alpha_t\cdot \frac{1}{B- 1}\sum_{j\in \B_{t}, j\neq i}\eb{\frac{\e_{1, i}^{\top}(\e_{2, j}- \e_{2, i}))}{\tau}}\right) - \log (1+\alpha_te^{\nu_{1, i, t-1}}), \\
                        \nu_{2, i, t} &= \nu_{2, i, t-1}  + \log \left(1+\alpha_t\cdot \frac{1}{B- 1}\sum_{j\in \B_{t}, j\neq i}\eb{\frac{\e_{2, i}^{\top}(\e_{1, j}- \e_{1, i}))}{\tau}}\right) - \log (1+\alpha_te^{\nu_{2, i, t-1}}).
                    \end{align*}
            \ENDFOR
            \STATE Compute the gradient estimator by
                \begin{equation*}
                    \z_t =\frac{1}{B}\sum_{i\in\B_t} \frac{1}{B- 1}\sum_{j\in \B_{t}, j\neq i}\left(\nabla_{\w} \eb{\frac{\e_{1, i}^{\top}(\e_{2, j}- \e_{2, i}))}{\tau}- \nu_{1, i, t}}+ \nabla_{\w} \eb{\frac{\e_{2, i}^{\top}(\e_{1, j}- \e_{1, i}))}{\tau}- \nu_{2, i, t}}\right)
                \end{equation*}
            \STATE Update $\w_{t+1}$ using the AdamW optimizer with \(\eta_{t}\) and \(\z_t\)
        \ENDFOR
    \end{algorithmic}
\end{algorithm}

\begin{algorithm}[tb]
    \caption{The SCENT Algorithm for KL DRO}
    \label{alg:scent_dro}
    \begin{algorithmic}[1]
        \INPUT $\a \in \R^{d},  b \in \R, \nu_0\in \R$, step sizes $\eta_t, \alpha_t$, and a set of data with labels \( \S = \{(\x_{1}, y_{1}), \ldots, (\x_{n}, y_{n})\}\).
        \FOR{$t=1\dotsc,T-1$}
            \STATE Sample $\B_t\subset \mathcal{S}$ with $|\B_t| = B$
            \STATE Update $\nu_{t}$:
                \begin{equation*}
                    \nu_{t} = \nu_{t-1}  + \log \left(1+\alpha_t\cdot \frac{1}{B}\sum_{i\in \B_t}\eb{\frac{(\mathbf{a}^{\top}\x_i +b-y_i)^2}{\tau}}\right) - \log (1+\alpha_te^{\nu_{t-1}}).
                \end{equation*}
            \STATE Compute the gradient estimator for $\a$ by \(\z_{t,1} =\frac{\tau}{B}\sum_{i\in \B_{t}}\nabla_{\a} \eb{\frac{(\mathbf{a}^{\top}\x_i +b-y_i)^2}{\tau}- \nu_{t}}\).
           \STATE Compute the gradient estimator for $b$ by \(\z_{t,2} =\frac{\tau}{B}\sum_{i\in \B_{t}}\nabla_{b} \eb{\frac{(\mathbf{a}^{\top}\x_i +b-y_i)^2}{\tau}- \nu_{t}}\).
            \STATE Update $\a_{t+1} = \a_t - \eta_t \z_{t,1}, b_{t+1} = b_t - \eta_t \z_{t,2}$
        \ENDFOR
    \end{algorithmic}
\end{algorithm}

\textbf{Extreme classification}. For Glint360K, we use a ResNet-50 model
released by the authors of the dataset to obtain the data used in this paper. Then we leverage the code
released by the same authors to obtain the features.
For TreeOfLife-10M, we use the CLIP ViT-B/16 model
released by the authors of the dataset as well, and we use the code
released by the same authors to obtain the features.
We trained a linear model (a \verb|torch.nn.Linear| model without bias) using both the SGD optimizer and the SGD with momentum optimizer. For the SGD optimizer, we train the model for 50 epochs. While for the SGD with momentum optimizer, we train the model for 20 epochs. For all methods, we tune the learning rate of the linear model from 1e-3 to 1e1. The learning rate follows a cosine schedule, where it starts from the tuned learning rate and gradually decreases to 0 in the end. For ASGD, ASGD (Softplus) and U-max, we tune the learning rate \(\alpha\) of the dual variable from 1e-2 to 1e2, which also follows a cosine schedule. For ASGD (Softplus), we tune the approximation coefficient \(\rho\) from 1e-5 to 1e-1, and we find that 1e-3 gives the best results across all settings. For U-max, we tune the threshold \(\delta\) from 0.0 to 5.0, and we find that 1.0 gives the best results. For SOX, we tune the moving average coefficient \(\gamma\) from 0 to 1, which also follows a cosine schedule. For SCENT, we tune the learning rate \(\alpha\) of the dual variable by searching the value of \(\log(\alpha)\) from 3 to 30. The algorithm we use is presented in \Cref{alg:scent_sc} and the hyperparameters are presented in \Cref{tab:hyperparams_ec}.

\begin{table}[htbp]
    \centering
    \caption{Hyperparameters of different methods on different datasets with different optimizers for extreme classification. Entries with ``-'' mean the corresponding hyperparameter is not used in the corresponding algorithm.}
    \begin{tabular}{ccccccccc}
        \toprule
        & & Hyper- & & & ASGD & & & \\
        \multirow{-2}{*}{Dataset} & \multirow{-2}{*}{Optimizer} & parameter & \multirow{-2}{*}{BSGD} & \multirow{-2}{*}{ASGD} & (Softplus) & \multirow{-2}{*}{U-max} & \multirow{-2}{*}{SOX} & \multirow{-2}{*}{SCENT} \\
        \midrule
        && lr & 1.0 & 0.5 & 0.5 & 0.5 & 5.0 & 5.0 \\
        && \(\alpha\) & - & 1.0 & 1.0 & 1.0 & - & \(e^{12}\) \\
        &\multirow{-3}{*}{SGD}& \(\gamma\) & - & - & - & - & 0.0 & - \\
        \cline{2-9}
        && lr & 2e-3 & 1e-3 & 1e-3 & 1e-3 & 2e-3 & 1e-3 \\
        &\multirow{-2}{*}{SGD w/}& \(\alpha\) & - & 0.5 & 0.5 & 0.5 & - & \(e^{30}\) \\
        \multirow{-6}{*}{Glint360K}&\multirow{-2}{*}{momentum}& \(\gamma\) & - & - & - & - & 0.2 & - \\
        \midrule
        &&lr & 2e-4 & 1e-3 & 1e-3 & 1e-3 & 5e-4 & 2e-2 \\
        &&\(\alpha\) & - & 2.0 & 2.0 & 2.0 & - & \(e^{3}\) \\
        &\multirow{-3}{*}{SGD}&\(\gamma\) & - & - & - & - & 0.2 & - \\
        \cline{2-9}
        && lr & 5e-4 & 2e-4 & 2e-4 & 2e-4 & 1e-3 & 2e-3 \\
        &\multirow{-2}{*}{SGD w/}& \(\alpha\) & - & 1.0 & 1.0 & 1.0 & - & \(e^{10}\) \\
        \multirow{-6}{*}{TreeOfLife-10M}&\multirow{-2}{*}{momentum}& \(\gamma\) & - & - & - & - & 0.6 & - \\
        \bottomrule
    \end{tabular}
    \label{tab:hyperparams_ec}
\end{table}

\textbf{Partial AUC maximization}. For CIFAR-10 and CIFAR-100~\citep{krizhevsky2009learning}, we construct imbalanced variants by randomly discarding a portion of positive samples following~\citet{zhu2022auc}. Specifically, we group the first half of the classes as the negative class and the second half as the positive class, and then randomly remove 80\% of the samples from the positive group to induce class imbalance. For both CIFAR-10 and CIFAR-100, we train convolutional neural networks using ResNet-18~\citep{he2016deep} as the backbone.
Our training pipeline consists of a pretraining stage followed by a classifier fine-tuning stage. In the pretraining stage, we optimize the full network using the cross-entropy (CE) loss with the SGD optimizer. We use a batch size of 64 and pretrain for 60 epochs with an initial learning rate of $10^{-3}$, which is decayed by a factor of 10 at epochs 20 and 40. After pretraining, we re-initialize the classifier layer, freeze the backbone, and fine-tune only the classifier using different methods.
For all methods, we adopt the squared hinge loss as the surrogate loss $\ell(\cdot)$ with a fixed margin parameter of 0.5. We tune the learning rate for $\mathbf{w}$
from 1e-4 to 1e-2 
for all methods and apply cosine learning-rate decay during training. For ASGD, the learning rate for updating $\nu$ is selected from
1e-3 to 1.0. 
For ASGD (Softplus), we additionally tune the approximation parameter $\rho$
from 1e-1 to 1e-5, 
which controls the approximation accuracy, and we use the same learning rate for the dual variable $\alpha$ as in~\citet{gladin2025improved}. For U-max, we tune the learning rate of the dual variable
from 1e-3 to 1e0 
and select $\delta$ in
0 to 5. 
For SOX, we tune the moving-average parameter $\gamma$
from 0.3 to 0.9. 
For SCENT, we tune $\alpha_t$ for updating $\bnu$; in practice, we first train with SOX to inspect the convergence behavior of $\bnu$, and then choose $\alpha_t$ to be slightly smaller than the converged value of $\bnu$. We select $\tau=$  0.05 and 0.1
as the KL penalty coefficient, and when using momentum SGD, we fix the momentum parameter to 0.9.
The algorithm we use is presented in \Cref{alg:scent_pauc} and the hyperparameters are presented in \Cref{tab:hyperparams_auc}.

\begin{table}[htbp]
    \centering
    \caption{Hyperparameters of different methods on different datasets with different optimizers for partial AUC maximization with different \(\tau\). Entries with ``-'' mean the corresponding hyperparameter is not used in the corresponding algorithm.}
    \begin{tabular}{cccccccccc}
        \toprule
        & & & Hyper- & & & ASGD & & & \\
        \multirow{-2}{*}{Dataset} & \multirow{-2}{*}{Optimizer} & \multirow{-2}{*}{\(\tau\)} & parameter & \multirow{-2}{*}{BSGD} & \multirow{-2}{*}{ASGD} & (Softplus) & \multirow{-2}{*}{U-max} & \multirow{-2}{*}{SOX} & \multirow{-2}{*}{SCENT} \\
        \midrule
        &&& lr & 1e-2 & 5e-3 & 1e-3 & 5e-3 & 5e-3 & 5e-3 \\
        &&& \(\alpha\) & - & 1.0 & 1e-3 & 1e-1 & - & \(e^{-6}\) \\
        && \multirow{-3}{*}{0.1} & \(\gamma\) & - & - & - & - & 0.9 & - \\
        \cline{3-10}
        &&& lr & 1e-3 & 1e-2 & 1e-3 & 1e-2 & 5e-3 & 5e-3 \\
        &&& \(\alpha\) & - & 1.0 & 1e-2 & 1e-1 & - & \(e^{-15}\) \\
        & \multirow{-6}{*}{SGD} & \multirow{-3}{*}{0.05} & \(\gamma\) & - & - & - & - & 0.7 & - \\
        \cline{2-10}
        &&& lr & 1e-3 & 1e-4 & 2e-4 & 1e-3 & 1e-3 & 1e-3 \\
        &&& \(\alpha\) & - & 1.0 & 1e-2 & 1.0 & - & \(e^{-4}\) \\
        & SGD w/ & \multirow{-3}{*}{0.1} & \(\gamma\) & - & - & - & - & 0.7 & - \\
        \cline{3-10}
        & momentum && lr & 5e-4 & 1e-4 & 1e-4 & 1e-3 & 2e-4 & 1e-3 \\
        &&& \(\alpha\) & - & 1e-2 & 1e-2 & 1e-1 & - & \(e^{-15}\) \\
        \multirow{-12}{*}{CIFAR-100} && \multirow{-3}{*}{0.05} & \(\gamma\) & - & - & - & - & 0.7 & - \\
        \midrule
        &&& lr & 1e-3 & 1e-2 & 1e-3 & 1e-2 & 5e-3 & 5e-3 \\
        &&& \(\alpha\) & - & 1.0 & 1e-2 & 1e-1 & - & \(e^{-7}\) \\
        && \multirow{-3}{*}{0.1} & \(\gamma\) & - & - & - & - & 0.7 & - \\
        \cline{3-10}
        &&& lr & 1e-2 & 1e-2 & 1e-3 & 1e-2 & 1e-2 & 1e-2 \\
        &&& \(\alpha\) & - & 1.0 & 1e-3 & 1e-1 & - & \(e^{-16}\) \\
        & \multirow{-6}{*}{SGD} & \multirow{-3}{*}{0.05} & \(\gamma\) & - & - & - & - & 0.9 & - \\
        \cline{2-10}
        &&& lr & 1e-4 & 1e-4 & 2e-4 & 2e-4 & 2e-4 & 5e-4 \\
        &&& \(\alpha\) & - & 1.0 & 1e-2 & 1.0 & - & \(e^{-7}\) \\
        & SGD w/ & \multirow{-3}{*}{0.1} & \(\gamma\) & - & - & - & - & 0.3 & - \\
        \cline{3-10}
        & momentum && lr & 1e-3 & 1e-3 & 2e-4 & 1e-3 & 1e-3 & 1e-3 \\
        &&& \(\alpha\) & - & 1e-1 & 1e-2 & 1e-1 & - & \(e^{-16}\) \\
        \multirow{-12}{*}{CIFAR-10} && \multirow{-3}{*}{0.05} & \(\gamma\) & - & - & - & - & 0.9 & - \\
        \bottomrule
    \end{tabular}
    \label{tab:hyperparams_auc}
\end{table}

\textbf{CLIP training}. We leverage the FastCLIP codebase for training, in which OpenCLIP and FastCLIP are already implemented. For all methods, we train a CLIP ViT-B/32 model~\citep{dosovitskiy2021image} using the AdamW optimizer~\citep{loshchilov2018decoupled}. We train the model for 320M samples seen. For all methods, We tune the learning rate of the CLIP model from 1e-4 to 1e-3. The learning rate follows a cosine schedule. For ASGD, ASGD (Softplus) and U-max, we tune the learning rate \(\alpha\) of the dual variable from 1e-2 to 1e2, which also follows a cosine schedule. For ASGD (Softplus), we tune the approximation coefficient \(\rho\) from 1e-5 to 1e-1, and we find that 1e-3 gives the best evaluation performance. For U-max, we tune the threshold \(\delta\) from 0.0 to 5.0, and we find that 1.0 gives the best results. For FastCLIP, we tune the moving average coefficient \(\gamma\) from 0 to 1, which also follows a cosine schedule. For SCENT, we tune the learning rate \(\alpha\) of the dual variable by searching the value of \(\log(\alpha)\) from 3 to 30. The algorithm we use is presented in \Cref{alg:scent_clip} and the hyperparameters are presented in \Cref{tab:hyperparams_clip}.

\begin{table}[htbp]
    \centering
    \caption{Hyperparameters of different methods for CLIP training on DFN-14M}
    \begin{tabular}{ccccccc}
        \toprule
        Hyperparameter & BSGD & ASGD & ASGD (Softplus) & U-max & SOX & SCENT \\
        \midrule
        lr & 5e-4 & 5e-4 & 5e-4 & 5e-4 & 5e-4 & 5e-4 \\
        \(\alpha\) & - & 0.1 & 0.1 & 0.1 & - & \(e^{10}\) \\
        \(\gamma\) & - & - & - & - & 0.4 & - \\
        \bottomrule
    \end{tabular}
    \label{tab:hyperparams_clip}
\end{table}

 \textbf{KL-regularized distributionally robust optimization}
We consider linear regression tasks on the California Housing dataset~\cite{pace1997sparse} and the Abalone dataset~\cite{abalone_1}. For Abalone, we normalize the target values to keep the loss on a numerically convenient scale, while leaving the feature space unchanged. We evaluate penalty coefficients $\tau$
in [0.2, 1, 5]. Across all methods, we use a batch size of 100 and train for 300 epochs using SGD with momentum 0.9. Following~\citet{gladin2025improved}, we initialize optimization at the least-squares solution. For all methods, we tune the learning rate of $\w$ from 1e-7 to 1e-4 and apply cosine decay throughout training.
For ASGD (Softplus), we tune the approximation parameter $\rho$
from 1e-5 to 1e-1, 
and set the learning rate for the dual variable $\alpha$ following~\citet{gladin2025improved}. For U-max, we tune the dual learning rate
from 1e-3 to 1e0 
and $\delta$
from 0.1 to 5. 
For SCGD, we tune the moving-average parameter $\gamma$
from 0 to 1. 
For SCENT, we tune the step size $\alpha_t$ used to update $\nu$: specifically, we first run SCGD to inspect the convergence trajectory of $\nu$, and then choose $\alpha_t$ such that $\nu$ converges to a value slightly smaller than the SCGD limit.
The algorithm we use is presented in \Cref{alg:scent_dro} and the hyperparameters are presented in \Cref{tab:hyperparams_dro}.

\begin{table}[htbp]
    \centering
    \caption{Hyperparameters of different methods on different datasets for KL-regularized distributionally robust optimization with different \(\tau\). Entries with ``-'' mean the corresponding hyperparameter is not used in the corresponding algorithm.}
    \begin{tabular}{cccccccc}
        \toprule
        \multirow{-1}{*}{Dataset} & \multirow{-1}{*}{\(\tau\)} & Hyperparameter & \multirow{-1}{*}{BSGD}  & ASGD (Softplus) & \multirow{-1}{*}{U-max} & \multirow{-1}{*}{SCGD} & \multirow{-1}{*}{SCENT} \\
        \midrule
        && lr & 1e-5 & 1e-6 & 1e-5 & 5e-6 & 1e-5 \\
        && \(\alpha\) & - & 1e-6 &  1e-0 & - & \(e^{-22}\) \\
        & \multirow{-3}{*}{0.2} & \(\gamma\) & - &  - & - & 0.5 & - \\
        \cline{2-8}
        && lr & 5e-6 & 1e-6 & 5e-6 & 5e-6 & 5e-6 \\
        && \(\alpha\) & - & 1e-6  & 1e-0 & - & \(e^{-4}\) \\
        & \multirow{-3}{*}{1.0} & \(\gamma\) & - & - & - & 0.4 & - \\
        \cline{2-8}
        && lr & 5e-6 & 1e-5 & 1e-4 & 1e-5 & 1e-5 \\
        && \(\alpha\) & -  & 1e-5 & 1e-0 & - & \(e^{-1.1}\) \\
         \multirow{-9}{*}{California housing} & \multirow{-3}{*}{5.0} & \(\gamma\) & - & - & - & 0.8 & - \\
        \midrule
        && lr & 1e-5 & 5e-5 & 5e-5 & 5e-5 & 1e-4 \\
        && \(\alpha\) & - & 5e-5 & 1e-0 & - & \(e^{-38}\) \\
         & \multirow{-3}{*}{0.2} & \(\gamma\) & - & - & - & 0.3 & - \\
        \cline{2-8}
        && lr & 1e-5 & 5e-5 & 1e-4 & 1e-5 & 5e-5 \\
        && \(\alpha\) & - & 5e-5 & 1e-0 & - & \(e^{-10}\) \\
         & \multirow{-3}{*}{1.0} & \(\gamma\) & - & - & - & 0.1 & - \\
        \cline{2-8}
        && lr & 1e-4 & 1e-4 & 1e-4 & 1e-4 & 1e-4 \\
        && \(\alpha\) & - & 1e-4 & 1e-1  & - & \(e^{-4}\) \\
        \multirow{-9}{*}{abalone} & \multirow{-3}{*}{5.0} & \(\gamma\) & - & - & - & 0.9 & - \\
        \bottomrule
    \end{tabular}
    \label{tab:hyperparams_dro}
\end{table}

\textbf{Comparison between SGD and SPMD on Gaussian noise}. For each combination of mean and variance, we sample 1 million points from the Gaussian distribution using \verb|torch.normal|. Then we run SGD and SPMD on the training data, and record \(\nu_{t}\) at each iteration. Finally, we plot the squared error between \(\nu_{t}\) and \(\nu_{*}\). We tune the learning rate \(\alpha\) of the SGD update from 1e-2 to 1e2, and select 1.0 for all cases. We tune the learning rate \(\alpha\) of the SPMD update from -8.0 to 5.0, and select -6.0 for all cases when the mean of the Gaussian distribution is -1.0, and select 3.0 for all cases when the mean of the Gaussian distribution is -10.0.

\end{document}